\documentclass{article}
\usepackage{arxiv}

\usepackage[utf8]{inputenc}
\usepackage[T1]{fontenc}

\usepackage[implicit=true,unicode=true,
 bookmarks=true,bookmarksnumbered=true,bookmarksopen=true]
 {hyperref}
\hypersetup{
  pdftitle={dp_synthetic_data},
  pdfauthor={my name},
  pdfsubject={my subject},
  pdfkeywords={my keywords},
  colorlinks=true,
  allcolors=blue,
  pdfpagelayout=OneColumn,
  pdfnewwindow=true,
}

\usepackage{url}            
\usepackage{booktabs}       
\usepackage{amsfonts}       
\usepackage{nicefrac}       
\usepackage{microtype}      
\usepackage{lipsum}
\usepackage{graphicx}
\usepackage[numbers]{natbib}

\usepackage{longtable}
\usepackage[T1]{fontenc}
\usepackage{kantlipsum}
\usepackage[usenames,dvipsnames,table]{xcolor}
\usepackage[breakable, theorems, skins]{tcolorbox}
\tcbset{enhanced}
\newtcolorbox{specialistbox}[1]{fonttitle=\bfseries,title=#1,colframe=gray!75!white}

\usepackage{paralist}

\usepackage{amsthm}
\newtheorem{remark}{Remark}
\newtheorem{definition}{Definition}

\newtheorem{proposition}{Proposition}

\usepackage{algorithm}
\usepackage[noend]{algpseudocode}

\def\HiLi{\leavevmode\rlap{\hbox to 0.9\hsize{\color{lightgray}\leaders\hrule height .8\baselineskip depth .5ex\hfill}}}

\def\HiLiLong{\leavevmode\rlap{\hbox to 0.94\hsize{\color{lightgray}\leaders\hrule height .8\baselineskip depth .5ex\hfill}}}

\usepackage{caption}
\usepackage{subcaption}

\usepackage[framemethod=tikz]{mdframed}
\usepackage{multicol}
\usepackage{multirow}
\usepackage{color}
\usepackage{graphicx}
\usepackage{booktabs}
\usepackage{amsmath} 
\usepackage{caption}
\usepackage{rotating}

\usepackage{setspace}
\usepackage{tablefootnote}
\usepackage{dsfont}

\usepackage{pifont}
\usepackage{array}
\usepackage{float}
\restylefloat{table}

\definecolor{gold}{rgb}{1.0, 0.84, 0.0}
\definecolor{silver}{rgb}{0.75, 0.75, 0.75}
\definecolor{bronze}{rgb}{0.8, 0.5, 0.2}

\usepackage[capitalize,noabbrev]{cleveref}

\newcommand{\methodlong}{Private Autoregressive Trajectory Histories}
\newcommand{\methodshort}{\texttt{PATH}}
\newcommand{\methodname}{\methodshort}

\title{Privately Fine-Tuned LLMs Preserve Temporal Dynamics in Tabular Data}

\author{
Lucas Rosenblatt \\
NYU\thanks{Work done at Google as part of student researcher engagement.} \\
New York, New York, USA \\
\texttt{lr2872@nyu.edu}
\And
Peihan Liu \\
Columbia University$^*$ \\
New York, New York, USA \\
\texttt{pl2926@columbia.edu}
\And
Ryan McKenna \\
Google Research \\
Seattle, WA, USA \\
\texttt{mckennar@google.com }
\And
Natalia Ponomareva \\
Google Research \\
New York, New York, USA \\
\texttt{nponomareva@google.com}
}

\begin{document}
\maketitle

\begin{abstract}
Research on differentially private synthetic tabular data has largely focused on independent and identically distributed rows where each record corresponds to a unique individual. This perspective neglects the temporal complexity in longitudinal datasets, such as electronic health records, where a user contributes an entire (sub) table of sequential events. While practitioners might attempt to model such data by flattening user histories into high-dimensional vectors for use with standard marginal-based mechanisms, we demonstrate that this strategy is insufficient. Flattening fails to preserve temporal coherence even when it maintains valid marginal distributions. We introduce \methodshort, a novel generative framework that treats the full table as the unit of synthesis and leverages the autoregressive capabilities of privately fine-tuned large language models. Extensive evaluations show that \methodshort~effectively captures long-range dependencies that traditional methods miss. Empirically, our method reduces the distributional distance to real trajectories by over 60\% and reduces state transition errors by nearly 50\% compared to leading marginal mechanisms while achieving similar marginal fidelity.
\end{abstract}

%\clearpage
%\tableofcontents
%\clearpage

\section{Introduction}
\label{sec:intro}

Structured, tabular data is a fundamentally human method of organizing information. It is at least as old as c. 2500 BCE, when Mesopotamian scribes detailed agricultural management and ration lists in tables split by horizontal and vertical lines \citep{robson2004accounting}. In the contemporary era, the ability to model such data has become a cornerstone of machine learning, with Transformer-based large language models (LLMs) demonstrating impressive fidelity in learning tabular distributions \citep{borisov2022language,fang2024large}.

\begin{figure}[b!]
    \centering
    \includegraphics[width=0.8\linewidth]{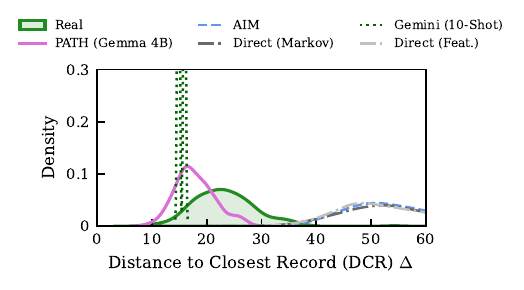}
    
    \caption{\textbf{Table-wise Distance to Closest Record (TDCR) Distribution Analysis ($\varepsilon=2.0$).} We visualize the distribution of distances from generated tables to their nearest neighbor in the real training set for MIMIC-IV Vitalsigns data using our Dynamic Time Warping-based metric. The green shaded area represents the ``Real (Held-out)'' baseline, showing the distances of real test tables to the training set; a high-fidelity synthetic method should produce a distribution that closely overlaps this baseline. Our proposed method, \methodname~(Gemma 4B), demonstrates superior overlap with the real data manifold compared to flattened marginal baselines (AIM, Direct) and non-private few-shot prompting (Gemini 2.5 Flash-Lite), which either mode-collapse (tight distribution on the left) or fail to capture the support (too far right).}
    \label{fig:tdcr_distributions}
    
\end{figure}

However, modern tabular data is frequently sensitive, containing personal medical histories, financial transactions, or private behavioral traces. To mitigate privacy risks, differential privacy (DP) \citep{dwork2006calibrating} has emerged as the gold standard for data release. A common approach to data release with DP is through the generation of \textit{differentially private synthetic data} (DPSD) that is statistically similar to the original data. High quality DPSD preserves the statistical utility of the original private data while providing formal privacy guarantees. While there are powerful tabular DPSD methods based on marginal measurements, e.g. AIM \citep{mckenna2024aimadaptiveiterativemechanism} or GEM \citep{liu2021iterative}, these approaches predominantly rely on the assumption that the data consists of independent, identically distributed (i.i.d.) rows, where a single individual corresponds to a single row, and a full table represents a collection of many individuals.

This assumption breaks down in the presence of \textit{longitudinal} or \textit{temporal} tabular data. Consider electronic health records (EHR), such as the MIMIC dataset \citep{johnson2020mimic}, where a patient's health data is not a single static row, but a number of rows representing a trajectory of vital signs, lab results, and hospital events recorded over time. In this (and other) longitudinal settings, the unit of ownership is not a row, but an entire \textit{table} (or sequence of events); i.e. each user owns a full table of data with a temporal component present between the rows in multiple columns, and the full dataset is a collection of such tables. 

Were we to apply traditional marginal-based DPSD methods to this data, one could simply concatenate all user tables into a single table, using group privacy and assuming maximum number of rows per user or sampling or aggregating each user's rows into a single row per user. This is, however, clearly undesirable if one wants to model temporal patterns in the data. Additionally, group-level privacy is too strong of a notion for this problem because it protects the privacy of every group of rows, even when the rows belong to different users. To preserve temporal trends, one would thus be forced to ``flatten'' all of the user trajectories into single, high-dimensional vectors that could be combined into one table. In \Cref{sec:theory} we define this flattening transformation formally; flattening can be catastrophic for utility, as we demonstrate later, in that it explodes the dimensionality of the domain and introduces artificial sparsity, making it difficult for marginal-based mechanisms to capture the complex, sequential dependencies inherent to the data.

To address this, we propose \methodlong~(\methodname). Instead of synthesizing one table as a collection of rows, we seek to synthesize a full set of tables, each of which has the temporal component preserved. Here, the dataset is not a matrix of values, but is a collection of sub-matrices $\mathbf{D} = (D^{(1)}, \ldots, D^{(n)})$, where each $D^{(i)}$ represents the full history of user $i$. While we assume throughout this work that all $D^{(i)}$ share a common schema, we note that our text-based framework could naturally accommodate a heterogeneous schema, where columns may vary between users. We leave a complete exploration of this alternative application to future work. We leverage the autoregressive capabilities of pre-trained LLMs (specifically, the Gemma 3 \citep{gemma2024} family of models) to learn the distribution of these user-level tables directly. By fine-tuning LLMs with differential privacy (DP-SGD) \citep{abadi2016deep}, we can generate high-fidelity synthetic collections of tables that preserve intricate temporal dynamics, such as a varied pattern of heart rates across patients.

\textbf{Our contributions.}~~
We formalize the synthesis of user-level tabular data and contribute the following:
\begin{compactitem}
    \item \textbf{Privacy unit as full table.} To the best of our knowledge, we are the first to consider a full table as the privacy unit for differentially private data synthesis and the first to identify a practical use case for this framing, specifically for preserving temporal trajectories where a user owns the entire table. This possibility was previously hinted at by \cite{synthesize} but lacked the application to temporal dynamics.
    \item \textbf{A novel framework for DP tabular data synthesis that preserves temporal trends.} Drawing inspiration from GReaT \cite{borisov2022language}, we introduce a methodology for fine-tuning LLMs to generate structured, multi-row tabular data. We propose a novel two-stage generation process that begins with an autoregressive row-generation technique conditioning on previous context to preserve temporal dependencies (e.g., state transitions in vital signs) followed by a post-processing private selection step.
    \item \textbf{Novel metrics for table families.} Evaluating generative quality for families of tables requires more than standard row-wise metrics. We introduce \textit{Table-wise Distance to Closest Record (TDCR)}, a metric adapted from \citet{borisov2022language} to assess the \textit{distributional} fidelity of generated user trajectories against real held-out data using Dynamic Time Warping (DTW). We additionally explore and report many other domain-specific time-series metrics (\Cref{sec:metrics}).
    \item \textbf{Extensive empirical evaluations.} We conduct extensive evaluations on a synthetic HMM dataset to control variable dependencies as well as on real-world MIMIC-IV vital signs \cite{johnson2020mimic} and NYC 311 service requests \cite{nyc311data}. We demonstrate that traditional marginal-based methods scale poorly to the high dimensionality of flattened longitudinal data and fail to preserve temporal trends. Empirically, our method reduces the TDCR by over 50\% compared to a state-of-the-art marginal baseline (AIM \cite{mckenna2024aimadaptiveiterativemechanism}), effectively capturing temporal dynamics that previous approaches miss.
\end{compactitem}

\section{Problem Formulation}
\label{sec:problem}

Let $\mathcal{U} = \{u_1, \dots, u_n\}$ be a set of $n$ users. Each user $u_i \in \mathcal{U}$ is associated with a single table $D^{(i)}$ (e.g. trajectory of timesteps, recorded as rows), with $d$ columns $\mathcal{A} = \{A_1, \dots, A_d\}$. Each attribute $A_j$ is associated with a domain $\mathcal{V}_j$. Thus, a user $u_i$'s table $D^{(i)}$ is an ordered sequence of $T_i$ records (rows), or,
\begin{align}
    D^{(i)} = (\mathbf{x}_1^{(i)}, \mathbf{x}_2^{(i)}, \dots, \mathbf{x}_{T_i}^{(i)})~,
\end{align}
where each record (row) $\mathbf{x}_t^{(i)} = (v_{t,1}^{(i)}, \dots, v_{t,d}^{(i)})$ is a vector in the product space $\mathcal{X} = \mathcal{V}_1 \times \dots \times \mathcal{V}_d$. Crucially, the sequence length $T_i$ is a random variable, and the sequence represents a temporal trajectory where the value of row $\mathbf{x}_t^{(i)}$ depends on the history of previous rows $\mathbf{x}_{<t}^{(i)} = (\mathbf{x}_1^{(i)}, \dots, \mathbf{x}_{t-1}^{(i)})$.

\textbf{User-Level Differential Privacy.}~~
We explicitly define the privacy unit as the full user table $D^{(i)}$, representing the complete history of a single user. Based on this unit, we adopt the add/remove definition of adjacency to ensure the entirety of an individual's trajectory is protected. Formally, two collections of user tables $\mathbf{D}, \mathbf{D}'$ are \textbf{neighboring} ($\mathbf{D} \simeq \mathbf{D}'$) if $\mathbf{D}'$ can be obtained by adding or removing exactly one user $u_i$ and their associated table $D^{(i)}$.

A mechanism $\mathcal{M}$ satisfies $(\varepsilon, \delta)$-DP if for all neighboring $\mathbf{D}, \mathbf{D}'$ and all measurable sets $S \subseteq \text{Range} (\mathcal{M})$ it holds that,

\begin{align}
    \mathbb{P}[\mathcal{M}(\mathbf{D}) \in S] \leq e^\varepsilon \mathbb{P}[\mathcal{M}(\mathbf{D}') \in S] + \delta~.
\end{align}

\textbf{Generative Goal.}
Our objective is to learn a model $f_\theta$ that approximates the density of the collection $\mathbf{D}$. During synthesis, the model must generate a synthetic collection $\mathbf{D}^* = \{D^{(1)*}, \dots, D^{(n)*}\}$ that matches the source distribution, ensuring both the joint distribution of columns (intra-row correlations) and the temporal dependencies (inter-row correlations) match the source distribution.

\section{Metrics}
\label{sec:metrics}
Evaluating the quality of synthetic longitudinal data requires assessing how realistic individual rows look (\textbf{marginal fidelity}), whether the \textit{trajectories} from $D^{(i)}$ maintain coherent temporal structures (\textbf{temporal integrity}), and if the collection $\mathbf{D}^*$ covers the support of the real distribution without memorizing it (\textbf{distributional variety}). 

To evaluate marginal fidelity, we examine the global distributions of individual columns by comparing their densities. To ensure temporal integrity, we measure the preservation of sequential dynamics through state transition matrices and Hidden Markov Model (HMM) likelihoods. Finally, to assess distributional variety and manifold coverage, we utilize embedding-based metrics such as MAUVE and our proposed Table-wise Distance to Closest Record (TDCR). See detailed formal definitions in Appendix \Cref{app:metrics}.

\textbf{Distributional Divergence (MAUVE).}~~
To measure the gap between the manifolds of the real collection $\mathbf{D}$ and the synthetic collection $\mathbf{D}^*$, we utilize MAUVE \citep{pillutla-etal:mauve:neurips2021}, which measures the information divergence between continuously embedded distributions (a score of 1.0 indicates perfect overlap). We map each variable-length table $D^{(i)}$ to a fixed-length vector $\mathbf{e}_i \in \mathbb{R}^k$ using Gecko embeddings \citep{lee2024gecko}, concatenating schema and mean row embeddings. Averaging row embeddings discards temporal sequencing information; thus, we utilize MAUVE to evaluate the overall distributional similarity of the tabular content rather than temporal fidelity (\Cref{app:mauve}).

\textbf{Table-wise Distance to Closest Record (TDCR).}~~
Standard Euclidean distance is ill-defined for comparing tables of differing lengths. To evaluate fidelity, we must assess the \textit{variety} of generated tables, ensuring the model covers the support of the real data rather than duplicating tables from a specific region. We introduce \textit{Table-wise Distance to Closest Record} (TDCR). This metric extends the standard \textit{Distance to Closest Record} (DCR) score \citep{borisov2022language} by utilizing Dynamic Time Warping (DTW) \cite{kruskal1983overview} to compute a robust distance $\Delta(D^{(a)}, D^{(b)})$ between two user tables $D^{(a)}$ and $D^{(b)}$. To compute DCR between two tables, we  sum DTW distances of their corresponding attributes $A_j$ and applying appropriate normalizations. This is crucial for longitudinal data, as it implicitly penalizes synthetic tables that fail to replicate the temporal shape and evolution of real trajectories, even if their aggregate marginals are correct. See Appendix \Cref{app:tdcr} for complete definition and \Cref{fig:dtw_heartrate_alignment} for visual intuition. 

Then, to evaluate the synthetic against real collections of tables, we compute the distance from every synthetic table $D^{(i)*} \in \mathbf{D}^*$ to its nearest neighbor in the real training set. We perform the same operation for a held-out real test set to establish a baseline distribution of ``real-to-real'' distances. Finally, we quantify the similarity between the synthetic and test distance distributions using the Jensen-Shannon Distance (JSD). We privilege JSD over transport-based metrics like Wasserstein for this specific comparison because it robustly compares the shape of the distributions, effectively penalizing mode collapse where synthetic data might cluster in a high-density region without capturing the full variance of the manifold; see \Cref{app:tdcr} for more details.

\textbf{Temporal Integrity (State Transitions, HMM).}~~
To assess the preservation of Markovian dynamics, we evaluate the \textit{state} transition likelihoods, where state here is a quantile of the distribution. This metric is computed per feature, measuring the univariate conditional probability of a feature value at row $j+1$ given its value at row $j$, and does not capture cross-feature dependencies (e.g., how heart rate at $t$ might affect blood pressure at $t+1$). We implement this by discretizing numerical features of all synthetic and real tables into quantiles to form transition matrices that estimate the probability of transitioning from one quantile to another. We distill the result into a single scalar by computing the Frobenius norm of the difference between the real and synthetic transition matrices for each feature, and report the average across all features (\Cref{app:transitions}).

For the case of ``real'' data that is completely synthetic, (where the ground truth data generation process is known (Section \ref{sec:experiments}), we can evaluate long-range coherence by applying the true Gaussian HMM parameters to the private generated data \cite{jurafskyspeech}. We compare log-likelihoods of held-out real trajectories to DP synthetic trajectories.

\textbf{Classifier Indistinguishability.}~~
We posit that a marker of high-quality synthetic tables is that they are indistinguishable from real tables. Following the ``classifier two-sample test'' paradigm \citep{lopez2016revisiting}, we train classifiers (Logistic Regression, Random Forest, XGBoost) to discriminate between real and synthetic table embeddings (labeled binary, 0/1, and where worse predictive performance indicates better synthetic data). We utilize the same mean-pooled Gecko embeddings described in the MAUVE section, which capture the global semantic content of the table. Details on the classifier metric are in \Cref{app:classifier}.

\textbf{Categorical/Geospatial Metrics.}~~
The NYC 311 dataset presents unique spatiotemporal and categorical challenges that require specialized evaluation beyond the previously discussed metric. For example, we assess temporal fidelity by extracting the ``Hour of Day'' from timestamps and computing the Wasserstein-1 distance between the real and synthetic diurnal distributions to ensure the daily rhythm of requests is preserved. To evaluate the geospatial features, we compare the joint distributions of latitude and longitude via hexbin density maps, allowing us to visually verify that the synthetic data respects the physical topology of the target distribution. Finally, to capture the semantic progression of user issues, we model complaint types as a Markov process. We construct a state transition matrix for the top-$k$ most frequent complaint categories,estimating the conditional probability $P(\text{type}_t \mid \text{type}_{t-1})$, and report the Frobenius norm of the difference between the real and synthetic matrices.

\section{``Flattening’’ Longitudinal Data}
\label{sec:theory}

To adapt traditional DP tabular synthesizers while attempting to retain the user-level trajectory structure, we apply a \textit{flattening transformation} $\Phi$ to map a table $D^{(i)}$ of variable length $T_i$ into a single fixed-width vector. Unfortunately, the distribution of trajectory lengths $T_i$ could vary across users, forcing the use of extensive padding.

\begin{definition}[Flattening Transformation]\label{def:flattening}
Let $\mathcal{V}'_j = \mathcal{V}_j \cup \{\texttt{NULL}\}$ be the augmented domain for a feature $j$. Let $\mathbf{x}_\bot = (\texttt{NULL}, \dots, \texttt{NULL})$ be a padding vector of dimension $d$, and let $L = \max_i T_i$ be the maximum sequence length across all users (or a truncation limit). The flattening transformation $\Phi: D^{(i)} \to \mathbf{y}_i$ maps a variable-length table to a fixed-size sequence in $\mathcal{Y} = (\mathcal{V}'_1 \times \dots \times \mathcal{V}'_d)^{L}$ by appending $L - T_i$ padding vectors, or,
\begin{align}
    \mathbf{y}_i = \big( \mathbf{x}_1^{(i)}, \dots, \mathbf{x}_{T_i}^{(i)}, \underbrace{\mathbf{x}_\bot, \dots, \mathbf{x}_\bot}_{L - T_i} \big)~.
\end{align}
\end{definition}

While applying $\Phi$ enables the use of standard row-based DPSD mechanisms (e.g., AIM), it introduces a critical utility bottleneck rooted in the privacy budget. State-of-the-art tabular data synthesis methods measure low-degree marginals, as these are the least expensive, highest impact measurements in terms of privacy budget \cite{liu2021iterative, mckenna2024aimadaptiveiterativemechanism, rosenblattdifferential}. In this high-dimensional flattened space, these mechanisms measure low-order, local marginals (e.g., correlations between adjacent time steps $\mathbf{x}_t$ and $\mathbf{x}_{t+1}$). They do not, by default, measure all global dependencies; there are exponentially many of these, and it would be prohibitively expensive from both a computational and privacy budget perspective. We show an example in Appendix~\ref{app:theory} (Proposition~\ref{prop:spurious}) of how, even if these methods preserve local step-wise plausibility, they could fail to maintain global consistency across the entire timeline. Consequently, if the true data contains distinct user ``types'' (e.g. healthy adults compared with adults requiring extensive medical visits) where the state at time $t$ depends on the initial state at time $1$, a locally-constrained mechanism would ``mix'' these trajectories, hallucinating spurious paths that are locally plausible but globally invalid. Additionally, we note that a trajectory that looks locally correct but is actually globally incoherent is a failure mode that is difficult to detect with standard marginal metrics, but is precisely what our TDCR metric is designed to penalize.

\begin{figure}[t!]
    \centering
    \includegraphics[width=0.5\linewidth]{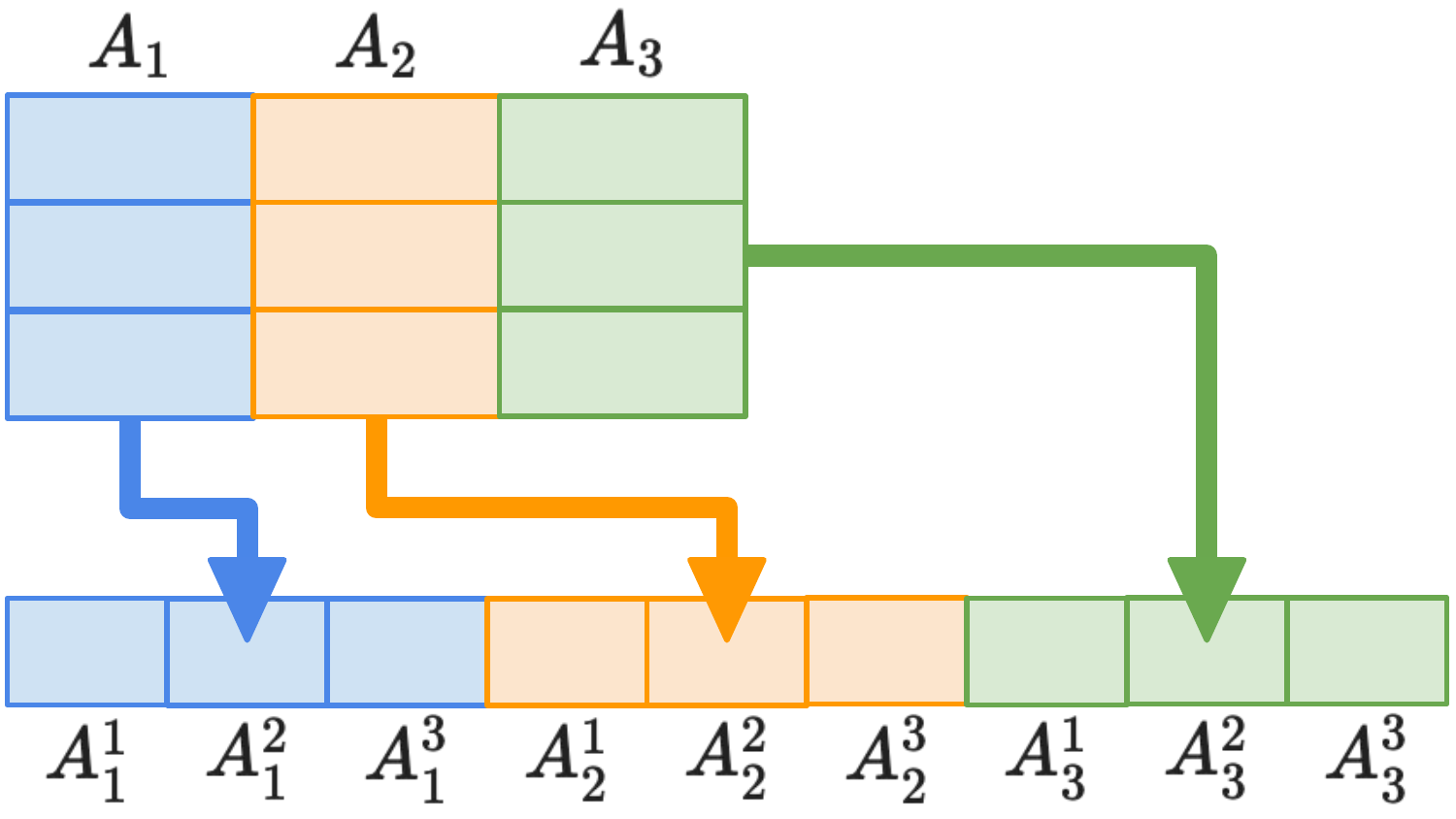}
    \caption{\textbf{The Flattening Transformation ($\Phi$).} Visualizing how a user's longitudinal history (left) is concatenated into a single high-dimensional vector (right). Each timestep becomes a distinct set of attributes (e.g., $A_1^1, A_1^2, \dots$), effectively multiplying the domain dimensionality by the sequence length $L$.}
    \label{fig:flattening}
\end{figure}

\section{Proposing the~\methodname~Generative Framework}
\label{sec:framework}
We present \methodname, a framework that captures the complex, time-dependent structure of user-level tabular data in an autoregressive manner. In contrast to flattening methods that attempt to model the joint distribution of a fixed-width vector, we treat the generation of a user's trajectory as a conditional sequence generation task.

\textbf{Serialization.}~~
Our deterministic serialization schema $\Psi: D^{(i)} \to \mathbf{s}_i$, maps a user's table to a token sequence. As shown in Figure~\ref{fig:serialization}, the sequence begins with a schema definition, followed by the row-by-row encoding. We include row-delimiting tokens (e.g., \texttt{[Row i]}) to encourage the model to distinguish between temporal steps.

The serialization format is designed to be invertible; any valid output string generated by the model can be parsed back into a structured table $D^{(i)*}$. We seek to encode the ``history'' of the user, implicitly capturing temporal dependencies via the language model's context window rather than explicitly modeling the state-space.

\begin{figure}[h]
    \centering
    \fbox{\begin{minipage}{0.5\textwidth}
        \small
        \texttt{Columns: charttime, heartrate, ...} \\
        \texttt{[Row 1]: charttime is 2180-07-22 16:36:00, heartrate is 83.0, ...} \\
        \texttt{[Row 2]: charttime is 2180-09-22 16:43:00, heartrate is 85.0, ...}
    \end{minipage}}
    \caption{The serialization format transforms longitudinal tables into a token sequence.  The model learns to predict the next row conditioned on the schema and the history of previous rows.}
    
    \label{fig:serialization}
\end{figure}

\textbf{Data Construction and Privacy Unit.}~~
Our framework strictly adheres to our \textit{privacy unit}; to ensure that the DP guarantee protects the \textit{entire} trajectory of a user rather than just isolated rows, we enforce that each user $u_i$ contributes exactly one example to the gradient update per epoch.

To ensure robust autoregressive generation capabilities at any stage of the timeline, we employ a dynamic windowing strategy. For each user $u_i$ at each epoch, we construct a single training example $(\mathbf{c}_i, \mathbf{t}_i)$ by sampling a split point $k \in [0, T_i-1]$. The model is provided with the context $\mathbf{c}_i = \Psi(\mathbf{x}_{1:k}^{(i)})$ and trained to generate the target continuation $\mathbf{t}_i = \Psi(\mathbf{x}_{k+1:T_i}^{(i)})$. We then resample $k$ independently at every epoch. This strategy allows the model to eventually learn from the full diversity of the user's history and transition dynamics across the training process while maintaining a bounded sensitivity of 1 per user per step. We also want to train the model to iteratively generate a table \textit{one row at a time}, which allows us to validate each row, and thus enforce coherence for the full generated table.

To ensure the model learns to generate trajectories at various stages of completion, we employ what we call a ``density-based'' sampling strategy. Rather than selecting the split point $k$ uniformly, we sample $k$ based on the empirical distribution of table lengths in the training set.\footnote{We assume this information to be public.} This prevents the model from being biased toward short contexts if the data contains a heavy tail of long trajectories. Specifically, with probability $p_{\text{start}}$, we select $k=0$ (generating the first row from scratch); otherwise, we sample $k$ from the pool of all valid historical indices observed in the training corpus.

\textbf{Architecture and DP Fine-tuning.}~~
We fine-tune models from the Gemma 3 family \citep{team2025gemma}, specifically the 1B and 4B parameter variants (using the pre-trained checkpoints). We utilize Low-Rank Adaptation (LoRA) \citep{hu2021loralowrankadaptationlarge} with a rank of $r=128$, which allows us to update a small fraction of parameters while freezing the pre-trained weights. This is particularly effective for DP training, as it reduces the dimensionality of the gradient updates that must be noised \cite{kurakin2024harnessinglargelanguagemodelsgenerate}. For more details on procedure and hyperparameters (which we fix for all datasets), see \Cref{app:hparams}.

\textbf{Inference, Parsing, and Private Selection}~~
At inference time, we generate synthetic tables autoregressively, conditioning the generation of row $\mathbf{x}_{t}$ on the schema and the history of previously generated rows $\mathbf{x}_{1}, \dots, \mathbf{x}_{t-1}$. Because DP fine-tuned models can yield malformed strings, we implement a robust parsing strategy to maximize data yield. For each generated step, we sequentially attempt to: (1) parse the output as structured key-value pairs (e.g., ``col is val''); (2) fall back to parsing as comma-separated values matching the schema order if the first method fails; and (3) apply partial infilling to repair missing static identifiers (e.g., \texttt{subject\_id}) by propagating values from the synthetic user's existing history. If a row fails all validation checks, such as containing non-numeric values in numeric columns or invalid dates, the generation for that specific table is terminated early. This is fine, as we are able to ``over-generate'' table examples without consuming additional privacy budget. Finally, to ensure the final dataset $\mathbf{D}^*$ has high utility, we employ a private selection strategy to post-process and filter a large, over-generated candidate pool \citep{synthesize}. See Appendix \Cref{app:hparams} for more details on the private selection procedure and parameters.

\subsection{Baseline Approaches}
\label{sec:baselines}

\textbf{Real Data Subsampling (Reference).}~~
To contextualize our metric scores, we evaluate a non-synthetic reference baseline consisting of real user tables uniformly subsampled from the training set (this is obviously not DP). We evaluate these subsamples at three distinct scales: 100, 1,000, and 10,000 unique tables. This comparison is important for interpreting distributional metrics like TDCR and MAUVE, as it establishes the ``natural'' variance and divergence one would expect simply due to finite sampling.

\textbf{Marginal-based Methods (on Flattened Data).}~~
As discussed in \Cref{sec:theory}, traditional DP tabular methods are designed to preserve relationships between columns of the data but do not generally preserve relationships between rows owned by the same user. To apply them to longitudinal data $\mathbf{D}$, we first apply the flattening transformation $\Phi$ (Section \ref{sec:theory}). Note that for the flattened baselines we enforce a strict fixed-width window to ensure consistent marginal measurements. We filter the cohort to users with at least $L$ timesteps (e.g., $L=10$ for the MIMIC dataset) and truncate all trajectories to exactly this length. This results in a ``wide'' schema where each user contributes a single row with $d \times L$ columns.  
Since marginal methods must discretize the domain, a single outlier can force an artificially wide bin that disproportionately assigns probability mass to implausible values. To correct for this, we heuristically clip all generated values to the [min, 99th percentile] range of the real private data. This is technically a violation of differential privacy (a DP version of this same approach would be strictly worse); we disregard this as it artificially strengthens the baselines, ensuring they are not penalized for susceptibility to outliers but rather evaluated on their core temporal modeling capacity.

\textbf{Direct Mechanism.}~~
We implemented a baseline following the \textit{Select-Measure-Estimate} paradigm \cite{mckenna2019graphicalmodelbasedestimationinference} with a fixed, heuristic selection strategy designed to capture temporal dynamics manually. To make this baseline exceptionally competitive (at the cost of using prior domain knowledge), we explicitly selected all 1-way marginals to capture feature distributions at each timestep and a specific set of 2-way marginals between adjacent time steps for every feature (e.g., measuring the joint distribution of $(\texttt{heartrate}_t, \texttt{heartrate}_{t+1})$). In total, for a sequence length of $L=10$ and $d=9$ attributes (as is the case for MIMIC), this results in 90 1-way marginals and 81 temporal 2-way marginals. We used the Gaussian mechanism to measure these marginals with a uniform budget allocation and leveraged the Private-PGM package \citep{mckenna2019graphicalmodelbasedestimationinference} to estimate the data distribution. This baseline effectively imposes a Markovian assumption, where the value for a feature at time $t$ depends explicitly on its value at time $t-1$.

We consider a second variant of the Direct Mechanism (referred to as ``Direct (Across)'') where, in addition to the temporal correlations, we attempt to measure the relationship between different features at the same time step (e.g., $(\texttt{heartrate}_t, \texttt{resprate}_t)$). Since measuring all $\binom{d}{2}$ attribute pairs across all $L$ time steps creates a measurement set too large for tractable estimation (causing GPU memory exhaustion during the PGM step), we randomly subselect up to 80 of these intra-step marginals to include in the mechanism. This variant tests whether its more important to capture instantaneous correlations over Markovian steps.

\textbf{AIM.}~~
We further evaluated the \textbf{AIM} algorithm \cite{mckenna2024aimadaptiveiterativemechanism}, a leading workload-adaptive method. Unlike the Direct Mechanism, AIM automatically selects marginals using a greedy selection procedure that accounts for data, budget, and computational constraints \cite{mckenna2024aimadaptiveiterativemechanism, synthesize}. We provided AIM with the flattened dataset and allowed it to iteratively select the most informative marginals until the privacy budget was exhausted. This tests whether an automated selection metric can identify better temporal correlations than our manual heuristic in the high-dimensional flattened space, but we did not expect it to outperform the Direct mechanism, as Direct encodes knowledge about the temporal structure which AIM presumably needs to use some privacy budget to learn. 

\textbf{Foundation Model Prompting (Non-Private).}~~To establish a utility upper bound for LLM-based generation, we evaluated the capabilities of Gemini 2.5 Flash-Lite (FL) without private fine-tuning \cite{comanici2025gemini}. We utilized In-Context Learning (ICL) with varying numbers of demonstration examples ($k$). We used (1) Zero-shot ($k=0$) where the model is provided only with the schema and a natural language instruction to ``generate a realistic tabular dataset for a patient,'' (2) Few-shot ($k \in \{1, 5, 10\}$), where we construct a prompt containing $k$ real user tables randomly sampled from the training set, formatted using the same serialization scheme described in \Cref{sec:framework}. The model is then tasked with generating a new, unique user table. See Appendix \Cref{app:llm} for prompt text/structure.

\section{Experimental Setup}\label{sec:experiments}

\textbf{Data.}~~
We evaluate our framework on three distinct datasets designed to test temporal coherence, generalization to unseen time periods, and ground-truth likelihoods. Details are provided in Appendix \ref{app:data}.

\textbf{Vital signs subset of the MIMIC-IV dataset.}~~ The privacy unit is a single patient (subject\_id). To focus on the most common temporal trajectories, we filter the cohort to include only patients with sequence lengths i.e. charttimes of $4 \leq T \le 50$, resulting in a final dataset of 102,864 unique patient trajectories \cite{johnson2020mimic}. 

\textbf{NYC 311 calls.}~~
To assess spatiotemporal fidelity and generalization, we use this data from October 1st, 2024, to August 1st, 2025 \cite{nyc311data}. This time range post-dates the training cutoff of the pre-trained Gemma 3 models, ensuring performance reflects generalization rather than training data memorization. The privacy unit is a unique property (Borough-Block-Building (BBL)). This dataset contains 342,959 longitudinal service histories. 

\textbf{Synthetically generated data (HMM).}~~ 
As a controlled check for our methods and to measure the preservation of latent temporal states against a known ground truth, we generated a synthetic dataset using an HMM with multivariate Gaussian emissions \cite{durbin1998biological} (see Appendix \Cref{app:data}). Unlike real data, where the true generative process is unknown, this allows us to rigorously evaluate the model's ability to recover the exact transition probabilities and emission distributions governing the data.

\section{Experimental Results}
\label{sec:results}

\begin{figure}[b!]
    \centering
    \includegraphics[width=0.8\linewidth]{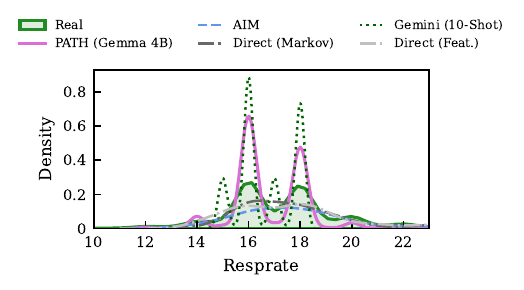}
    \caption{\textbf{Univariate Density Analysis (Respiratory Rate, MIMIC-IV).} We compare the marginal density of respiratory rate distributions across methods at $\varepsilon=2.0$. While marginal-based baselines (AIM, Direct) achieve reasonable Wasserstein scores by broadly covering the support, they suffer from quantization smoothing and noise artifacts that obscure the data's natural shape. In contrast, \methodname~closely tracks the sharp modal peaks and specific skew of the real distribution (shaded green), demonstrating that \methodname~can preserve fine-grained univariate trends.}
    \label{fig:univariate_resprate}
\end{figure}

\textbf{The Utility Limits of Flattening.}~~
Our results on the MIMIC-IV dataset, summarized in \Cref{tab:abbreviated_mimic}, empirically demonstrate the issues with flattening longitudinal data. When trajectories are concatenated into high-dimensional vectors, marginal-based methods struggle: at $\varepsilon=2.0$, AIM achieves a TDCR of $0.736$ compared to e.g. $0.289$ for \methodname~(Gemma 4B). This gap indicates that AIM produces trajectories outside the true data manifold. Furthermore, flattening can dilute the privacy budget across the massive padded schema ($d \times L$), degrading univariate utility. For example, AIM has a univariate marginal divergence of $9.71$ (\methodname~has $3.30$) -- a failure mode visualized in \Cref{fig:univariate_resprate}. 

\textbf{Temporal Fidelity vs. Marginal Accuracy.}~~
The Synthetic dataset, constructed via a ground-truth HMM, reveals a trade-off between column-wise independence and sequential logic. As shown in \Cref{tab:summary_synthetic}, though AIM achieves a superior univariate marginal divergence score ($0.28$ at $\varepsilon=10.0$) compared to \methodname~(Gemma 4B) ($0.83$), this marginal precision does not translate to preserved temporal dynamics. On the HMM Likelihood metric, which measures the probability of the generated sequences under the true latent process, \methodname~(both Gemma 1B and Gemma 4B) significantly outperforms AIM ($50.99$ vs. $319.43$). While marginal methods accurately reproduce aggregate counts, they fail to capture the conditional probability $P(\mathbf{x}_t \mid \mathbf{x}_{<t})$. Conversely, \methodname~successfully learns the state transitions; this is qualitatively evident in the likelihood distributions shown in \Cref{fig:hmm_likelihood} and quantitatively corroborated by the state transition divergence across all datasets (Tables~\ref{tab:abbreviated_mimic}-\ref{tab:abbreviated_nyc311}), where Gemma 4B consistently yields lower Frobenius norms (e.g. $0.38$ on MIMIC) compared to e.g. AIM ($0.62$).

\textbf{Distributional Manifolds and Generalization.}~~
Beyond temporal metrics, we assess the overall distributional quality using MAUVE. On the complex, real-world datasets (MIMIC-IV and NYC 311), \methodname~(Gemma 4B) model consistently achieves the highest MAUVE scores among private methods. Specifically on MIMIC, \methodname~(Gemma 4B) ($\varepsilon=10.0$) reaches a MAUVE score of $0.651$, surpassing the Direct Mechanism ($0.596$). We note that while marginal baselines outperform Gemma on the simpler Synthetic dataset (\Cref{tab:abbreviated_synthetic}), this is likely due to the fact that the Synthetic data is very well behaved and thus admits the quantization necessary for the PGM based-baselines. We note that the non-private, few-shot in-context learning (on Gemini 2.5 FL) often underperformed our private fine-tuning; for instance, achieving a MAUVE score of only $0.269$ on MIMIC. While few-shot prompting can copy local structure, it struggles to generalize the full distribution from a limited context window ($k=10$). \methodname~appears to internalize the distributional manifold over tables into the model weights, enabling the generation of a diverse table collection that better covers the real support.

\begin{figure}[t!]
    \centering
    \includegraphics[width=0.7\linewidth]{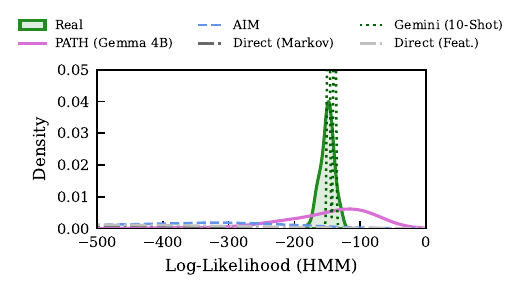}
    \caption{\textbf{HMM Log-Likelihood Distribution Analysis (Synthetic Dataset).} We evaluate long-range temporal coherence by scoring generated trajectories against the ground-truth HMM. The real data (shaded green) gives the variance in likelihood scores from our random process. The marginal-based baselines (AIM, Direct) produce a long tail of low-likelihood sequences (shifting left), indicating the generation of temporally incoherent trajectories that violate the underlying state transition logic. While non-private Gemini 2.5 FL (10-shot) produces a tight band of highly likely sequences, it fails to capture the full variance of the distribution (indicative of mode collapse). In contrast, \methodname~more successfully models the stochastic nature of the true process, producing a likelihood distribution closer to the support of the HMM data.}
    \label{fig:hmm_likelihood}
\end{figure}

\textbf{Model Scaling and Privacy Robustness.}~~
Comparing the 1B and 4B parameter variants of Gemma for \methodname~demonstrates the importance of model scale, particularly at lower privacy budgets. While both models perform comparably at $\varepsilon=10.0$, the 4B model exhibits significantly higher robustness as $\varepsilon$ decreases. On MIMIC at $\varepsilon=0.5$, the Gemma 1B model suffers a notable degradation in fidelity, with MAUVE scores dropping to $0.531$ compared to $0.627$ for the 4B model, and marginal divergence worsening from $3.00$ (4B) to $3.93$ (1B). We hypothesize that the larger model's stronger pre-trained semantic priors provide a more stable initialization, allowing it to retain structural coherence even when the DP-SGD updates are dominated by noise. This trend holds for the NYC 311 dataset as well, where Gemma 4B consistently outperforms the 1B variant. Larger models are likely better equipped to handle complex schemas involving geospatial and high-dimensional categorical data.

\begin{table}[t!]
\centering
\caption{Performance comparison on the MIMIC dataset ($\varepsilon \in \{2.0, 10.0\}$), evaluating privately fine-tuned LLMs against flattened marginal baselines. We include non-private Gemini 2.5 (10-shot or 10s) and Real Data subsamples as reference (upper bound) baselines. The results demonstrate that our autoregressive approach significantly outperforms marginal methods on temporal fidelity (State Trans.) and manifold coverage (MAUVE, TDCR) while also achieving superior marginal divergence. Notably, private fine-tuning achieves better distributional alignment than 10-shot prompting of the larger, non-private Gemini 2.5 Flash-Lite model. We follow the Olympic convention: gold represents the best metric values, silver the second best, and bronze third.}
\label{tab:abbreviated_mimic}
% \resizebox{0.6\linewidth}{!}{%
\begin{tabular}{@{}l l c c c c@{}}
\toprule
\multirow{2}{*}{\textbf{$\varepsilon$}} &
\textbf{Method} & \textbf{MAUVE} & \textbf{TDCR} & \textbf{Marginal Div.} & \textbf{State Trans.} \\
& & ($\uparrow$) & (JSD $\downarrow$) & (Avg Wass. $\downarrow$) & (Avg Frob. $\downarrow$) \\
\midrule
\multirow{5}{*}{2} &\textsc{AIM} & 0.564 & \cellcolor{bronze!30}0.7359 & 9.7080 & 0.7412 \\
& \textsc{Direct} (Acr.) & 0.493 & 0.7981 & \cellcolor{bronze!30}8.5031 & 0.6910 \\
& \textsc{Direct} (Mark.) & \cellcolor{bronze!30}0.573 & 0.7960 & 8.5788 & \cellcolor{bronze!30}0.5640 \\
& \methodname~(1B) & \cellcolor{silver!30}0.596 & \cellcolor{silver!30}0.4315 & \cellcolor{gold!30}2.9174 & \cellcolor{gold!30}0.3633 \\
& \methodname~(4B) & \cellcolor{gold!30}0.661 & \cellcolor{gold!30}0.2886 & \cellcolor{silver!30}3.2980 & \cellcolor{silver!30}0.3867 \\
\midrule
\multirow{5}{*}{10} &\textsc{AIM} & 0.531 & \cellcolor{bronze!30}0.7349 & \cellcolor{bronze!30}5.7400 & 0.6243 \\
& \textsc{Direct} (Acr.) & 0.497 & 0.7665 & 8.3101 & 0.7076 \\
& \textsc{Direct} (Indep) & \cellcolor{bronze!30}0.596 & 0.7791 & 8.4024 & \cellcolor{bronze!30}0.5497 \\
& \methodname~(1B) & \cellcolor{silver!30}0.552 & \cellcolor{gold!30}0.3001 & \cellcolor{gold!30}2.6486 & \cellcolor{silver!30}0.4454 \\
& \methodname~(4B) & \cellcolor{gold!30}0.651 & \cellcolor{silver!30}0.3784 & \cellcolor{silver!30}3.4125 & \cellcolor{gold!30}0.3747 \\
\midrule
\multirow{2}{*}{$\infty$} &
Gemini 2.5 FL (10s) & 0.269 & 0.7894 & 5.9262 & 1.2283 \\
& \textsc{Real} (10k)& 0.848 & 0.2322 & 0.3345 & 0.0161 \\
\bottomrule
\end{tabular}
% }
\end{table}

\begin{table}[t!]
\centering
\caption{Performance comparison on the HMM dataset ($\varepsilon \in \{2.0, 10.0\}$). Our \methodname~(Gemma 4B) model achieves superior performance on temporal coherence metrics (HMM Likelihood, State Trans.) and distributional variety (TDCR).}
\label{tab:abbreviated_synthetic}
% \resizebox{0.6\linewidth}{!}{%
\begin{tabular}{@{}l l c c c c c@{}}
\toprule
\multirow{2}{*}{\textbf{$\varepsilon$}} &
\textbf{Method} & \textbf{MAUVE} & \textbf{HMM Likelihood} & \textbf{TDCR} & \textbf{Marginal Div.} & \textbf{State Trans.} \\
& & ($\uparrow$) & (Wass. $\downarrow$) & (JSD $\downarrow$) & (Avg Wass. $\downarrow$) & (Avg Frob. $\downarrow$) \\
\midrule
\multirow{5}{*}{2} &\textsc{AIM} & \cellcolor{bronze!30}0.655 & \cellcolor{bronze!30}374.47 & \cellcolor{silver!30}0.6352 & \cellcolor{gold!30}0.4260 & \cellcolor{silver!30}0.5204 \\
& \textsc{Direct} (Acr.) & \cellcolor{gold!30}0.763 & 696.55 & \cellcolor{bronze!30}0.7384 & \cellcolor{bronze!30}1.5437 & 0.6014 \\
& \textsc{Direct} (Indep) & \cellcolor{silver!30}0.701 & 754.96 & 0.8059 & 1.8809 & \cellcolor{bronze!30}0.5953 \\
& \methodname~(1B) & 0.307 & \cellcolor{silver!30}334.46 & 0.8073 & 3.0628 & 1.0329 \\
& \methodname~(4B) & 0.612 & \cellcolor{gold!30}57.97 & \cellcolor{gold!30}0.4150 & \cellcolor{silver!30}0.9494 & \cellcolor{gold!30}0.4363 \\
\midrule
\multirow{5}{*}{10} &\textsc{AIM} & \cellcolor{silver!30}0.705 & \cellcolor{bronze!30}319.43 & \cellcolor{silver!30}0.5367 & \cellcolor{gold!30}0.2757 & \cellcolor{bronze!30}0.4791 \\
& \textsc{Direct} (Acr.) & \cellcolor{bronze!30}0.704 & 409.11 & 0.6560 & \cellcolor{silver!30}0.6496 & 0.5331 \\
& \textsc{Direct} (Indep) & \cellcolor{gold!30}0.710 & 653.24 & 0.6748 & 0.9022 & \cellcolor{silver!30}0.4558 \\
& \methodname~(1B) & 0.359 & \cellcolor{silver!30}184.90 & \cellcolor{bronze!30}0.5402 & 2.5169 & 0.7138 \\
& \methodname~(4B) & 0.690 & \cellcolor{gold!30}50.99 & \cellcolor{gold!30}0.3966 & \cellcolor{bronze!30}0.8250 & \cellcolor{gold!30}0.4217 \\
\midrule
\multirow{2}{*}{$\infty$} &
Gemini 2.5 FL (10s) & 0.268 & 9.02 & 0.5564 & 0.5596 & 0.6595 \\
& \textsc{Real} (10k)& 0.856 & 0.33 & 0.1778 & 0.0325 & 0.0096 \\
\bottomrule
\end{tabular}
% }
\end{table}

\begin{table}[t!]
\centering
\caption{Performance comparison on the NYC 311 dataset ($\varepsilon \in \{2.0, 10.0\}$). The \methodname~(Gemma 4B) model has high distributional fidelity (MAUVE) and temporal accuracy (Temporal Dist.) compared to the 1B model and the non-private baseline. Note that marginal baselines are excluded here as they failed to scale to the spatiotemporal schema of this dataset.}
\label{tab:abbreviated_nyc311}
% \resizebox{0.6\linewidth}{!}{%
\begin{tabular}{@{}l l c c c c@{}}
\toprule
\multirow{2}{*}{\textbf{$\varepsilon$}} &
\textbf{Method} & \textbf{MAUVE} & \textbf{Classifier AUC} & \textbf{Temporal Dist.} & \textbf{Transition Div.} \\
& & (Gecko $\uparrow$) & (Ideal 0.5) & (Wass. $\downarrow$) & (Frobenius $\downarrow$) \\
\midrule
\multirow{2}{*}{2}
& \methodname~(1B) & \cellcolor{silver!30}0.311 & \cellcolor{silver!30}0.955 & \cellcolor{silver!30}1.400 & \cellcolor{gold!30}0.605 \\
& \methodname~(4B) & \cellcolor{gold!30}0.507 & \cellcolor{gold!30}0.948 & \cellcolor{gold!30}0.806 & \cellcolor{silver!30}0.975 \\
\midrule
\multirow{2}{*}{10}
& \methodname~(1B) & \cellcolor{silver!30}0.464 & \cellcolor{silver!30}0.971 & \cellcolor{gold!30}0.870 & \cellcolor{gold!30}0.279 \\
& \methodname~(4B) & \cellcolor{gold!30}0.634 & \cellcolor{gold!30}0.948 & \cellcolor{gold!30}0.870 & \cellcolor{silver!30}0.405 \\
\midrule
\multirow{2}{*}{$\infty$} &
Gemini 2.5 FL (10s) & 0.074 & 1.000 & 2.537 & - \\
& \textsc{Real} (10k) & 0.876 & 0.470 & 0.312 & 0.165 \\
\bottomrule
\end{tabular}
% }
\end{table}

\clearpage
\section{Related Work}
\label{sec:rel_work}
See \Cref{app:rel_work} for an extended discussion of related work.

\textbf{Marginal-based Mechanisms.}
Mechanisms based on low-order marginals, such as Private-PGM \citep{mckenna2019graphicalmodelbasedestimationinference} based methods like AIM \citep{mckenna2024aimadaptiveiterativemechanism}, currently represent the state-of-the-art for i.i.d. tabular synthesis \citep{synthesize, rosenblatt2024epistemic, Chen_2025}. However, as we discussed, applying these mechanisms to longitudinal data requires flattening and extensive preprocessing (e.g., discretization), which can discard important features of the raw data.

\textbf{Transformer-based Synthesis.}
Following GReaT \citep{borisov2022language}, recent works serialize tabular data to fine-tune LLMs. To prevent the loss of structural coherence under DP-SGD~\citep{yudifferentially}, current methods often require complex two-stage training (learning syntax on public data, then distribution on private data) \citep{afonja2025dp2stageadaptinglanguagemodels, tran2024differentiallyprivatetabulardata}. While non-private methods like TabPFGen \citep{ma2024tabpfgentabulardata} leverage strong tabular foundation models like TabPFN \citep{hollmann2023tabpfntransformersolvessmall} for generation, they lack DP adaptations and are potentially ill-suited for modeling long-range temporal dependencies.

\textbf{Longitudinal DPSD.}
Research on synthetic DP time-series data is limited. GAN-based approaches like DoppelGANger \citep{Lin_2020} and NetShare \citep{10.1145/3544216.3544251} often suffer significant utility loss under DP-SGD. Marginal adaptations like NetDPSyn \citep{sun2024netdpsynsynthesizingnetworktraces} use PrivSyn \citep{zhang2020privsyndifferentiallyprivatedata} and focus on specific features (e.g., inter-arrival times); they assume event-level privacy rather than protecting full user trajectories, and thus target a different objective.

\textbf{Comparison of \methodname~with the above.}
We operationalize the multi-table privacy unit suggested by \citet{synthesize}, and are the first to apply autoregressive LLMs. Unlike marginal methods, LLMs capture high-order dependencies \citep{castellon2023dptbarttransformerbasedautoregressivemodel} essential for longitudinal logic. In contrast to prior Transformer approaches to DPSD, \methodname~targets a different privacy objective (protecting user owned tables instead of rows) and achieves high fidelity with a single-stage training process and no preprocessing.

\section{Conclusion}
\label{sec:conclusion}

Considering our findings across medical, civic, and synthetic settings, we arrive at four primary conclusions. \textbf{First}, the ``unit of synthesis'' matters; treating a user's trajectory as a single autoregressive sequence is a more efficient representation than flattening user histories. \textbf{Second}, there is a distinct dichotomy in utility; while marginal-based methods capture aggregate statistics well, LLMs dominate at capturing temporal dynamics and state transitions while also preserving these statistics, offering a ``best of both worlds'' solution. \textbf{Third}, private fine-tuning is more effective than in-context learning for distribution matching; few-shot prompting often leads to mode collapse, whereas fine-tuning captures population variety. \textbf{Fourth}, larger is better; Gemma 3 4B was better than its 1B variant at maintaining high temporal fidelity even under strict privacy guarantees.

In this work, we introduced \methodlong~(\methodname), a generative framework that redefines the unit of differential privacy from the individual row to the entire user table. \methodname~leverages the autoregressive capabilities of privately fine-tuned LLMs, and addresses the fundamental limitations of flattening longitudinal data. Our extensive empirical analysis on the MIMIC-IV, NYC 311, and synthetic HMM datasets demonstrates that \methodname~yields superior fidelity across a suite of metrics when compared to state-of-the-art marginal-based baselines and prompting foundation models. These results establish that LLMs can serve as effective, privacy-preserving synthesizers in sequential tabular domains, enabling robust data sharing of sensitive temporal data.

\bibliographystyle{apalike}
\bibliography{example_paper.bib}

@article{robson2004accounting,
  title={Accounting for change: the development of tabular book-keeping in early Mesopotamia},
  author={Robson, Eleanor},
  journal={Creating economic order: Record-keeping, standardization, and the development of accounting in the Ancient Near East},
  pages={107--144},
  year={2004}
}

@article{lee2024gecko,
  title={Gecko: Versatile text embeddings distilled from large language models},
  author={Lee, Jinhyuk and Dai, Zhuyun and Ren, Xiaoqi and Chen, Blair and Cer, Daniel and Cole, Jeremy R and Hui, Kai and Boratko, Michael and Kapadia, Rajvi and Ding, Wen and others},
  journal={arXiv preprint arXiv:2403.20327},
  year={2024}
}

@article{liu2021iterative,
  title={Iterative methods for private synthetic data: Unifying framework and new methods},
  author={Liu, Terrance and Vietri, Giuseppe and Wu, Steven Z},
  journal={Advances in Neural Information Processing Systems},
  volume={34},
  pages={690--702},
  year={2021}
}

@inproceedings{pillutla-etal:mauve:neurips2021,
  title={MAUVE: Measuring the Gap Between Neural Text and Human Text using Divergence Frontiers},
  author={Pillutla, Krishna and Swayamdipta, Swabha and Zellers, Rowan and Thickstun, John and Welleck, Sean and Choi, Yejin and Harchaoui, Zaid},
  booktitle = {NeurIPS},
  year      = {2021}
}

@inproceedings{liu-etal:mauve-theory:neurips2021,
  title={{Divergence Frontiers for Generative Models: Sample Complexity, Quantization Effects, and Frontier Integrals}},
  author={Liu, Lang and Pillutla, Krishna and Welleck, Sean and Oh, Sewoong and Choi, Yejin and Harchaoui, Zaid},
  booktitle={NeurIPS},
  year={2021}
}

@article{borisov2022language,
  title={Language models are realistic tabular data generators},
  author={Borisov, Vadim and Se{\ss}ler, Kathrin and Leemann, Tobias and Pawelczyk, Martin and Kasneci, Gjergji},
  journal={arXiv preprint arXiv:2210.06280},
  year={2022}
}

@article{fang2024large,
  title={Large Language Models (LLMs) on Tabular Data: Prediction, Generation, and Understanding--A Survey},
  author={Fang, Xi and Xu, Weijie and Tan, Fiona Anting and Zhang, Jiani and Hu, Ziqing and Qi, Yanjun and Nickleach, Scott and Socolinsky, Diego and Sengamedu, Srinivasan and Faloutsos, Christos},
  journal={arXiv preprint arXiv:2402.17944},
  year={2024}
}

@article{johnson2020mimic,
  title={Mimic-iv},
  author={Johnson, Alistair and Bulgarelli, Lucas and Pollard, Tom and Horng, Steven and Celi, Leo Anthony and Mark, Roger},
  journal={PhysioNet. Available online at: https://physionet. org/content/mimiciv/1.0/(accessed August 23, 2021)},
  pages={49--55},
  year={2020}
}

@misc{synthesize,
      title={How to DP-fy Your Data: A Practical Guide to Generating Synthetic Data With Differential Privacy}, 
      author={Natalia Ponomareva and Zheng Xu and H. Brendan McMahan and Peter Kairouz and Lucas Rosenblatt and Vincent Cohen-Addad and Cristóbal Guzmán and Ryan McKenna and Galen Andrew and Alex Bie and Da Yu and Alex Kurakin and Morteza Zadimoghaddam and Sergei Vassilvitskii and Andreas Terzis},
      year={2025},
      eprint={2512.03238},
      archivePrefix={arXiv},
      primaryClass={cs.CR},
      url={https://arxiv.org/abs/2512.03238}, 
}

@misc{kurakin2024harnessinglargelanguagemodelsgenerate,
      title={Harnessing large-language models to generate private synthetic text}, 
      author={Alexey Kurakin and Natalia Ponomareva and Umar Syed and Liam MacDermed and Andreas Terzis},
      year={2024},
      eprint={2306.01684},
      archivePrefix={arXiv},
      primaryClass={cs.LG},
      url={https://arxiv.org/abs/2306.01684}, 
}

@misc{mckenna2019graphicalmodelbasedestimationinference,
      title={Graphical-model based estimation and inference for differential privacy}, 
      author={Ryan McKenna and Daniel Sheldon and Gerome Miklau},
      year={2019},
      eprint={1901.09136},
      archivePrefix={arXiv},
      primaryClass={cs.LG},
      url={https://arxiv.org/abs/1901.09136}, 
}

@misc{mckenna2024aimadaptiveiterativemechanism,
      title={AIM: An Adaptive and Iterative Mechanism for Differentially Private Synthetic Data}, 
      author={Ryan McKenna and Brett Mullins and Daniel Sheldon and Gerome Miklau},
      year={2024},
      eprint={2201.12677},
      archivePrefix={arXiv},
      primaryClass={cs.DB},
      url={https://arxiv.org/abs/2201.12677}, 
}

@article{Chen_2025,
   title={Benchmarking Differentially Private Tabular Data Synthesis: [Experiments, Analysis]},
   volume={3},
   ISSN={2836-6573},
   url={http://dx.doi.org/10.1145/3769764},
   DOI={10.1145/3769764},
   number={6},
   journal={Proceedings of the ACM on Management of Data},
   publisher={Association for Computing Machinery (ACM)},
   author={Chen, Kai and Li, Xiaochen and Gong, Chen and Mckenna, Ryan and Wang, Tianhao},
   year={2025},
   month=dec, pages={1–25} }

@misc{afonja2025dp2stageadaptinglanguagemodels,
      title={DP-2Stage: Adapting Language Models as Differentially Private Tabular Data Generators}, 
      author={Tejumade Afonja and Hui-Po Wang and Raouf Kerkouche and Mario Fritz},
      year={2025},
      eprint={2412.02467},
      archivePrefix={arXiv},
      primaryClass={cs.LG},
      url={https://arxiv.org/abs/2412.02467}, 
}

@misc{tran2024differentiallyprivatetabulardata,
      title={Differentially Private Tabular Data Synthesis using Large Language Models}, 
      author={Toan V. Tran and Li Xiong},
      year={2024},
      eprint={2406.01457},
      archivePrefix={arXiv},
      primaryClass={cs.LG},
      url={https://arxiv.org/abs/2406.01457}, 
}

@misc{hollmann2023tabpfntransformersolvessmall,
      title={TabPFN: A Transformer That Solves Small Tabular Classification Problems in a Second}, 
      author={Noah Hollmann and Samuel Müller and Katharina Eggensperger and Frank Hutter},
      year={2023},
      eprint={2207.01848},
      archivePrefix={arXiv},
      primaryClass={cs.LG},
      url={https://arxiv.org/abs/2207.01848}, 
}

@misc{ma2024tabpfgentabulardata,
      title={TabPFGen -- Tabular Data Generation with TabPFN}, 
      author={Junwei Ma and Apoorv Dankar and George Stein and Guangwei Yu and Anthony Caterini},
      year={2024},
      eprint={2406.05216},
      archivePrefix={arXiv},
      primaryClass={cs.LG},
      url={https://arxiv.org/abs/2406.05216}, 
}

@misc{castellon2023dptbarttransformerbasedautoregressivemodel,
      title={DP-TBART: A Transformer-based Autoregressive Model for Differentially Private Tabular Data Generation}, 
      author={Rodrigo Castellon and Achintya Gopal and Brian Bloniarz and David Rosenberg},
      year={2023},
      eprint={2307.10430},
      archivePrefix={arXiv},
      primaryClass={cs.LG},
      url={https://arxiv.org/abs/2307.10430}, 
}

@misc{gemma2024,
      title={Gemma: Open Models Based on Gemini Research and Technology}, 
      author={Gemma Team and Thomas Mesnard and Cassidy Hardin and Robert Dadashi and Surya Bhupatiraju and Shreya Pathak and Laurent Sifre and Morgane Rivière and Mihir Sanjay Kale and Juliette Love and Pouya Tafti and Léonard Hussenot and Pier Giuseppe Sessa and Aakanksha Chowdhery and Adam Roberts and Aditya Barua and Alex Botev and Alex Castro-Ros and Ambrose Slone and Amélie Héliou and Andrea Tacchetti and Anna Bulanova and Antonia Paterson and Beth Tsai and Bobak Shahriari and Charline Le Lan and Christopher A. Choquette-Choo and Clément Crepy and Daniel Cer and Daphne Ippolito and David Reid and Elena Buchatskaya and Eric Ni and Eric Noland and Geng Yan and George Tucker and George-Christian Muraru and Grigory Rozhdestvenskiy and Henryk Michalewski and Ian Tenney and Ivan Grishchenko and Jacob Austin and James Keeling and Jane Labanowski and Jean-Baptiste Lespiau and Jeff Stanway and Jenny Brennan and Jeremy Chen and Johan Ferret and Justin Chiu and Justin Mao-Jones and Katherine Lee and Kathy Yu and Katie Millican and Lars Lowe Sjoesund and Lisa Lee and Lucas Dixon and Machel Reid and Maciej Mikuła and Mateo Wirth and Michael Sharman and Nikolai Chinaev and Nithum Thain and Olivier Bachem and Oscar Chang and Oscar Wahltinez and Paige Bailey and Paul Michel and Petko Yotov and Rahma Chaabouni and Ramona Comanescu and Reena Jana and Rohan Anil and Ross McIlroy and Ruibo Liu and Ryan Mullins and Samuel L Smith and Sebastian Borgeaud and Sertan Girgin and Sholto Douglas and Shree Pandya and Siamak Shakeri and Soham De and Ted Klimenko and Tom Hennigan and Vlad Feinberg and Wojciech Stokowiec and Yu-hui Chen and Zafarali Ahmed and Zhitao Gong and Tris Warkentin and Ludovic Peran and Minh Giang and Clément Farabet and Oriol Vinyals and Jeff Dean and Koray Kavukcuoglu and Demis Hassabis and Zoubin Ghahramani and Douglas Eck and Joelle Barral and Fernando Pereira and Eli Collins and Armand Joulin and Noah Fiedel and Evan Senter and Alek Andreev and Kathleen Kenealy},
      year={2024},
      eprint={2403.08295},
      archivePrefix={arXiv},
      primaryClass={cs.CL},
      url={https://arxiv.org/abs/2403.08295}, 
}

@inproceedings{abadi2016deep, series={CCS’16},
   title={Deep Learning with Differential Privacy},
   url={http://dx.doi.org/10.1145/2976749.2978318},
   DOI={10.1145/2976749.2978318},
   booktitle={Proceedings of the 2016 ACM SIGSAC Conference on Computer and Communications Security},
   publisher={ACM},
   author={Abadi, Martin and Chu, Andy and Goodfellow, Ian and McMahan, H. Brendan and Mironov, Ilya and Talwar, Kunal and Zhang, Li},
   year={2016},
   month=oct, pages={308–318},
   collection={CCS’16} }

@misc{hu2021loralowrankadaptationlarge,
      title={LoRA: Low-Rank Adaptation of Large Language Models}, 
      author={Edward J. Hu and Yelong Shen and Phillip Wallis and Zeyuan Allen-Zhu and Yuanzhi Li and Shean Wang and Lu Wang and Weizhu Chen},
      year={2021},
      eprint={2106.09685},
      archivePrefix={arXiv},
      primaryClass={cs.CL},
      url={https://arxiv.org/abs/2106.09685}, 
}

@misc{jurafskyspeech,
  title={Speech and Language Processing: An Introduction to Natural Language Processing, Computational Linguistics, and Speech Recognition},
  author={Jurafsky, Daniel and Martin, James H}
}

@misc{nyc311data,
  author = {{NYC Dept. of Info Tech}},
  title = {{311 Service Requests from 2010 to Present}},
  year = {2026},
  url = {https://data.cityofnewyork.us/Social-Services/311-Service-Requests-from-2010-to-Present/erm2-nwe9},
  note = {Accessed: YYYY-MM-DD}
}

@inproceedings{Lin_2020, series={IMC ’20},
   title={Using GANs for Sharing Networked Time Series Data: Challenges, Initial Promise, and Open Questions},
   url={http://dx.doi.org/10.1145/3419394.3423643},
   DOI={10.1145/3419394.3423643},
   booktitle={Proceedings of the ACM Internet Measurement Conference},
   publisher={ACM},
   author={Lin, Zinan and Jain, Alankar and Wang, Chen and Fanti, Giulia and Sekar, Vyas},
   year={2020},
   month=oct, pages={464–483},
   collection={IMC ’20} }

@inproceedings{10.1145/3544216.3544251,
author = {Yin, Yucheng and Lin, Zinan and Jin, Minhao and Fanti, Giulia and Sekar, Vyas},
title = {Practical GAN-based synthetic IP header trace generation using NetShare},
year = {2022},
isbn = {9781450394208},
publisher = {Association for Computing Machinery},
address = {New York, NY, USA},
url = {https://doi.org/10.1145/3544216.3544251},
doi = {10.1145/3544216.3544251},
booktitle = {Proceedings of the ACM SIGCOMM 2022 Conference},
pages = {458–472},
numpages = {15},
location = {Amsterdam, Netherlands},
series = {SIGCOMM '22}
}

@misc{sun2024netdpsynsynthesizingnetworktraces,
      title={NetDPSyn: Synthesizing Network Traces under Differential Privacy}, 
      author={Danyu Sun and Joann Qiongna Chen and Chen Gong and Tianhao Wang and Zhou Li},
      year={2024},
      eprint={2409.05249},
      archivePrefix={arXiv},
      primaryClass={cs.CR},
      url={https://arxiv.org/abs/2409.05249}, 
}

@misc{zhang2020privsyndifferentiallyprivatedata,
      title={PrivSyn: Differentially Private Data Synthesis}, 
      author={Zhikun Zhang and Tianhao Wang and Ninghui Li and Jean Honorio and Michael Backes and Shibo He and Jiming Chen and Yang Zhang},
      year={2020},
      eprint={2012.15128},
      archivePrefix={arXiv},
      primaryClass={cs.CR},
      url={https://arxiv.org/abs/2012.15128}, 
}

@article{rosenblatt2024epistemic,
  title={Epistemic parity: Reproducibility as an evaluation metric for differential privacy},
  author={Rosenblatt, Lucas and Herman, Bernease and Holovenko, Anastasia and Lee, Wonkwon and Loftus, Joshua and McKinnie, Elizabeth and Rumezhak, Taras and Stadnik, Andrii and Howe, Bill and Stoyanovich, Julia},
  journal={ACM SIGMOD Record},
  volume={53},
  number={1},
  pages={65--74},
  year={2024},
  publisher={ACM New York, NY, USA}
}

@article{ponomareva2023dp,
  title={How to dp-fy ml: A practical guide to machine learning with differential privacy},
  author={Ponomareva, Natalia and Hazimeh, Hussein and Kurakin, Alex and Xu, Zheng and Denison, Carson and McMahan, H Brendan and Vassilvitskii, Sergei and Chien, Steve and Thakurta, Abhradeep Guha},
  journal={Journal of Artificial Intelligence Research},
  volume={77},
  pages={1113--1201},
  year={2023}
}

@article{li2022does,
  title={When does differentially private learning not suffer in high dimensions?},
  author={Li, Xuechen and Liu, Daogao and Hashimoto, Tatsunori B and Inan, Huseyin A and Kulkarni, Janardhan and Lee, Yin-Tat and Guha Thakurta, Abhradeep},
  journal={Advances in Neural Information Processing Systems},
  volume={35},
  pages={28616--28630},
  year={2022}
}

@article{livan2018introduction,
  title={Introduction to random matrices theory and practice},
  author={Livan, Giacomo and Novaes, Marcel and Vivo, Pierpaolo},
  journal={Monograph Award},
  volume={63},
  number={54},
  pages={914},
  year={2018}
}

@inproceedings{song2021evading,
  title={Evading the curse of dimensionality in unconstrained private glms},
  author={Song, Shuang and Steinke, Thomas and Thakkar, Om and Thakurta, Abhradeep},
  booktitle={International Conference on Artificial Intelligence and Statistics},
  pages={2638--2646},
  year={2021},
  organization={PMLR}
}

@inproceedings{dwork2006calibrating,
  title={Calibrating noise to sensitivity in private data analysis},
  author={Dwork, Cynthia and McSherry, Frank and Nissim, Kobbi and Smith, Adam},
  booktitle={Theory of cryptography conference},
  pages={265--284},
  year={2006},
  organization={Springer}
}

@article{lopez2016revisiting,
  title={Revisiting classifier two-sample tests},
  author={Lopez-Paz, David and Oquab, Maxime},
  journal={arXiv preprint arXiv:1610.06545},
  year={2016}
}

@article{kruskal1983overview,
  title={An overview of sequence comparison: Time warps, string edits, and macromolecules},
  author={Kruskal, Joseph B},
  journal={SIAM review},
  volume={25},
  number={2},
  pages={201--237},
  year={1983},
  publisher={SIAM}
}

@inproceedings{rosenblattdifferential,
  title={Differential Privacy Under Class Imbalance: Methods and Empirical Insights},
  author={Rosenblatt, Lucas and Lut, Yuliia and Turok, Ethan and Medina, Marco Avella and Cummings, Rachel},
  booktitle={Forty-second International Conference on Machine Learning},
    year={2025},
}

@article{comanici2025gemini,
  title={Gemini 2.5: Pushing the frontier with advanced reasoning, multimodality, long context, and next generation agentic capabilities},
  author={Comanici, Gheorghe and Bieber, Eric and Schaekermann, Mike and Pasupat, Ice and Sachdeva, Noveen and Dhillon, Inderjit and Blistein, Marcel and Ram, Ori and Zhang, Dan and Rosen, Evan and others},
  journal={arXiv preprint arXiv:2507.06261},
  year={2025}
}

@book{durbin1998biological,
  title={Biological sequence analysis: probabilistic models of proteins and nucleic acids},
  author={Durbin, Richard and Eddy, Sean R and Krogh, Anders and Mitchison, Graeme},
  year={1998},
  publisher={Cambridge university press}
}

@article{bun2024continual,
  title={Continual release of differentially private synthetic data from longitudinal data collections},
  author={Bun, Mark and Gaboardi, Marco and Neunhoeffer, Marcel and Zhang, Wanrong},
  journal={Proceedings of the ACM on Management of Data},
  volume={2},
  number={2},
  pages={1--26},
  year={2024},
  publisher={ACM New York, NY, USA}
}

@inproceedings{yudifferentially,
  title={Differentially Private Fine-tuning of Language Models},
  author={Yu, Da and Naik, Saurabh and Backurs, Arturs and Gopi, Sivakanth and Inan, Huseyin A and Kamath, Gautam and Kulkarni, Janardhan and Lee, Yin Tat and Manoel, Andre and Wutschitz, Lukas and others},
  booktitle={International Conference on Learning Representations},
  year={2022}
}

@article{team2025gemma,
  title={Gemma 3 technical report},
  author={Team, Gemma and Kamath, Aishwarya and Ferret, Johan and Pathak, Shreya and Vieillard, Nino and Merhej, Ramona and Perrin, Sarah and Matejovicova, Tatiana and Ram{\'e}, Alexandre and Rivi{\`e}re, Morgane and others},
  journal={arXiv preprint arXiv:2503.19786},
  year={2025}
}

\newpage
\appendix
\onecolumn

\section{Architecture, Hyperparameters and Privacy Guarantees}
\label{app:hparams}
We optimize the Gemma 1B and 4B models under \methodname~using DP-SGD \citep{abadi2016deep}. We calibrate the noise multiplier $\sigma$ to satisfy a \textit{total} privacy budget $\varepsilon_{total} \in \{0.5, 2.0, 4.0, 10.0\}$ with $\delta = 1/n^2$, where $n$ is the number of unique subjects in the training set. We reserve a small portion of the privacy budget for the selection step (described below), such that $\varepsilon_{total} = \varepsilon_{train} + \varepsilon_{select}$.

We perform hyperparameter tuning on the learning rate and batch size using the Synthetic dataset, converging on a batch size of 256 and a peak learning rate of $5 \times 10^{-4}$ with cosine decay. We utilize a fixed per-sample gradient clipping norm $C=0.01$. These hyperparameter values are then used for all experiments on all datasets.

\textbf{DP Post-hoc Selection via Private Selection.}~~
To ensure the final dataset $\mathbf{D}^*$ has high utility, we employ a private selection strategy to post-process and filter a large, over-generated candidate pool \citep{synthesize}. To identify the most representative tables, we again map both real and synthetic data into a semantic vector space using the Gecko model \citep{lee2024gecko}.
We execute a differentially private voting mechanism where each private table casts votes for its $k=10$ nearest synthetic neighbors in this embedding space. The final dataset is composed of the candidates that receive the highest number of votes after the addition of privacy-preserving noise. For this selection step, we allocate a dedicated budget of $\varepsilon_{select}=1.0$, scaling down to $\varepsilon_{select}=0.5$ and $\varepsilon_{select}=0.25$ for our tighter total budgets of $\varepsilon=2.0$ and $\varepsilon=0.5$, respectively.

\textbf{Privacy Guarantees and Parameters.}~~Additionally, as suggested in \cite{ponomareva2023dp}, we provide a standardized report with important privacy specs from our experimentation.

\begin{specialistbox}{Privacy Guarantees and Parameters}
\begin{compactenum}
\item \textbf{DP setting.} This work utilizes a \textit{Central DP} model. The privacy guarantee applies to the release of the final synthetic dataset.
\item \textbf{Instantiating the DP Definition.}
\begin{compactenum}
\item \textit{Data accesses covered.} The privacy guarantee covers the training of the specific Generative Model (Gemma) and the subsequent Private Selection of synthetic tables. \textit{Note on Hyperparameters:} Hyperparameter tuning (learning rate, batch size) was performed on a separate \textit{Synthetic HMM} dataset and transferred to the real datasets. Therefore, the privacy budget reported is consumed entirely by the final training run and selection, without additional cost for tuning on the sensitive data.
\item \textit{What the final mechanism's output is.} The mechanism $\mathcal{M}$ is defined as the composition of the DP-SGD fine-tuning, the autoregressive generation, and the Private Selection step. The output is the collection of synthetic tables $\mathbf{D}^*$. Generation and selection are post-processing of the DP-trained model and therefore do not incur additional privacy loss beyond the accounted training and selection mechanisms. While the model weights are differentially private, the primary release artifact is the synthetic data.
\item \textit{Unit of privacy.} User-level Privacy: the unit of protection is the full table $D^{(i)}$ associated with a single user (e.g., a specific Patient ID in MIMIC or a specific Property BBL in NYC 311). This protects the entire longitudinal history of the user, not just individual rows.
\item \textit{Adjacency definition.} We utilize the Add-or-Remove definition of adjacency. Two datasets $\mathbf{D}$ and $\mathbf{D}'$ are neighboring if $\mathbf{D}'$ can be obtained by adding or removing all records associated with exactly one user $u_i$ from $\mathbf{D}$.
\end{compactenum}
\item \textbf{Privacy accounting details.}
\begin{compactenum}
\item \textit{Type of accounting used.} We utilize \textit{Privacy Loss Distribution (PLD)} accounting to compute tight bounds for the composition of subsampled Gaussian mechanisms.
\item \textit{Accounting assumptions.} The training process utilizes random shuffling (sampling without replacement), where each user contributes exactly one serialized sequence per epoch. For privacy accounting, we use PLD accounting specialized to shuffled (without-replacement) subsampling of the Gaussian mechanism with sampling rate $q=B/n$ over the given number of epochs. The noise multiplier $\sigma$ is calibrated to satisfy the target $\varepsilon$ given the sampling rate $q = B/n$ and number of epochs.
\item \textit{The formal DP statement.} The release satisfies $(\varepsilon_{total}, \delta)$-DP, where $\delta = 1/n^2$ (e.g. approx $10^{-10}$ for MIMIC). The total budget is composed of training and selection budgets: $\varepsilon_{total} = \varepsilon_{train} + \varepsilon_{select}$. Reported regimes are $\varepsilon_{total} \in \{0.5, 2.0, 4.0, 10.0\}$.
\end{compactenum}
\item \textbf{Transparency and verifiability.} We will provide serialization logic and selection mechanism code upon acceptance.
\end{compactenum}
\end{specialistbox}

\clearpage
\section{Extended Related Work}\label{app:rel_work}
\textbf{Marginal-based Mechanisms.} Mechanisms based on low-order marginal measurements currently represent the state-of-the-art for standard tabular data synthesis \cite{synthesize}. Popular approaches parameterize the distribution (e.g. the Private-PGM distributional model \cite{mckenna2019graphicalmodelbasedestimationinference}), and cleverly update the distributional parameters by selecting and measuring a subset of column correlations. The most performant of these methods is AIM \cite{mckenna2024aimadaptiveiterativemechanism}, which dominates recent benchmarks on independent and identically distributed (i.i.d.) tabular data \cite{rosenblatt2024epistemic, Chen_2025}. However, applying these mechanisms requires extensive preprocessing, including discretization, imputation, and outlier removal. Furthermore, by treating the data as purely numerical abstractions, these methods discard the semantic context inherent in column names and values. Additionally, some recent theoretical work has addressed the bounds of longitudinal release for streaming data \citep{bun2024continual}, though these methods focus on continual aggregation rather than the synthesis of coherent individual trajectories.

\textbf{Transformer based models for DP Tabular data synthesis.}~~
End-to-end methods utilizing Large Language Models (LLMs) for tabular synthesis have recently gained prominence. In the non-private setting, GReaT \citep{borisov2022language} serializes tabular data into textual strings (e.g., ``Sex is Male, age is 50'') to fine-tune pre-trained LLMs. GReaT first creates a textual encoding of the tabular data (in the format ``column\_name is cell\_value,'') \citep{borisov2022language} and then fine-tunes a pretrained LLM on such textual encodings. Column names are permutated during the encoding phase to avoid relying on pseudo-ordering of the column names and to allow sampling using any subset of column names. Adapting GReaT to the differentially private setting presents challenges; naive application of DP-SGD often results in the loss of structural coherence. To mitigate this, \citet{afonja2025dp2stageadaptinglanguagemodels} and \cite{tran2024differentiallyprivatetabulardata} concurrently proposed a two-stage training paradigm: a first stage learns ``format compliance'' (syntax) on public data, followed by a second stage of DP fine-tuning on private data. Both works also introduce loss reweighting for formatting tokens (e.g., ``is'', ``,'') to further stabilize training.

A parallel line of work leverages TabPFN \citep{hollmann2023tabpfntransformersolvessmall}, a transformer pre-trained to perform supervised classification via in-context learning.  TabPFGen \cite{ma2024tabpfgentabulardata} is an energy-based model that harnesses TabPFN for (non-DP) data synthesis. TabPFGen generates data via stochastic gradient Langevin dynamics (SGLD), iteratively refining noise into synthetic samples by maximizing the AUC of the in-context classifier. 
We exclude TabPFGen from our evaluation for several reasons. First, the method requires designating a specific label column, limiting its generality. Second, the iterative energy-based update is ill-suited for capturing the long-range temporal dependencies central to our problem setting. Finally, no differentially private adaptation of TabPFGen currently exists; given that TabPFN's weights are immutable and fixed during pre-training, we hypothesize that the method would struggle to model private distributions that drift significantly from its pre-training prior.

\textbf{Longitudinal DPSD.}~~
There is a noticeable lack of research activity on modeling multi-dimensional time series in differentially private settings, likely due to the difficulty of the problem. DoppelGANger \cite{Lin_2020} adapted GAN architectures by incorporating LSTMs to capture temporal dynamics. However, their experiments demonstrated that training with DP-SGD, even under moderate privacy budgets (e.g., $\varepsilon=10.5$), significantly degrades temporal correlations. Subsequent works, such as NetShare \cite{10.1145/3544216.3544251}, reported similar utility losses with GAN-based approaches. This led to a paradigm shift toward marginal-based methods. For instance, NetDPSyn \cite{sun2024netdpsynsynthesizingnetworktraces} adapted PrivSyn \cite{zhang2020privsyndifferentiallyprivatedata} to model network traces, capturing temporal dynamics by introducing inter-arrival times as an explicit feature and measuring marginals over these differences. These works typically define the privacy unit at the example (or event) level. In contrast, our framework adopts a stricter user-level definition, aiming to preserve the entire temporal trajectory of each synthetic user.

\textbf{Comparison of \methodname~with the above.}~~
Our work distinguishes itself by operationalizing the multi-table privacy unit identified by \citet{synthesize}. To the best of our knowledge, we are the first to leverage the autoregressive capabilities of LLMs to address the temporal modeling challenges inherent to this setting.
While marginal-based methods excel at low-order statistics, recent literature suggests that LLMs are superior when capturing complex, high-order interactions \cite{castellon2023dptbarttransformerbasedautoregressivemodel}. Our results confirm that this capability is essential for modeling the conditional dependencies of longitudinal data. 
Furthermore, unlike prior transformer-based approaches that require complex two-stage training \cite{afonja2025dp2stageadaptinglanguagemodels, tran2024differentiallyprivatetabulardata}, our framework achieves high fidelity with a single-stage standard next-token prediction loss.
Finally, by learning directly from serialized tokens, we bypass the extensive preprocessing (e.g., discretization, outlier removal) required by marginal mechanisms, preserving the raw semantic richness of the data.

\clearpage
\section{Data}
\label{app:data}

\subsection{MIMIC IV}
MIMIC vitalsigns data was preprocessed, first partitioning into users by $subject\_id$ to create patient time-series data. We filtered out subjects who had more than 50 and fewer than 4 $charttime$ (many patients, as is evidenced by \Cref{tab:mimic_stats_compact}, had fewer than 4 $charttime$, but we wanted patients with interesting temporal trajectories). Thus, we had a final cohort of 102,864 patients, reduced from the initial 198,131.

\begin{table}[h!]
\centering
\caption{Comparison of MIMIC vital signs data before and after filtering.}
\label{tab:mimic_stats_compact}
\resizebox{\textwidth}{!}{\begin{tabular}{lcccccc}
\toprule
& \multicolumn{3}{c}{\textbf{Before Filtering}} & \multicolumn{3}{c}{\textbf{After Filtering}} \\
\cmidrule(r){2-4} \cmidrule(l){5-7}
\textbf{Metric} & \textbf{Record Counts} & \textbf{Unique `stay\_id'} & \textbf{Unique `charttime'} & \textbf{Record Counts} & \textbf{Unique `stay\_id'} & \textbf{Unique `charttime'} \\
\midrule
count & 198131 & 198131 & 198131 & 102864 & 102864 & 102864 \\
mean & 7.90 & 2.06 & 7.90 & 10.58 & 2.43 & 10.58 \\
std & 15.00 & 3.37 & 15.00 & 8.45 & 2.09 & 8.45 \\
min & 1 & 1 & 1 & 4 & 1 & 4 \\
25\% & 2 & 2 & 2 & 5 & 1 & 5 \\
50\% & 4 & 4 & 4 & 7 & 2 & 7 \\
75\% & 8 & 8 & 8 & 13 & 3 & 13 \\
max & 1144 & 1144 & 1144 & 50 & 30 & 50 \\
\bottomrule
\end{tabular}}
\end{table}

\begin{table}[h!]
\centering
\caption{Sample table from the MIMIC-IV Vital-sign public sample data.}
\label{tab:sample_mimic}
\resizebox{\textwidth}{!}{
\begin{tabular}{rrlrrrrrrlr}
\toprule
subject\_id & stay\_id & charttime & temperature & heartrate & resprate & o2sat & sbp & dbp & rhythm & pain \\
\midrule
10014729 & 37887480 & 2125-03-19 13:22:00 & & 124 & 24 & 100 & 93 & 65 & & \\
10014729 & 37887480 & 2125-03-19 18:28:00 & 98.9 & 106 & 18 & 100 & 115 & 70 & Sinus Tachycardia & 5 \\
10014729 & 37887480 & 2125-03-19 13:07:00 & & 128 & 18 & 100 & 132 & 96 & Sinus Tachycardia & \\
10014729 & 37887480 & 2125-03-19 16:23:00 & 99.8 & 115 & 22 & 97 & 114 & 45 & Sinus Tachycardia & 0 \\
\bottomrule
\end{tabular}
}
\end{table}

\subsection{NYC 311 Calls/Requests}
We use this data because (1) it has interesting geo-spatial \textit{and} temporal trends for the model to pick up on, and (2) because it was released \textit{after} the Gemma 3 class of model cutoff date (which was August 2024, according to Google). So, even though we can assume the Gemma models have been exposed to similar 311 call data (as its been public since 2010 on NYC Open Data), these specific calls are new, from a year distribution the model was not specifically trained on.

We use October 1st, 2024, through Aug 1st, 2025 of NYC 311 service requests (a dataset from NYC OpenData). This data was preprocessed, where ``user-level’’ here is each unique ``borough-block-building’’ code (a tax code that essentially identifies a single property with potentially multiple tenants). We removed columns that were redundant, empty, or had less precise information. We additionally filtered property-level such that properties (BBLs) with fewer than 2 total calls were excluded. Conversely, properties with more than 300 calls were removed. Thus, we went from an initial set of 381,066 unique properties to a final dataset of 118,510; then, each table is a a time-series of 311 service requests for a specific property.

\begin{table}[h!]
\centering
\caption{Sample of processed 311 data (column names changed for brevity).}
\label{tab:sample_311}
\resizebox{\textwidth}{!}{
\begin{tabular}{rlllllllllrrr}
\toprule
key & created & closed & agency & complaint & descriptor & location\_type & zip & address & resolution & bbl & latitude & longitude \\
\midrule
62640682 & 2024-10-03 14:35:06 & 10/03/2024 02:53:31 PM & NYPD & Illegal Parking & Unauthorized Bus Layover & Street/Sidewalk & 10004.00 & 10 PETER MINUIT PLAZA & (desc) & 1000030003.00 & 40.70 & -74.01 \\
62655723 & 2024-10-04 14:20:44 & 10/04/2024 02:38:27 PM & NYPD & Illegal Parking & Unauthorized Bus Layover & Street/Sidewalk & 10004.00 & 10 PETER MINUIT PLAZA & (desc) & 1000030003.00 & 40.70 & -74.01 \\
62651993 & 2024-10-04 14:44:45 & 10/04/2024 03:04:03 PM & NYPD & Noise - Park & Loud Music/Party & Park/Playground & 10004.00 & 10 PETER MINUIT PLAZA & (desc) & 1000030003.00 & 40.70 & -74.01 \\
62653125 & 2024-10-04 23:25:30 & NaN & DHS & Homeless Person Assistance & NaN & Street/Sidewalk & 10004.00 & 10 PETER MINUIT PLAZA & (desc) & 1000030003.00 & 40.70 & -74.01 \\
62672879 & 2024-10-06 13:04:18 & 10/06/2024 03:01:12 PM & DSNY & Vendor Enforcement & Food Vendor & Street & 10004.00 & 10 PETER MINUIT PLAZA & NaN & 1000030003.00 & 40.70 & -74.01 \\
\bottomrule
\end{tabular}
}
\end{table}

\subsection{Synthetic Data}
It's been known for quite some time that a Hidden Markov Model (HMM) is a good model for temporal dependencies in data \cite{jurafskyspeech}. Thus, using a multivariate Gaussian emission space, we specify an HMM instance as $(\mathcal{Q}, \mathcal{O}, \pi, A, B)$. The finite set of $N_s$ unobserved states is,
$$
\mathcal{Q} = \{q_1, q_2, \dots, q_{N_s}\}~,
$$
and the $N_f$-dimensional space of real-valued vectors from which observations are drawn is,
$$
\mathcal{O} = \mathbb{R}^{N_f}~.
$$
We have initial state probabilities ($\pi$), which is a vector of probabilities for the system's starting state at time $t=1$, given,
$$
    \pi = [\pi_1, \pi_2, \dots, \pi_{N_s}], \quad \text{where } \pi_i = P(S_1 = q_i)~.
$$
The dynamics of our model are governed by the transition probability matrix, denoted by $A$. This is an $N_s \times N_s$ matrix where each element $A_{ij}$ specifies the probability of transitioning from state $q_i$ to state $q_j$. This captures the first-order Markov property of the sequence of latent states, or
$$
    A = \{A_{ij}\}~,
$$
where $A_{ij} = P(S_t = q_j | S_{t-1} = q_i)~$.

Finally, the connection between the latent states and the observable data is defined by the emission probability distributions, denoted by $B$. This is a set of $N_s$ probability distributions, where each distribution $b_i(\mathbf{o})$ gives the probability density of emitting the observation vector $\mathbf{o} \in \mathcal{O}$ given that the system is in state $q_i$. In this model, each emission distribution is a multivariate normal distribution with a state-specific mean $\mathbf{\mu}_i$ and covariance matrix $\mathbf{\Sigma}_i$.
$$
    B = \{b_1(\mathbf{o}), b_2(\mathbf{o}), \dots, b_{N_s}(\mathbf{o})\}
$$
where $b_i(\mathbf{o}) = p(\mathbf{O}_t = \mathbf{o} | S_t = q_i) \sim \mathcal{N}(\mathbf{\mu}_i, \mathbf{\Sigma}_i)$.

So, to generate for each subject, a sequence of observations $\mathbf{O}_1, \dots, \mathbf{O}_T$ of length $T$ is drawn.

\begin{figure}[h!]
    \centering
    \includegraphics[width=0.5\linewidth]{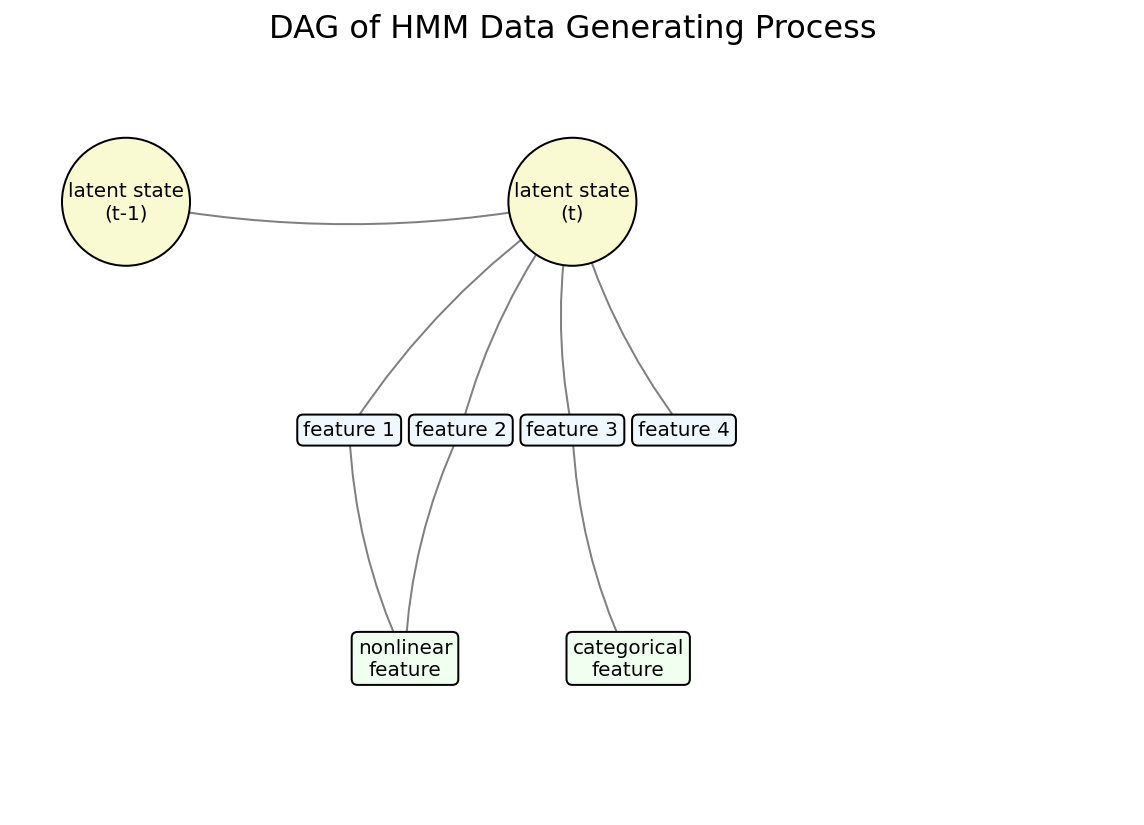}
    \caption{DAG for HMM.}
    \label{fig:dag_hmm}
\end{figure}

\begin{table}[h!]
\centering
\caption{Sample of Synthetic data}
\label{tab:sample_synthetic}
% \resizebox{\textwidth}{!}{
\begin{tabular}{rrrrrrrl}
\toprule
subject\_id & timestep & Glomozole & Crirodex & Criphecor & Zolsidex & Zolphephine & Zolronide \\
\midrule
10 & 0 & -0.80 & -2.10 & -3.60 & 0.60 & 3.80 & Low \\
10 & 1 & 2.70 & -1.80 & 3.80 & 3.10 & 3.90 & Very High \\
10 & 2 & 0.70 & -2.70 & 2.70 & 3.40 & 8.00 & High \\
10 & 3 & 0.60 & -0.60 & 2.30 & 3.70 & 1.30 & Medium \\
10 & 4 & 1.00 & -1.50 & 4.10 & 2.40 & 2.00 & Very High \\
\bottomrule
\end{tabular}

\end{table}

\clearpage
\section{Extended Metrics}
\label{app:metrics}

\subsection{Univariate Marginal Distance}
\label{app:univariate}

As a fundamental consistency check, we ensure the global marginal distributions of individual columns are preserved. We do this by aggregating the values of a specific feature $c$ across all tables in the dataset into a single distribution, ignoring the temporal ordering. Let $\mathcal{X}_c = \bigcup_{i} \{ v_{t,c}^{(i)} \mid \mathbf{x}_t^{(i)} \in D^{(i)} \}$ be the multiset of all observed values for feature $c$ in the real dataset, and let $\mathcal{Y}_c$ be the corresponding multiset for the synthetic dataset.

We quantify the distance between these empirical distributions using the Wasserstein-1 distance.

\begin{definition}[Wasserstein-1 Distance]
For probability distributions $\mu$ and $\nu$ on $\mathbb{R}$, the 1-Wasserstein distance is:
\begin{align}
    W_1(\mu, \nu) = \inf_{\gamma \in \Gamma(\mu, \nu)} \int_{\mathbb{R} \times \mathbb{R}} |x - y| \, \mathrm{d}\gamma(x, y) ~,
\end{align}
where $\Gamma(\mu, \nu)$ denotes the set of all couplings (joint distributions) with marginals $\mu$ and $\nu$.
\end{definition}

In practice, we estimate this by computing the distance between the empirical distributions of $\mathcal{X}_c$ and $\mathcal{Y}_c$. We privilege $W_1$ over metrics like KL-Divergence because it respects the underlying geometry of the metric space (e.g., penalizing a generated heart rate of 180 bpm significantly more than 85 bpm if the true value is 80 bpm, whereas statistical metrics like TVD treat both errors identically).

\subsection{Table-wise Distance to Closest Record (TDCR)}
\label{app:tdcr}

Let the complete collection of real user tables be $\mathbf{D}$, which we split into disjoint training and test sets, $\mathbf{D}_{\text{train}}$ and $\mathbf{D}_{\text{test}}$. Let $\mathbf{D}^*$ be the collection of synthetic tables generated by our model. The first step is to formally define the distance $\Delta(D^{(a)}, D^{(b)})$ between any two tables $D^{(a)}, D^{(b)} \in \mathbf{D} \cup \mathbf{D}^*$.

Recalling our problem formulation, a table $D^{(i)}$ consists of $d$ attributes $\mathcal{A} = \{A_1, \dots, A_d\}$. We define the trajectory for a specific attribute $A_j$ in table $D^{(i)}$ as the sequence of values $\mathbf{v}_{\cdot, j}^{(i)} = (v_{1,j}^{(i)}, \dots, v_{T_i,j}^{(i)})$.

The distance $\Delta$ is a weighted sum of per-attribute time series distances, $\delta_j$. For a numerical attribute $A_j$, we define its distance as the normalized Dynamic Time Warping (DTW) distance:
\begin{equation}
\delta_j(\mathbf{v}_{\cdot, j}^{(a)}, \mathbf{v}_{\cdot, j}^{(b)}) = \frac{1}{|K|} \text{DTW}(\mathbf{v}_{\cdot, j}^{(a)}, \mathbf{v}_{\cdot, j}^{(b)})
\end{equation}
where $K$ is the optimal warping path found by the DTW algorithm. This normalization accounts for varying sequence lengths $T_a$ and $T_b$. The total inter-table distance is then the weighted $L_1$ norm across all attributes:
\begin{equation}
\Delta(D^{(a)}, D^{(b)}) = \sum_{j=1}^{d} w_j \cdot \delta_j(\mathbf{v}_{\cdot, j}^{(a)}, \mathbf{v}_{\cdot, j}^{(b)})
\end{equation}
where weights $w_j$ can be set to balance the contribution of each attribute. In our experiments, we set uniform weights $w_j = 1$.

Next, using this distance function, we compute the TDCR score for each table in our evaluation sets by finding its minimum distance to any table in the training set $\mathbf{D}_{\text{train}}$. For a synthetic table $D^* \in \mathbf{D}^*$, its score is:
\begin{equation}
\text{TDCR}(D^*) = \min_{D \in \mathbf{D}_{\text{train}}} \Delta(D^*, D)
\end{equation}
And for a real test table $D' \in \mathbf{D}_{\text{test}}$, its benchmark score is:
\begin{equation}
\text{TDCR}(D') = \min_{D \in \mathbf{D}_{\text{train}}} \Delta(D', D)
\end{equation}
This procedure yields two empirical distributions of minimum distances: $P_{\text{synth}} = \{\text{TDCR}(D^*) \mid D^* \in \mathbf{D}^*\}$ and a benchmark distribution $P_{\text{test}} = \{\text{TDCR}(D') \mid D' \in \mathbf{D}_{\text{test}}\}$.

To quantify the fidelity of the synthetic data, we measure the divergence between these two distributions. Unlike transport-based metrics (e.g., Wasserstein) which focus on magnitude, we prioritize distributional shape. We discretize both $P_{\text{synth}}$ and $P_{\text{test}}$ into $k$ histograms bins over their joint support, yielding discrete probability distributions $P$ and $Q$. The final score is the Jensen-Shannon Distance (JSD), bounded in $[0, 1]$:
\begin{equation}
\text{TDCR}(\mathbf{D}^*, \mathbf{D}) = \sqrt{\frac{D_{KL}(P || M) + D_{KL}(Q || M)}{2}}
\end{equation}
where $M = \frac{1}{2}(P + Q)$ is the mixture distribution and $D_{KL}$ is the Kullback-Leibler divergence. A lower JSD indicates that the privacy-utility trade-off mechanism has preserved the ``distance-to-real'' manifold of the held-out data.

\begin{figure}[h!]
    \centering
    % Replace 'path/to/your/image.jpg' with your actual filename
    \includegraphics[width=0.5\textwidth]{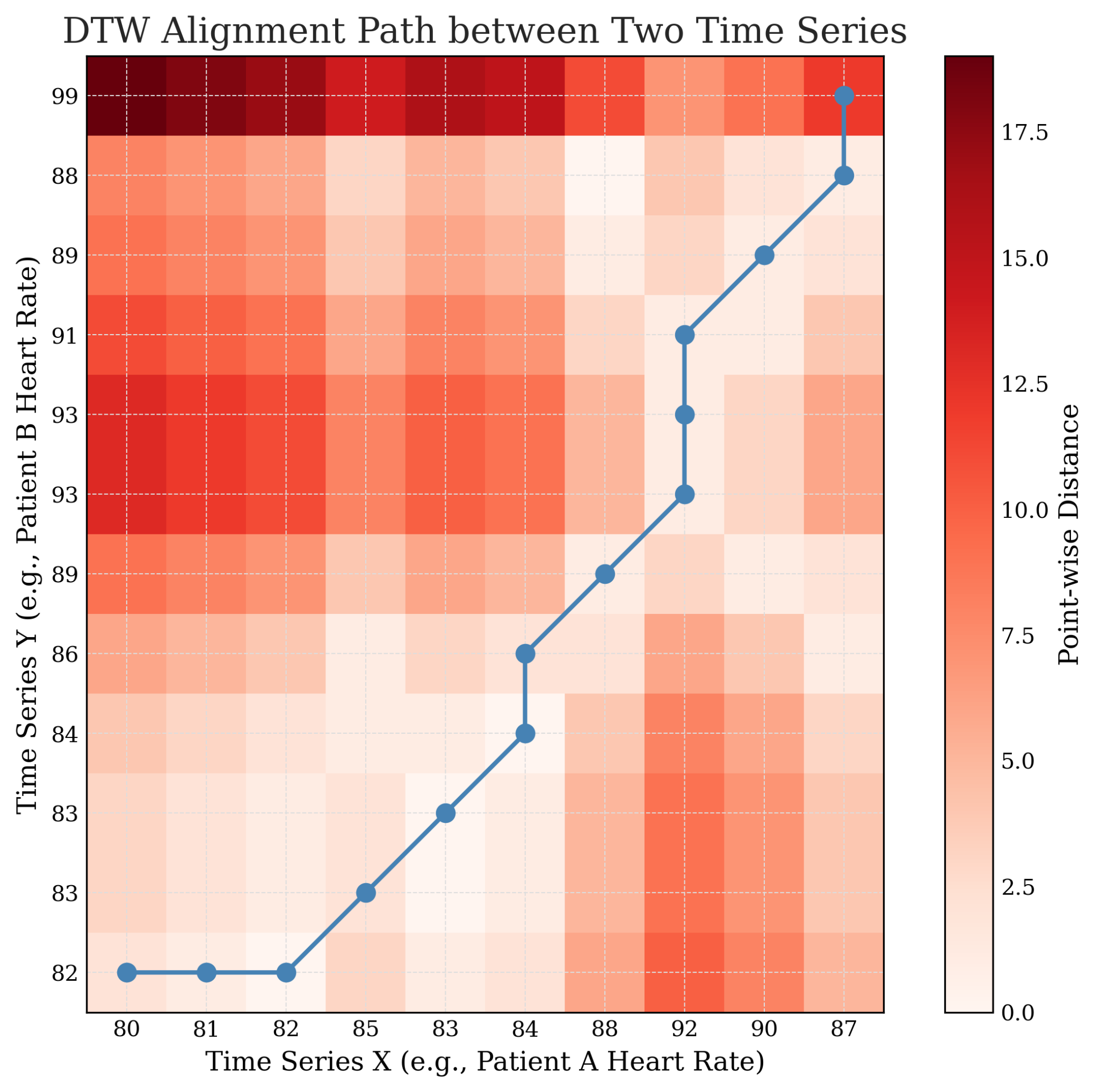}
    \caption{Dynamic Time Warping (DTW) alignment between two heart rate time series. 
    The background heatmap represents the point-wise distance cost matrix, where darker red indicates a higher distance. 
    The blue line traces the optimal warping path that minimizes the cumulative cost between 
    Time Series X (Patient A: $x = [80, 81, 82, 85, 83, 84, 88, 92, 90, 87]$) and 
    Time Series Y (Patient B: $y = [82, 83, 83, 84, 86, 89, 93, 93, 91, 89, 88, 99]$). 
    The resulting Total DTW Distance for this example path is $25.0$.}
    \label{fig:dtw_heartrate_alignment}
\end{figure}

\subsection{MAUVE}
\label{app:mauve}

Standard pairwise similarity approaches assume that comparing the average similarity of each table in set $\mathbf{A}$ directly to its closest neighbor in set $\mathbf{B}$ implicitly captures the closeness of the two families of tables. We adopt an alternative approach that measures the similarity of the distributions over the table embeddings using a divergence measure called MAUVE \cite{liu-etal:mauve-theory:neurips2021, pillutla-etal:mauve:neurips2021}. We utilize the Gecko embedding model \cite{lee2024gecko}.

\textbf{Intuition.}
First, we employ a lower-dimensional subspace transformation (via Gecko embeddings) to capture variance while mitigating computational cost and noise. Subsequently, k-means clustering is applied to partition the feature space into discrete clusters, providing a means to quantize the continuous data space into a finite set of representative bins. We use the cluster assignments to construct histograms, which give a discrete representation of the underlying data distributions. To handle data sparsity and zero-probability bins, we apply standard smoothing techniques.

The divergence curve is then generated by assessing the Kullback-Leibler (KL) divergence between the two distributions, and a mixture of both, across a range of mixing parameters. We distill this curve into a single scalar value, the MAUVE score, which corresponds to the area under the divergence curve. A score closer to 1 indicates high similarity. We normalize sample sizes between real and synthetic sets to ensure fair comparison.

\textbf{Formal Definition.}
Calculating the MAUVE metric requires discretizing the continuous embedding space and measuring the KL-divergence between the two resulting distributions along a curve of their mixtures. Let $\mathcal{P} = \{\mathbf{p}_1, \dots, \mathbf{p}_m\}$ and $\mathcal{Q} = \{\mathbf{q}_1, \dots, \mathbf{q}_n\}$ be the sets of embeddings from the real and synthetic data, respectively, where each embedding vector lives in $\mathbb{R}^d$.

We combine the two sets into a single dataset and apply k-means clustering to partition the embedding space into $k$ discrete clusters. This defines two multinomial probability distributions, $P$ and $Q$, where for each cluster $i \in \{1, \dots, k\}$, the probabilities $P(i)$ and $Q(i)$ are the fractions of embeddings from $\mathcal{P}$ and $\mathcal{Q}$ assigned to that cluster.

We analyze their relationship by constructing a divergence curve. We define a mixture distribution $R_\lambda = (1-\lambda)P + \lambda Q$, where $\lambda \in [0, 1]$ is a mixing parameter. The divergence curve is a parametric plot of the KL divergences from the mixture to each original distribution. Specifically, the curve is traced by the points:
\begin{equation}
\left( D_{KL}(Q || R_\lambda), D_{KL}(P || R_\lambda) \right) \quad \forall \lambda \in [0, 1]
\end{equation}
Here, $D_{KL}(P || R_\lambda)$ quantifies Type I error (generating unrealistic samples), while $D_{KL}(Q || R_\lambda)$ quantifies Type II error (mode collapse).

To bound the metric, we map the KL-divergence values to $[0, 1]$ via an exponential scaling function, $f(x) = e^{-c \cdot x}$, where $c$ is a scaling constant. The final MAUVE score is the area under this \textit{scaled} parametric curve.

\subsection{Classifier Discriminator}
\label{app:classifier}

This metric employs a binary classifier to distinguish between ``real'' (original) tables and ``synthetic'' tables generated by the model. If the model produces high-quality synthetic data, the classifier should struggle to tell it apart from the real data (i.e., achieving an accuracy near 0.5 on a balanced set).

We utilize the Gecko embedding strategy described in \Cref{sec:metrics}, where each table $D^{(i)}$ is converted to a vector $\mathbf{v}_i$. We construct a labeled dataset $\mathcal{D}_{clf} = \{(\mathbf{v}, y) \mid \mathbf{v} \in \mathcal{P} \cup \mathcal{Q}\}$, where $y=0$ for real tables and $y=1$ for synthetic tables. We split $\mathcal{D}_{clf}$ into training (70\%) and test (30\%) sets. We train three classifiers: Logistic Regression, Random Forest, and XGBoost. We report the AUC-ROC on the held-out test set. We additionally inspect the separation of the predicted probability distributions $P(\hat{y}=1 \mid y=0)$ and $P(\hat{y}=1 \mid y=1)$ using Kernel Density Estimation plots to visualize the margin of separability.

\subsection{State Transition Divergence}
\label{app:transitions}

To evaluate the dynamics of how a system moves between different states, we discretize continuous features into a finite set of $S$ states (using quantiles derived from the real data). From the sequence of states for all users, we estimate a state transition probability matrix, $\mathbf{M} \in \mathbb{R}^{S \times S}$, where an element $M_{ij}$ represents the empirical probability of transitioning from state $s_i$ to state $s_j$.

Let $\mathbf{M}_R$ and $\mathbf{M}_S$ be the transition matrices estimated from the real and synthetic datasets, respectively. The divergence between them is measured using the Frobenius norm of their difference:
\begin{equation}
D_{\text{trans}} = ||\mathbf{M}_R - \mathbf{M}_S||_F = \sqrt{\sum_{i=1}^{S} \sum_{j=1}^{S} (M_{R,ij} - M_{S,ij})^2}
\end{equation}
A divergence value closer to zero indicates that the fundamental dynamics of the state changes are well-captured in the synthetic data.

\clearpage
\section{Limits of Flattening, Extended}
\label{app:theory}

In this section, we demonstrate the limitations of the flattening approach discussed in \Cref{sec:theory}. When longitudinal data is flattened into high-dimensional vectors, the privacy cost of measuring global correlations typically becomes prohibitive, forcing marginal-based mechanisms to restrict their measurements to local neighborhoods (e.g., adjacent time steps). We show below that this restriction imposes a conditional independence structure on the synthetic distribution. Consequently, if the true data contains long-range dependencies, such as distinct user profiles that persist over time, a locally consistent mechanism will inevitably ``mix'' these profiles, assigning probability mass to spurious trajectories that are locally plausible but globally invalid.

\begin{definition}[$k$-Local Marginal Consistency]
Let $P_{\Phi}$ be the true distribution over flattened vectors in $\mathcal{Y}$. A synthetic distribution $P^*$ satisfies \textit{$k$-local marginal consistency} with $P_{\Phi}$ if for all time steps $1 \leq t \leq L-k+1$, the marginal distribution on contiguous temporal windows matches the source:
\begin{align}
    \mathbb{P}_{P^*}(\mathbf{x}_t, \dots, \mathbf{x}_{t+k-1}) = \mathbb{P}_{P_{\Phi}}(\mathbf{x}_t, \dots, \mathbf{x}_{t+k-1})~.
\end{align}
\end{definition}

Standard marginal-based approaches generate the distribution with Maximum Entropy among those satisfying the measured marginals \cite{mckenna2024aimadaptiveiterativemechanism}. We show that if a mechanism is restricted to $k$-local marginals (due to budget constraints preventing the measurement of global correlations), it \textit{must} hallucinate invalid data for certain datasets.

\begin{proposition}[Spurious Trajectories in Local MaxEnt Distributions]
\label{prop:spurious}
Let $L=3$ and $k=2$. There exists a source distribution supported on a valid set of trajectories $\mathcal{S} \subset \mathcal{Y}$, such that the Maximum Entropy distribution $P^*$ satisfying $2$-local marginal consistency assigns non-zero probability to invalid trajectories $\mathbf{y} \notin \mathcal{S}$.
\end{proposition}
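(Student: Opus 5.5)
The proof is a small explicit counterexample. We take a single binary attribute ($d=1$, $\mathcal{V}_1=\{0,1\}$, no padding needed since every trajectory has length $L=3$). The valid set is $\mathcal{S}=\{(0,0,0),(1,1,1)\}$, with the source distribution $P_\Phi$ uniform on $\mathcal{S}$. This encodes two persistent user ``types'' (the state at time $3$ is determined by the state at time $1$), which is exactly the long-range dependency described in \Cref{sec:theory}. In fact we could even take $\mathcal{S}=\{(0,0,0),(1,0,1)\}$ to make the mixing more dramatic, but the first choice suffices.

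The key step is to characterize the MaxEnt distribution under $2$-local marginal consistency. The constraints fix the pairwise marginals on $(\mathbf{x}_1,\mathbf{x}_2)$ and $(\mathbf{x}_2,\mathbf{x}_3)$. For the first choice of $\mathcal{S}$, both pairwise marginals put mass $1/2$ on $(0,0)$ and $1/2$ on $(1,1)$. I would invoke the standard fact that, among distributions with prescribed marginals on two cliques sharing a separator, the entropy maximizer is the Markov chain
\begin{align}
P^*(x_1,x_2,x_3)=\frac{P(x_1,x_2)\,P(x_2,x_3)}{P(x_2)}.
\end{align}
This follows by writing $H(X_1,X_2,X_3)=H(X_1,X_2)+H(X_3\mid X_1,X_2)\le H(X_1,X_2)+H(X_3\mid X_2)$, where both terms on the right are fixed by the constraints; equality holds iff $X_1\perp X_3\mid X_2$. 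For this particular $\mathcal{S}$, however, the Markov chain reproduces $P_\Phi$ exactly, with no spurious mass. The first choice is therefore actually a bad example, and this is the main obstacle: the counterexample must be chosen so that the middle coordinate does \emph{not} determine the endpoints.

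I therefore commit to $\mathcal{S}=\{(0,0,0),(1,0,1)\}$ with uniform $P_\Phi$. Here $X_2\equiv 0$, the pair $(\mathbf{x}_1,\mathbf{x}_2)$ is uniform on $\{(0,0),(1,0)\}$, and likewise for $(\mathbf{x}_2,\mathbf{x}_3)$. The Markov factorization gives
\begin{align}
P^*(x_1,0,x_3)=\tfrac12\cdot\tfrac12=\tfrac14 \quad\text{for all } x_1,x_3\in\{0,1\}.
\end{align}
In particular, $P^*((0,0,1))=P^*((1,0,0))=1/4>0$, and neither trajectory lies in $\mathcal{S}$. We must also check that $P^*$ is feasible and matches both windows: its $(x_1,x_2)$-marginal puts mass $1/2$ on each of $(0,0)$ and $(1,0)$, as required. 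The entropy inequality shows it is the unique maximizer, because entropy is strictly concave on the convex feasible set.

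The remaining care point is this uniqueness and maximality argument. I would present it via the chain-rule inequality above rather than Lagrange multipliers, since it is short and self-contained. I would close by remarking that the two spurious trajectories are ``locally plausible'': every adjacent pair in them occurs in the data, yet they switch user type, which matches the intuition given in \Cref{sec:theory}.
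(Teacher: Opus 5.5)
Your proof is correct and is essentially the paper's construction: the paper uses the two types $(\alpha,\gamma,\alpha)$ and $(\beta,\gamma,\beta)$ with a constant middle symbol, which is your $\{(0,0,0),(1,0,1)\}$ up to relabeling. Your chain-rule bound $H(X_1,X_2,X_3)\le H(X_1,X_2)+H(X_3\mid X_2)$ also supplies the justification, which the paper only asserts, that the MaxEnt solution is the Markov chain; just delete your early remark that the first choice $\{(0,0,0),(1,1,1)\}$ ``suffices,'' since, as you then correctly observe, its Markov chain reproduces the source exactly.
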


\begin{proof}
Let the row domain be $\{\alpha, \beta, \gamma\}$ with fixed trajectory length $L=3$. We construct the source distribution as an equiprobable mixture of two user types: Type A, defined by $\mathbf{y}_A = (\alpha, \gamma, \alpha)$, and Type B, defined by $\mathbf{y}_B = (\beta, \gamma, \beta)$. The valid support is therefore $\mathcal{S} = \{\mathbf{y}_A, \mathbf{y}_B\}$. Examining the pairwise adjacent marginals, we observe that $\mathbb{P}[\mathbf{x}_1, \mathbf{x}_2]$ is uniform on $\{(\alpha, \gamma), (\beta, \gamma)\}$, implying a deterministic transition, i.e. $\mathbb{P}[\mathbf{x}_2=\gamma \mid \mathbf{x}_1] = 1$. Similarly, $\mathbb{P}[\mathbf{x}_2, \mathbf{x}_3]$ is uniform on $\{(\gamma, \alpha), (\gamma, \beta)\}$, yielding $\mathbb{P}[\mathbf{x}_3=\alpha \mid \mathbf{x}_2 = \gamma] = 0.5$. The Maximum Entropy distribution $P^*$ consistent with these local marginals necessarily satisfies the conditional independence $\mathbf{x}_1 \perp \mathbf{x}_3 \mid \mathbf{x}_2$ (i.e. a Markov chain structure). Under this independence, we compute the likelihood of the ``mixed'' trajectory $\mathbf{z} = (\alpha, \gamma, \beta)$ as,
\begin{align}
    \mathbb{P}_{P^*}[\mathbf{z}] &= \mathbb{P}[\mathbf{x}_1 = \alpha] \cdot \mathbb{P}[\mathbf{x}_2 = \gamma \mid \mathbf{x}_1 = \alpha] \cdot \mathbb{P}[\mathbf{x}_3 = \beta \mid \mathbf{x}_2 = \gamma]  = 0.5 \cdot 1.0 \cdot 0.5 = 0.25~.
\end{align}
Since $\mathbf{z} \notin \mathcal{S}$, we conclude that $P^*$ assigns probability mass to spurious records.
\end{proof}

\begin{remark}
Proposition~\ref{prop:spurious} highlights that flattening fails to preserve long-range temporal dependencies (e.g., $T_1$ predicting $T_3$) when the synthesizer relies on local dependency structures to save privacy budget. While global marginals \textit{could} capture this, they are often too costly to measure in high dimensions. This motivates an alternate approach, one specifically designed to model sequential data with long-range dependencies.
\end{remark}

\clearpage
\section{All Results}\label{app:all_results}
We include extensive summary and detail tables here with all our metrics, alongside an example of an NYC 311 set of plots in \Cref{app:results_nyc_gemma_4b_eps_10}. \textbf{Many additional plots are available upon request (excluded here for brevity of this document)}. 

\begin{table*}[!ht]
\centering
\caption{\textbf{Comprehensive Summary of Metrics on MIMIC-IV.} We report the performance of \methodname~(Gemma 1B/4B) against marginal baselines and non-private reference points across four privacy regimes. Beyond the results already reported in the main body of the paper, we observe a distinct degradation pattern in the marginal baselines: \textsc{AIM}'s marginal divergence explodes at $\varepsilon=0.5$ ($18.48$), whereas \methodname~remains remarkably stable ($\approx 3.0$), suggesting the LLM's pre-trained priors provide robustness when the privacy signal is weak. Additionally, comparing the ``Real (100)'' subsample to our synthetic results reveals that for distributional metrics like TDCR, our privately fine-tuned models ($\approx 0.28-0.40$) are approaching the natural variance observed between small subsamples of real data ($\approx 0.19$).}
\label{tab:summary_mimic}
\resizebox{\linewidth}{!}{%
\begin{tabular}{@{}l c c c c c@{}}
\toprule
\textbf{Method} & \textbf{MAUVE} & \textbf{HMM Likelihood} & \textbf{TDCR (Privacy)} & \textbf{Marginal Div.} & \textbf{State Trans. Div.} \\
 & (Gecko $\uparrow$) & (Wass. $\downarrow$) & (JSD $\downarrow$) & (Avg Wass. $\downarrow$) & (Avg Frobenius $\downarrow$) \\
\midrule
\textsc{AIM} (Clipped, $\varepsilon=0.5$) & 0.587 & - & 0.8326 & 18.4848 & 0.9067 \\
\textsc{Direct} (Across, Clipped, $\varepsilon=0.5$) & 0.608 & - & 0.7937 & 12.8130 & 0.7607 \\
\textsc{Direct} (Indep, Clipped, $\varepsilon=0.5$) & 0.595 & - & 0.8218 & 12.2337 & 0.7299 \\
Gemma 1B ($\varepsilon=0.5$) & 0.531 & - & 0.3470 & 3.9333 & 0.4819 \\
Gemma 4B ($\varepsilon=0.5$) & \cellcolor{bronze!30}0.627 & - & 0.3376 & \cellcolor{bronze!30}3.0028 & 0.3817 \\
\midrule
\textsc{AIM} (Clipped, $\varepsilon=2.0$) & 0.564 & - & 0.7359 & 9.7080 & 0.7412 \\
\textsc{Direct} (Across, Clipped, $\varepsilon=2.0$) & 0.493 & - & 0.7981 & 8.5031 & 0.6910 \\
\textsc{Direct} (Indep, Clipped, $\varepsilon=2.0$) & 0.573 & - & 0.7960 & 8.5788 & 0.5640 \\
Gemma 1B ($\varepsilon=2.0$) & 0.596 & - & 0.4315 & \cellcolor{silver!30}2.9174 & \cellcolor{gold!30}0.3633 \\
Gemma 4B ($\varepsilon=2.0$) & \cellcolor{gold!30}0.661 & - & \cellcolor{gold!30}0.2886 & 3.2980 & 0.3867 \\
\midrule
\textsc{AIM} (Clipped, $\varepsilon=4.0$) & 0.501 & - & 0.7820 & 7.2091 & 0.6189 \\
\textsc{Direct} (Across, Clipped, $\varepsilon=4.0$) & 0.541 & - & 0.7573 & 8.2982 & 0.6722 \\
\textsc{Direct} (Indep, Clipped, $\varepsilon=4.0$) & 0.532 & - & 0.7615 & 8.0687 & 0.5746 \\
Gemma 1B ($\varepsilon=4.0$) & 0.554 & - & \cellcolor{bronze!30}0.3365 & 3.4431 & 0.3855 \\
Gemma 4B ($\varepsilon=4.0$) & 0.600 & - & 0.4112 & 4.0971 & \cellcolor{bronze!30}0.3816 \\
\midrule
\textsc{AIM} (Clipped, $\varepsilon=10.0$) & 0.531 & - & 0.7349 & 5.7400 & 0.6243 \\
\textsc{Direct} (Across, Clipped, $\varepsilon=10.0$) & 0.497 & - & 0.7665 & 8.3101 & 0.7076 \\
\textsc{Direct} (Indep, Clipped, $\varepsilon=10.0$) & 0.596 & - & 0.7791 & 8.4024 & 0.5497 \\
Gemma 1B ($\varepsilon=10.0$) & 0.552 & - & \cellcolor{silver!30}0.3001 & \cellcolor{gold!30}2.6486 & 0.4454 \\
Gemma 4B ($\varepsilon=10.0$) & \cellcolor{silver!30}0.651 & - & 0.3784 & 3.4125 & \cellcolor{silver!30}0.3747 \\
\midrule
Gemini 2.5 FL (0-Shot) ($\varepsilon=\infty$) & 0.130 & - & 0.8143 & 14.5522 & 1.2082 \\
Gemini 2.5 FL (1-Shot) ($\varepsilon=\infty$) & 0.265 & - & 0.7858 & 5.4584 & 1.0173 \\
Gemini 2.5 FL (5-Shot) ($\varepsilon=\infty$) & 0.226 & - & 0.5023 & 5.0768 & 1.0841 \\
Gemini 2.5 FL (10-Shot) ($\varepsilon=\infty$) & 0.269 & - & 0.7894 & 5.9262 & 1.2283 \\
\midrule
\textsc{Real} (10k) ($\varepsilon=\infty$) & 0.848 & - & 0.2322 & 0.3345 & 0.0161 \\
\textsc{Real} (1k) ($\varepsilon=\infty$) & 0.866 & - & 0.2202 & 0.7633 & 0.0430 \\
\textsc{Real} (100) ($\varepsilon=\infty$) & 0.866 & - & 0.1918 & 1.5563 & 0.1332 \\
\bottomrule
\end{tabular}
}
\end{table*}

\begin{table*}[!ht]
\centering
\caption{\textbf{Feature-wise Breakdown of Marginal and Temporal Fidelity (MIMIC-IV).} This table decomposes the aggregate scores into per-feature Wasserstein and Frobenius errors. \methodname~maintains low transition errors across \textit{all} features. Note that while \textsc{AIM} struggles with transition dynamics generally, it is particularly poor at capturing the stability of ordinal features like \texttt{pain}, where \methodname~(Gemma 4B) reduces the error by over 50\%.}
\label{tab:detailed_mimic}
\resizebox{\linewidth}{!}{%
\begin{tabular}{@{}l ccccccc | ccccccc@{}}
\toprule
& \multicolumn{7}{c}{\textbf{Marginal Distribution (Wasserstein $\downarrow$)}} & \multicolumn{7}{c}{\textbf{State Transition (Frobenius $\downarrow$)}} \\
\textbf{Method} & \textbf{temp} & \textbf{hear} & \textbf{resp} & \textbf{o2sa} & \textbf{sbp} & \textbf{dbp} & \textbf{pain} & \textbf{temp} & \textbf{hear} & \textbf{resp} & \textbf{o2sa} & \textbf{sbp} & \textbf{dbp} & \textbf{pain} \\
\midrule
\textsc{AIM} (Clipped, $\varepsilon=0.5$) & 27.187 & 16.368 & 7.145 & 23.720 & 28.687 & 22.992 & 3.294 & 0.971 & 0.918 & 0.693 & 1.261 & 0.855 & 0.919 & 0.730 \\
\textsc{Direct} (Across, Clipped, $\varepsilon=0.5$) & 37.486 & 9.597 & 2.021 & 12.354 & 12.490 & 15.073 & \cellcolor{silver!30}0.669 & 1.294 & 0.790 & 0.522 & 0.761 & 0.754 & 0.826 & 0.378 \\
\textsc{Direct} (Indep, Clipped, $\varepsilon=0.5$) & 29.550 & 9.710 & 2.753 & 10.727 & 12.956 & 19.438 & \cellcolor{gold!30}0.501 & 1.151 & 0.616 & 0.611 & 0.696 & 0.561 & 1.114 & 0.361 \\
Gemma 1B ($\varepsilon=0.5$) & 0.669 & \cellcolor{gold!30}1.342 & 12.766 & \cellcolor{silver!30}0.523 & \cellcolor{silver!30}6.171 & 4.701 & 1.362 & 0.610 & 0.527 & 0.506 & 0.501 & 0.534 & 0.449 & 0.246 \\
Gemma 4B ($\varepsilon=0.5$) & 0.860 & \cellcolor{silver!30}3.689 & \cellcolor{gold!30}1.054 & 0.852 & \cellcolor{bronze!30}7.863 & 4.950 & 1.750 & \cellcolor{gold!30}0.343 & \cellcolor{silver!30}0.360 & \cellcolor{gold!30}0.282 & 0.417 & 0.501 & 0.480 & 0.289 \\
\midrule
\textsc{AIM} (Clipped, $\varepsilon=2.0$) & 10.531 & 11.638 & 3.162 & 10.536 & 13.362 & 18.003 & \cellcolor{bronze!30}0.723 & 0.762 & 0.907 & 0.458 & 0.807 & 0.749 & 0.907 & 0.598 \\
\textsc{Direct} (Across, Clipped, $\varepsilon=2.0$) & 23.545 & 6.687 & 1.654 & 7.918 & 10.209 & 8.398 & 1.110 & 0.905 & 0.848 & 0.628 & 0.506 & 0.709 & 0.625 & 0.615 \\
\textsc{Direct} (Indep, Clipped, $\varepsilon=2.0$) & 24.664 & 6.938 & 1.700 & 6.924 & 9.868 & 8.934 & 1.023 & 0.875 & 0.597 & 0.379 & 0.673 & 0.489 & 0.430 & 0.505 \\
Gemma 1B ($\varepsilon=2.0$) & \cellcolor{gold!30}0.461 & 5.498 & 1.220 & \cellcolor{bronze!30}0.697 & 8.195 & \cellcolor{gold!30}3.368 & 0.983 & \cellcolor{bronze!30}0.403 & 0.378 & \cellcolor{silver!30}0.358 & \cellcolor{gold!30}0.273 & \cellcolor{gold!30}0.438 & 0.454 & 0.239 \\
Gemma 4B ($\varepsilon=2.0$) & 0.976 & 5.987 & \cellcolor{bronze!30}1.102 & 0.821 & 9.248 & \cellcolor{silver!30}4.143 & 0.810 & 0.414 & \cellcolor{bronze!30}0.368 & \cellcolor{bronze!30}0.362 & 0.405 & 0.457 & 0.453 & 0.247 \\
\midrule
\textsc{AIM} (Clipped, $\varepsilon=4.0$) & 9.638 & 7.611 & 2.579 & 6.912 & 10.767 & 12.110 & 0.847 & 0.572 & 0.828 & 0.428 & 0.509 & 0.714 & 0.684 & 0.598 \\
\textsc{Direct} (Across, Clipped, $\varepsilon=4.0$) & 22.738 & 7.300 & 1.796 & 7.488 & 10.187 & 7.386 & 1.192 & 0.898 & 0.837 & 0.482 & 0.538 & 0.705 & 0.595 & 0.650 \\
\textsc{Direct} (Indep, Clipped, $\varepsilon=4.0$) & 22.386 & 6.670 & 1.775 & 6.820 & 10.766 & 7.107 & 0.956 & 0.818 & 0.572 & 0.463 & 0.748 & 0.496 & 0.432 & 0.492 \\
Gemma 1B ($\varepsilon=4.0$) & 0.574 & 6.669 & 1.125 & 0.926 & 8.634 & 5.065 & 1.109 & 0.432 & 0.369 & 0.386 & 0.445 & 0.461 & \cellcolor{silver!30}0.419 & \cellcolor{gold!30}0.187 \\
Gemma 4B ($\varepsilon=4.0$) & 0.698 & 7.136 & 1.198 & 0.848 & 8.493 & 9.324 & 0.982 & 0.430 & \cellcolor{gold!30}0.354 & 0.384 & 0.439 & \cellcolor{bronze!30}0.454 & \cellcolor{bronze!30}0.421 & \cellcolor{silver!30}0.189 \\
\midrule
\textsc{AIM} (Clipped, $\varepsilon=10.0$) & 6.255 & 7.275 & 2.019 & 6.217 & 9.583 & 8.037 & 0.793 & 0.447 & 0.814 & 0.425 & 0.798 & 0.703 & 0.586 & 0.597 \\
\textsc{Direct} (Across, Clipped, $\varepsilon=10.0$) & 21.799 & 7.049 & 1.842 & 7.825 & 11.005 & 7.740 & 0.911 & 0.924 & 0.817 & 0.490 & 0.802 & 0.707 & 0.600 & 0.613 \\
\textsc{Direct} (Indep, Clipped, $\varepsilon=10.0$) & 21.276 & 7.196 & 1.849 & 8.859 & 11.124 & 7.533 & 0.979 & 0.814 & 0.560 & 0.373 & 0.713 & 0.483 & \cellcolor{gold!30}0.405 & 0.499 \\
Gemma 1B ($\varepsilon=10.0$) & 0.765 & \cellcolor{bronze!30}4.108 & 1.355 & \cellcolor{gold!30}0.453 & \cellcolor{gold!30}5.537 & \cellcolor{bronze!30}4.421 & 1.902 & 0.465 & 0.505 & 0.509 & \cellcolor{bronze!30}0.398 & 0.525 & 0.474 & 0.241 \\
Gemma 4B ($\varepsilon=10.0$) & \cellcolor{bronze!30}0.546 & 5.201 & 1.224 & 1.946 & 9.174 & 4.732 & 1.064 & \cellcolor{silver!30}0.372 & 0.388 & 0.366 & \cellcolor{silver!30}0.377 & \cellcolor{silver!30}0.447 & 0.455 & \cellcolor{bronze!30}0.219 \\
\midrule
Gemini 2.5 FL (0-Shot) ($\varepsilon=\infty$) & 60.837 & 10.647 & 1.288 & 1.041 & 14.121 & 12.641 & 1.290 & 1.209 & 1.416 & 1.126 & 1.530 & 1.338 & 1.295 & 0.543 \\
Gemini 2.5 FL (1-Shot) ($\varepsilon=\infty$) & 0.966 & 8.754 & 1.385 & 1.198 & 12.342 & 12.133 & 1.432 & 0.911 & 1.224 & 0.869 & 0.904 & 1.219 & 1.143 & 0.851 \\
Gemini 2.5 FL (5-Shot) ($\varepsilon=\infty$) & \cellcolor{silver!30}0.531 & 8.241 & \cellcolor{silver!30}1.076 & 1.469 & 12.287 & 10.865 & 1.069 & 1.368 & 0.884 & 1.257 & 0.931 & 1.268 & 1.056 & 0.825 \\
Gemini 2.5 FL (10-Shot) ($\varepsilon=\infty$) & 0.627 & 10.024 & 1.393 & 1.158 & 12.776 & 13.755 & 1.750 & 1.550 & 1.342 & 1.281 & 1.364 & 1.097 & 1.221 & 0.742 \\
\midrule
\textsc{Real} (10k) ($\varepsilon=\infty$) & 0.062 & 0.760 & 0.137 & 0.060 & 0.225 & 1.050 & 0.048 & 0.015 & 0.023 & 0.005 & 0.024 & 0.018 & 0.018 & 0.009 \\
\textsc{Real} (1k) ($\varepsilon=\infty$) & 0.369 & 0.894 & 0.103 & 0.084 & 0.488 & 3.270 & 0.136 & 0.075 & 0.034 & 0.034 & 0.055 & 0.028 & 0.054 & 0.021 \\
\textsc{Real} (100) ($\varepsilon=\infty$) & 0.205 & 3.349 & 0.416 & 0.315 & 4.874 & 1.283 & 0.452 & 0.124 & 0.174 & 0.117 & 0.179 & 0.167 & 0.129 & 0.042 \\
\bottomrule
\end{tabular}
}
\end{table*}

\begin{table*}[!ht]
\centering
\caption{\textbf{Comprehensive Summary of Metrics on Synthetic HMM Data.} This table highlights the utility trade-off between marginal precision and temporal logic. Unsurprisingly,\textsc{AIM} achieves the lowest univariate marginal divergence scores ($\approx 0.25$ at $\varepsilon=4.0$). However, \textsc{AIM}'s HMM Likelihood scores remain prohibitively high ($>300$), indicating the generation of impossible trajectories. We also observe a significant performance gap between Gemma 1B and 4B here that is wider than in the MIMIC experiments; the 1B model struggles to capture the latent HMM states (Likelihood $261.9$ vs. $122.8$ at $\varepsilon=4.0$). This is likely because, for the MIMIC and 311 type data, the 1B model has analogous examples from its training data. As this data is fully synthetic with nonsense column names, processing it relies solely on the models reasoning ability coupled with its ability to generalize a prior over tabular data to unseen/unfamiliar domains.}
\label{tab:summary_synthetic}
\resizebox{\linewidth}{!}{%
\begin{tabular}{@{}l c c c c c@{}}
\toprule
\textbf{Method} & \textbf{MAUVE} & \textbf{HMM Likelihood} & \textbf{TDCR (Privacy)} & \textbf{Marginal Div.} & \textbf{State Trans. Div.} \\
 & (Gecko $\uparrow$) & (Wass. $\downarrow$) & (JSD $\downarrow$) & (Avg Wass. $\downarrow$) & (Avg Frobenius $\downarrow$) \\
\midrule
\textsc{AIM} ($\varepsilon=0.5$) & 0.727 & 1029.1500 & 0.8326 & 2.0781 & 0.6649 \\
\textsc{Direct} (Across, $\varepsilon=0.5$) & \cellcolor{bronze!30}0.765 & 1455.6246 & 0.8326 & 2.7925 & 0.7234 \\
\textsc{Direct} (Indep, $\varepsilon=0.5$) & 0.705 & 1540.8679 & 0.8268 & 3.0734 & 0.7419 \\
Gemma 4B ($\varepsilon=0.5$) & 0.237 & 77.3683 & \cellcolor{gold!30}0.2861 & 1.1648 & 0.4640 \\
\midrule
\textsc{AIM} ($\varepsilon=2.0$) & 0.655 & 374.4686 & 0.6352 & \cellcolor{bronze!30}0.4260 & 0.5204 \\
\textsc{Direct} (Across, $\varepsilon=2.0$) & 0.763 & 696.5508 & 0.7384 & 1.5437 & 0.6014 \\
\textsc{Direct} (Indep, $\varepsilon=2.0$) & 0.701 & 754.9578 & 0.8059 & 1.8809 & 0.5953 \\
Gemma 1B ($\varepsilon=2.0$) & 0.307 & 334.4557 & 0.8073 & 3.0628 & 1.0329 \\
Gemma 4B ($\varepsilon=2.0$) & 0.612 & 57.9726 & 0.4150 & 0.9494 & \cellcolor{silver!30}0.4363 \\
\midrule
\textsc{AIM} ($\varepsilon=4.0$) & \cellcolor{gold!30}0.804 & 321.6139 & 0.6262 & \cellcolor{gold!30}0.2532 & 0.4994 \\
\textsc{Direct} (Across, $\varepsilon=4.0$) & \cellcolor{silver!30}0.802 & 740.3704 & 0.7305 & 0.9890 & 0.5525 \\
\textsc{Direct} (Indep, $\varepsilon=4.0$) & 0.728 & 944.7643 & 0.7624 & 1.3724 & 0.5306 \\
Gemma 1B ($\varepsilon=4.0$) & 0.268 & 261.9771 & 0.6806 & 2.6619 & 0.7937 \\
Gemma 4B ($\varepsilon=4.0$) & 0.660 & 122.8850 & \cellcolor{silver!30}0.3880 & 1.0726 & 0.4670 \\
\midrule
\textsc{AIM} ($\varepsilon=10.0$) & 0.705 & 319.4292 & 0.5367 & \cellcolor{silver!30}0.2757 & 0.4791 \\
\textsc{Direct} (Across, $\varepsilon=10.0$) & 0.704 & 409.1063 & 0.6560 & 0.6496 & 0.5331 \\
\textsc{Direct} (Indep, $\varepsilon=10.0$) & 0.710 & 653.2374 & 0.6748 & 0.9022 & \cellcolor{bronze!30}0.4558 \\
Gemma 1B ($\varepsilon=10.0$) & 0.359 & 184.9042 & 0.5402 & 2.5169 & 0.7138 \\
Gemma 4B ($\varepsilon=10.0$) & 0.690 & \cellcolor{bronze!30}50.9899 & \cellcolor{bronze!30}0.3966 & 0.8250 & \cellcolor{gold!30}0.4217 \\
\midrule
Gemini 2.5 FL (0-Shot) ($\varepsilon=\infty$) & 0.270 & 318770.3493 & 0.8326 & 53.5924 & 1.1601 \\
Gemini 2.5 FL (1-Shot) ($\varepsilon=\infty$) & 0.262 & \cellcolor{silver!30}41.8997 & 0.7800 & 0.9118 & 0.7483 \\
Gemini 2.5 FL (5-Shot) ($\varepsilon=\infty$) & 0.255 & 96.9679 & 0.6787 & 1.0024 & 0.7310 \\
Gemini 2.5 FL (10-Shot) ($\varepsilon=\infty$) & 0.268 & \cellcolor{gold!30}9.0193 & 0.5564 & 0.5596 & 0.6595 \\
\midrule
\textsc{Real} (10k) ($\varepsilon=\infty$) & 0.856 & 0.3254 & 0.1778 & 0.0325 & 0.0096 \\
\textsc{Real} (1k) ($\varepsilon=\infty$) & 0.892 & 0.8304 & 0.1618 & 0.0517 & 0.0316 \\
\textsc{Real} (100) ($\varepsilon=\infty$) & 0.899 & 1.5302 & 0.2069 & 0.1468 & 0.0882 \\
\bottomrule
\end{tabular}
}
\end{table*}

\begin{table*}[!ht]
\centering
\caption{\textbf{Feature-wise Breakdown on Synthetic Data.} Detailed analysis reveals specific modes of failure for the baselines. For instance, marginal baselines have spiky errors across features; some columns were or were not selected for measurement, and thus the privacy budget is unevenly spread (e.g., \textsc{Direct} (Indep) reaches $0.996$ at $\epsilon=0.5$, but is then clearly measured at $\epsilon=10$), while \methodname~(Gemma 4B) has reasonable results across the board.}
\label{tab:detailed_synthetic}
\resizebox{\linewidth}{!}{%
\begin{tabular}{@{}l ccccc | ccccc@{}}
\toprule
& \multicolumn{5}{c}{\textbf{Marginal Distribution (Wasserstein $\downarrow$)}} & \multicolumn{5}{c}{\textbf{State Transition (Frobenius $\downarrow$)}} \\
\textbf{Method} & \textbf{Glom} & \textbf{Crir} & \textbf{Crip} & \textbf{Zols} & \textbf{Zolp} & \textbf{Glom} & \textbf{Crir} & \textbf{Crip} & \textbf{Zols} & \textbf{Zolp} \\
\midrule
\textsc{AIM} ($\varepsilon=0.5$) & 1.325 & 1.213 & 1.401 & 1.443 & 5.009 & 0.725 & 0.662 & 0.895 & 0.725 & 0.318 \\
\textsc{Direct} (Across, $\varepsilon=0.5$) & 0.691 & 0.736 & 0.863 & 0.927 & 10.747 & 0.640 & 0.611 & 0.827 & 0.638 & 0.901 \\
\textsc{Direct} (Indep, $\varepsilon=0.5$) & 0.742 & 0.736 & 0.915 & 1.042 & 11.932 & 0.632 & 0.596 & 0.820 & 0.665 & 0.996 \\
Gemma 4B ($\varepsilon=0.5$) & 0.848 & 0.494 & 2.593 & 0.592 & 1.298 & 0.586 & 0.531 & \cellcolor{gold!30}0.426 & 0.325 & 0.451 \\
\midrule
\textsc{AIM} ($\varepsilon=2.0$) & 0.365 & 0.424 & 0.374 & 0.271 & \cellcolor{bronze!30}0.696 & 0.600 & 0.534 & 0.776 & 0.531 & \cellcolor{bronze!30}0.161 \\
\textsc{Direct} (Across, $\varepsilon=2.0$) & 0.295 & 0.399 & 0.549 & 0.416 & 6.060 & 0.612 & 0.541 & 0.790 & 0.566 & 0.499 \\
\textsc{Direct} (Indep, $\varepsilon=2.0$) & 0.484 & 0.472 & 0.580 & 0.553 & 7.316 & 0.583 & 0.505 & 0.740 & 0.533 & 0.616 \\
Gemma 1B ($\varepsilon=2.0$) & 3.597 & 4.102 & 3.812 & 1.373 & 2.429 & 1.301 & 1.438 & 1.064 & 0.557 & 0.804 \\
Gemma 4B ($\varepsilon=2.0$) & 0.913 & 0.762 & 0.996 & 0.570 & 1.505 & \cellcolor{silver!30}0.507 & 0.447 & \cellcolor{silver!30}0.498 & \cellcolor{gold!30}0.259 & 0.471 \\
\midrule
\textsc{AIM} ($\varepsilon=4.0$) & 0.261 & \cellcolor{gold!30}0.298 & \cellcolor{gold!30}0.179 & \cellcolor{gold!30}0.147 & \cellcolor{gold!30}0.382 & 0.612 & 0.530 & 0.698 & 0.504 & \cellcolor{gold!30}0.153 \\
\textsc{Direct} (Across, $\varepsilon=4.0$) & \cellcolor{bronze!30}0.253 & 0.354 & 0.392 & \cellcolor{bronze!30}0.236 & 3.709 & 0.607 & 0.533 & 0.779 & 0.546 & 0.297 \\
\textsc{Direct} (Indep, $\varepsilon=4.0$) & 0.378 & 0.417 & 0.463 & 0.350 & 5.254 & 0.548 & 0.481 & 0.700 & 0.481 & 0.443 \\
Gemma 1B ($\varepsilon=4.0$) & 2.899 & 3.629 & 3.238 & 0.846 & 2.698 & 0.568 & 0.878 & 1.213 & 0.577 & 0.732 \\
Gemma 4B ($\varepsilon=4.0$) & 0.668 & 1.176 & 0.729 & 0.855 & 1.935 & \cellcolor{bronze!30}0.511 & \cellcolor{silver!30}0.399 & 0.588 & \cellcolor{bronze!30}0.287 & 0.550 \\
\midrule
\textsc{AIM} ($\varepsilon=10.0$) & \cellcolor{silver!30}0.247 & 0.327 & \cellcolor{silver!30}0.198 & \cellcolor{silver!30}0.175 & \cellcolor{silver!30}0.432 & 0.609 & 0.533 & 0.614 & 0.483 & \cellcolor{silver!30}0.157 \\
\textsc{Direct} (Across, $\varepsilon=10.0$) & \cellcolor{gold!30}0.212 & \cellcolor{bronze!30}0.326 & \cellcolor{bronze!30}0.261 & 0.246 & 2.203 & 0.603 & 0.532 & 0.765 & 0.543 & 0.221 \\
\textsc{Direct} (Indep, $\varepsilon=10.0$) & 0.375 & 0.415 & 0.370 & 0.255 & 3.096 & \cellcolor{gold!30}0.495 & \cellcolor{bronze!30}0.441 & 0.628 & 0.428 & 0.286 \\
Gemma 1B ($\varepsilon=10.0$) & 2.763 & 3.119 & 3.166 & 0.610 & 2.926 & 0.623 & 0.724 & 0.941 & 0.566 & 0.715 \\
Gemma 4B ($\varepsilon=10.0$) & 0.576 & 0.961 & 0.660 & 0.686 & 1.242 & 0.528 & \cellcolor{gold!30}0.388 & \cellcolor{bronze!30}0.518 & \cellcolor{silver!30}0.284 & 0.390 \\
\midrule
Gemini 2.5 FL (0-Shot) ($\varepsilon=\infty$) & 186.225 & 56.715 & 16.987 & 5.976 & 2.059 & 1.178 & 0.996 & 1.142 & 1.130 & 1.354 \\
Gemini 2.5 FL (1-Shot) ($\varepsilon=\infty$) & 0.257 & 0.494 & 0.314 & 0.639 & 2.855 & 0.518 & 0.652 & 0.596 & 0.914 & 1.061 \\
Gemini 2.5 FL (5-Shot) ($\varepsilon=\infty$) & 1.150 & 0.736 & 0.445 & 0.649 & 2.033 & 0.942 & 0.663 & 0.598 & 0.576 & 0.875 \\
Gemini 2.5 FL (10-Shot) ($\varepsilon=\infty$) & 0.526 & \cellcolor{silver!30}0.303 & 0.444 & 0.284 & 1.241 & 0.641 & 0.488 & 0.532 & 0.725 & 0.912 \\
\midrule
\textsc{Real} (10k) ($\varepsilon=\infty$) & 0.028 & 0.017 & 0.044 & 0.025 & 0.049 & 0.010 & 0.007 & 0.011 & 0.012 & 0.008 \\
\textsc{Real} (1k) ($\varepsilon=\infty$) & 0.063 & 0.064 & 0.061 & 0.020 & 0.050 & 0.031 & 0.035 & 0.031 & 0.031 & 0.030 \\
\textsc{Real} (100) ($\varepsilon=\infty$) & 0.062 & 0.087 & 0.221 & 0.148 & 0.216 & 0.094 & 0.089 & 0.081 & 0.080 & 0.096 \\
\bottomrule
\end{tabular}
}
\end{table*}

\begin{table*}[!ht]
\centering
\caption{\textbf{Summary Metrics for NYC 311 Service Requests.} Unlike the tabular MIMIC dataset, the NYC 311 data contains complex, heterogeneous schema elements (geospatial coordinates, categorical descriptors). Here, the disparity between private fine-tuning and non-private prompting is most pronounced: Gemini 3.5 (even with 10-shot) has extremely poor MAUVE scores. \methodname~(Gemma 4B) demonstrates strong scaling behavior, improving MAUVE from $0.375$ to $0.634$ as the budget increases from $\varepsilon=0.5$ to $\varepsilon=10.0$. The high Classifier AUC scores across all methods ($\approx 0.95$) show how real 311 requests are hard to perfectly simulate.}
\label{tab:summary_nyc_311}
% \resizebox{\linewidth}{!}{%
\begin{tabular}{@{}l c c c c@{}}
\toprule
\textbf{Method} & \textbf{MAUVE} & \textbf{Classifier AUC} & \textbf{Temporal Dist.} & \textbf{Transition Div.} \\
 & (Gecko $\uparrow$) & (Ideal 0.5) & (Wass. $\downarrow$) & (Frobenius $\downarrow$) \\
\midrule
Gemma 1B ($\varepsilon=0.5$) & 0.410 & 0.982 & 1.922 & 0.357 \\
Gemma 4B ($\varepsilon=0.5$) & 0.375 & 0.952 & 1.327 & \cellcolor{gold!30}0.230 \\
\midrule
Gemma 1B ($\varepsilon=2.0$) & 0.353 & 0.967 & 1.487 & \cellcolor{silver!30}0.239 \\
Gemma 4B ($\varepsilon=2.0$) & \cellcolor{silver!30}0.573 & \cellcolor{gold!30}0.940 & \cellcolor{silver!30}0.823 & 0.408 \\
\midrule
Gemma 1B ($\varepsilon=4.0$) & 0.500 & \cellcolor{bronze!30}0.951 & \cellcolor{gold!30}0.566 & \cellcolor{bronze!30}0.240 \\
Gemma 4B ($\varepsilon=4.0$) & \cellcolor{bronze!30}0.566 & 0.972 & 1.020 & 0.295 \\
\midrule
Gemma 1B ($\varepsilon=10.0$) & 0.464 & 0.971 & \cellcolor{bronze!30}0.870 & 0.279 \\
Gemma 4B ($\varepsilon=10.0$) & \cellcolor{gold!30}0.634 & \cellcolor{silver!30}0.948 & 0.870 & 0.405 \\
\midrule
Gemini 3.5 (0-Shot) ($\epsilon=\infty$) & 0.040 & 1.000 & 5.468 & - \\
Gemini 3.5 (1-Shot) ($\epsilon=\infty$) & 0.062 & 1.000 & 4.275 & 1.927 \\
Gemini 3.5 (5-Shot) ($\epsilon=\infty$) & 0.056 & 1.000 & 2.389 & - \\
Gemini 3.5 (10-Shot) ($\epsilon=\infty$) & 0.066 & 1.000 & 2.131 & - \\
\midrule
\textsc{Real} (10k) ($\epsilon=\infty$) & 0.876 & 0.470 & 0.312 & 0.165 \\
\textsc{Real} (1k) ($\epsilon=\infty$) & 0.863 & 0.529 & 0.343 & 0.087 \\
\textsc{Real} (100) ($\epsilon=\infty$) & 0.850 & 0.699 & 0.881 & 0.379 \\
\bottomrule
\end{tabular}
% }
\end{table*}

\clearpage
\subsection{Detailed Analysis: NYC 311 (Gemma 4B, $\varepsilon=10.0$)}
\label{app:results_nyc_gemma_4b_eps_10}

\begin{figure}[h!]
    \centering
    \begin{subfigure}[b]{0.32\linewidth}
        \centering
        \includegraphics[width=\linewidth]{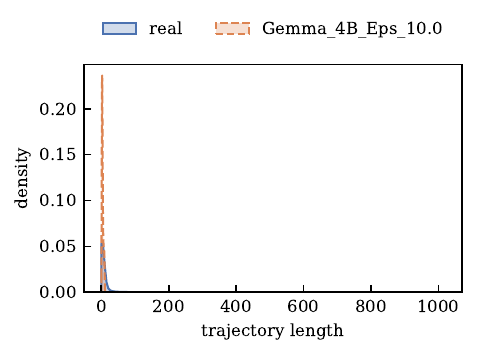}
        \caption{\textbf{Trajectory Lengths:} Distribution of total service requests per user history.}
    \end{subfigure}\hfill
    \begin{subfigure}[b]{0.32\linewidth}
        \centering
        \includegraphics[width=\linewidth]{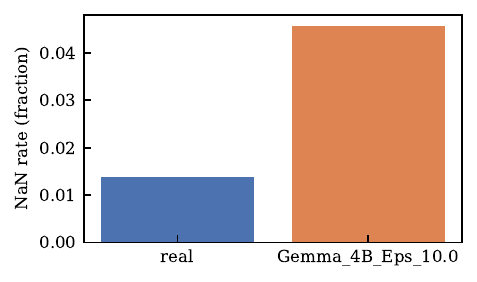}
        \caption{\textbf{Missingness:} Comparison of `NaN` value proportions across schema columns.}
    \end{subfigure}\hfill
    \begin{subfigure}[b]{0.32\linewidth}
        \centering
        \includegraphics[width=\linewidth]{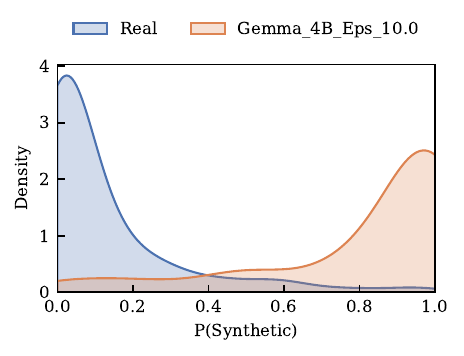}
        \caption{\textbf{Indistinguishability:} Classifier separation score (AUC) between Real and Synthetic embeddings.}
    \end{subfigure}

    \vspace{1.5em}

    \begin{subfigure}[b]{0.75\linewidth}
        \centering
        \includegraphics[width=\linewidth]{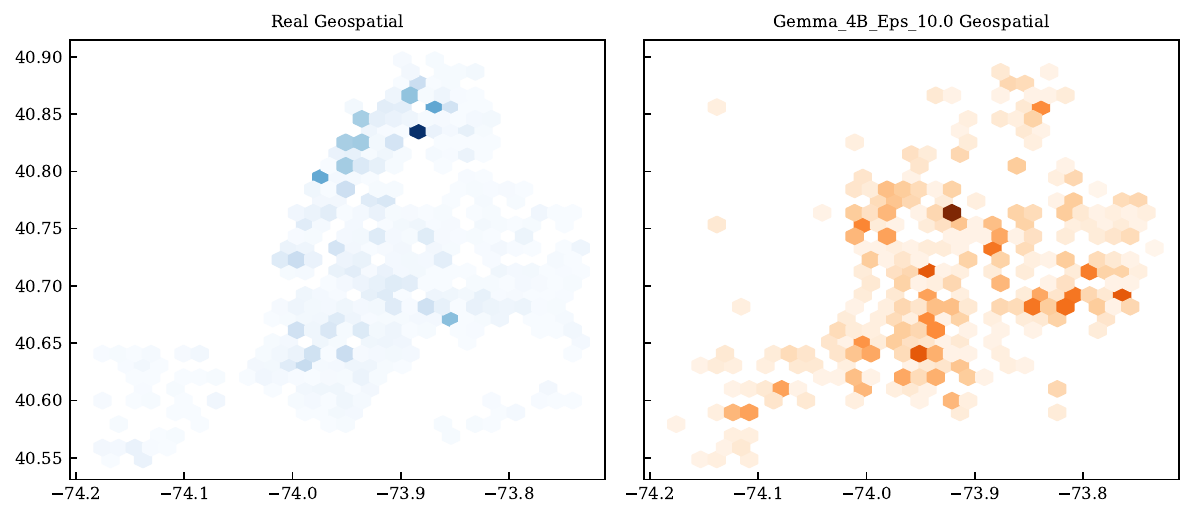}
        \caption{\textbf{Geospatial Fidelity:} Hexbin density maps comparing the spatial distribution of 311 calls. The synthetic data (right) accurately reconstructs the complex topology of the NYC boroughs visible in the real data (left), preserving high-density clusters in Manhattan and Brooklyn.}
    \end{subfigure}

    \vspace{1.5em}

    \begin{subfigure}[b]{0.48\linewidth}
        \centering
        \includegraphics[width=\linewidth]{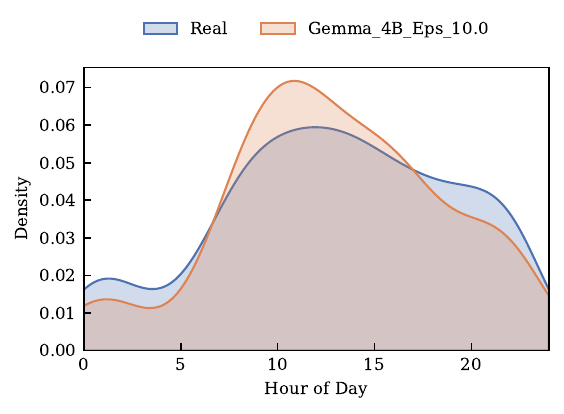}
        \caption{\textbf{Temporal Patterns:} Distribution of service requests by Hour of Day, showing preservation of diurnal rhythms.}
    \end{subfigure}\hfill
    \begin{subfigure}[b]{0.48\linewidth}
        \centering
        \includegraphics[width=\linewidth]{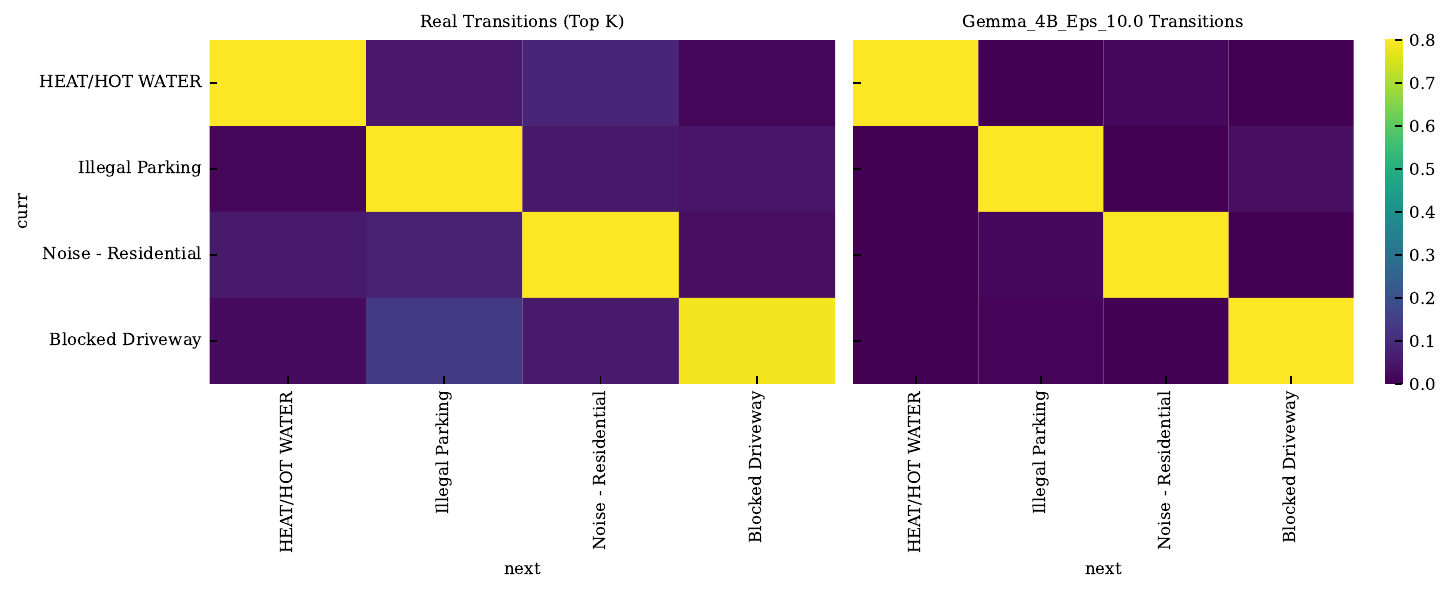}
        \caption{\textbf{Complaint Dynamics:} State transition matrix visualizing the conditional probability of the next complaint type given the previous one.}
    \end{subfigure}

    \caption{\textbf{Qualitative Evaluation on NYC 311 Data (Gemma 4B, $\varepsilon=10.0$).} The privately fine-tuned model successfully captures multi-modal distributions, including long-tail trajectory lengths (a), complex geospatial densities (d), and daily temporal seasonality (e).}
    \label{fig:nyc_311_qualitative}
\end{figure}

\clearpage
\subsection{Privacy as Regularization}
\label{app:regularization}
Counter-intuitively, our evaluation of the Synthetic dataset reveals that a larger privacy budget (higher $\varepsilon$) does not necessarily guarantee better distributional matching when measured by e.g. TDCR. The \methodname~(Gemma 4B) model achieves its lowest (best) Jensen-Shannon Distance (JSD) for the Table-wise Distance to Closest Record (TDCR) metric at the strictest privacy setting of $\varepsilon=0.5$, scoring a JSD of 0.2861. As the privacy budget is relaxed to $\varepsilon=2.0$ and $\varepsilon=10.0$, the error surprisingly increases to 0.4150 and 0.3966, respectively. This non-monotonic behavior is confusing.

\begin{figure}[h!]
    \centering
    \begin{subfigure}{0.35\textwidth}
        \centering
        \includegraphics[width=\linewidth]{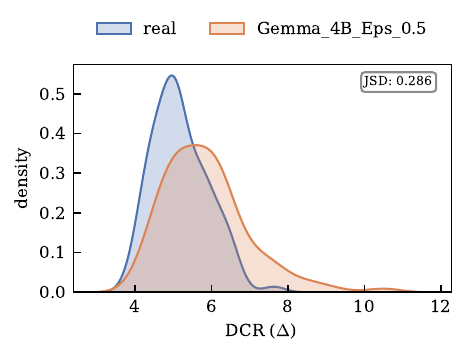}
        \caption{$\varepsilon=0.5$}
        \label{fig:dist_0.5}
    \end{subfigure}
    \begin{subfigure}{0.35\textwidth}
        \centering
        \includegraphics[width=\linewidth]{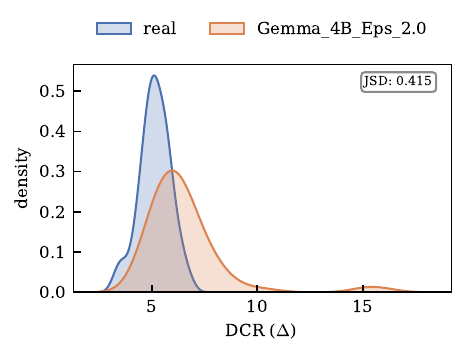}
        \caption{$\varepsilon=2.0$}
        \label{fig:dist_2.0}
    \end{subfigure}
    
    \vspace{1em}
    
    \begin{subfigure}{0.35\textwidth}
        \centering
        \includegraphics[width=\linewidth]{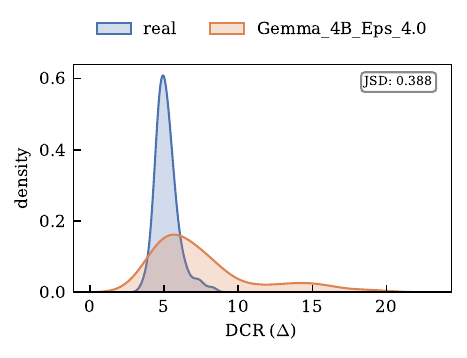}
        \caption{$\varepsilon=4.0$}
        \label{fig:dist_4.0}
    \end{subfigure}
    \begin{subfigure}{0.35\textwidth}
        \centering
        \includegraphics[width=\linewidth]{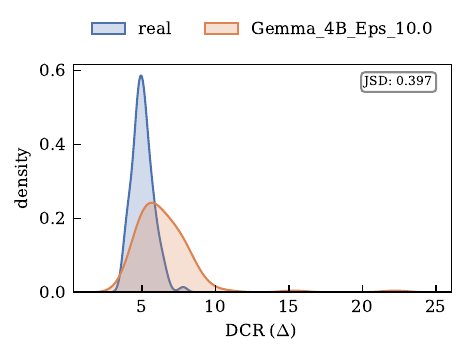}
        \caption{$\varepsilon=10.0$}
        \label{fig:dist_10.0}
    \end{subfigure}
    
    \caption{\textbf{Evolution of TDCR Density with Privacy Budget.} At low $\varepsilon$ (0.5), the distribution of distances is tight and well-aligned with the baseline. As $\varepsilon$ increases (decreasing noise), the model begins to overfit to outliers, creating a heavy tail of generated tables that are topologically distant from the typical data manifold.}
    \label{fig:regularization_trends}
\end{figure}

We hypothesize that in the high-$\varepsilon$ (low noise) regime, the model retains sufficient capacity to memorize or overfit to outliers and out-of-distribution examples present in the training set. This results in the generation of a ``long tail'' of synthetic records that drift significantly from the typical data manifold. 

This phenomenon is visually evident in Figure~\ref{fig:regularization_trends}. At $\varepsilon=0.5$ (Figure~\ref{fig:dist_0.5}), the distribution of nearest-neighbor distances is tight; the x-axis extends only to approximately 12, and the curve closely overlaps the reference baseline. However, as we relax the budget to $\varepsilon=2.0$ (Figure~\ref{fig:dist_2.0}), a tail begins to emerge, extending the support to 15. By $\varepsilon=10.0$ (Figure~\ref{fig:dist_10.0}), the distribution exhibits a long tail.

\clearpage

\begin{figure}[htbp]
    \centering
    \includegraphics[width=0.6\linewidth]{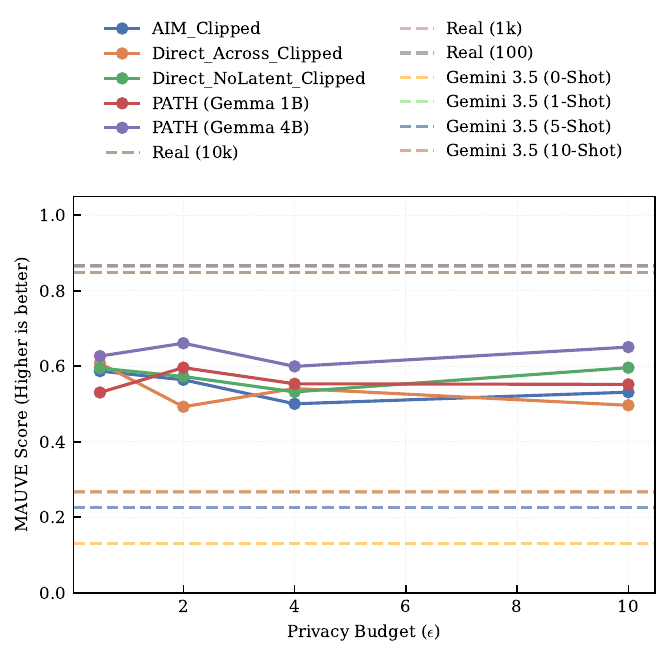}
    \caption{\textbf{Distributional fidelity on MIMIC-IV.} The Gemma 4B model (top blue line) consistently achieves the highest MAUVE scores across all privacy budgets, peaking at $0.690$ for $\varepsilon=10$. Notably, private fine-tuning significantly outperforms the non-private Gemini 2.5 FL baselines (dashed horizontal lines), which fail to exceed $0.30$ even with 10-shot prompting. This illustrates that parameter-efficient fine-tuning is far more effective than in-context learning for capturing the complex manifold of clinical trajectories.}
    \label{fig:mimic_mauve}
\end{figure}

\begin{figure}[htbp]
    \centering
    \includegraphics[width=0.6\linewidth]{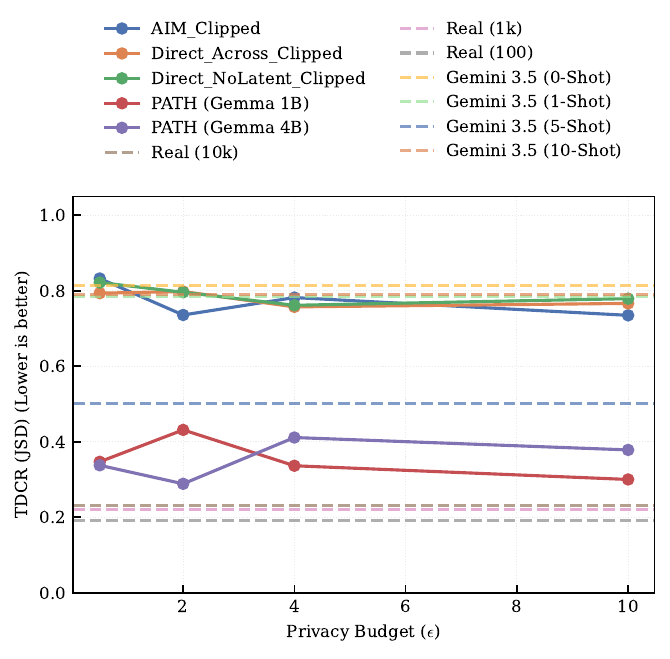}
    \caption{\textbf{Distance to closest record analysis.} A lower TDCR score indicates generated tables are structurally similar to real user trajectories without being identical. The marginal-based baselines (AIM and Direct) exhibit high error rates ($>14.0$) that degrade sharply as $\varepsilon$ decreases, reflecting the failure of ``flattening'' strategies to model user-level coherence. In contrast, Gemma 4B maintains a low, stable distance ($\approx 3.9$ to $7.1$) even at $\varepsilon=0.5$, demonstrating superior robustness to noise.}
    \label{fig:mimic_tdcr}
\end{figure}

\begin{figure}[htbp]
    \centering
    \includegraphics[width=0.6\linewidth]{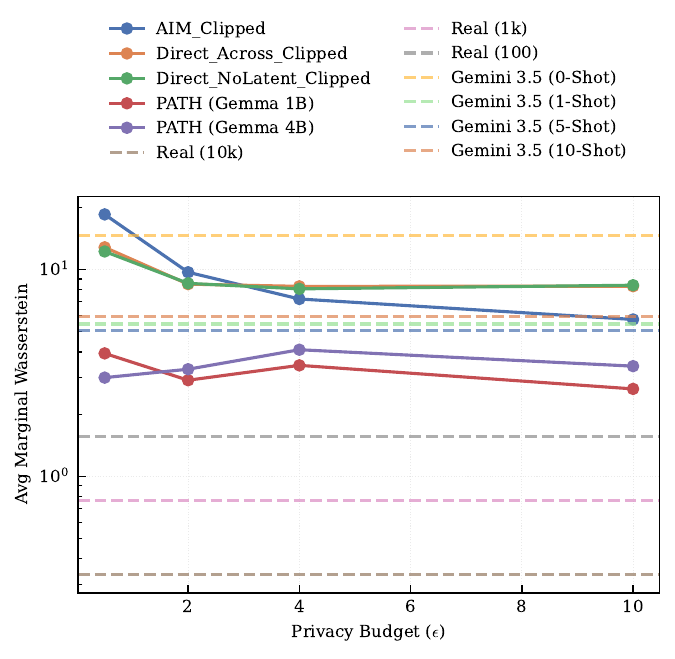}
    \caption{\textbf{Preservation of univariate marginals.} Despite being an autoregressive model, Gemma 4B (bottom lines) surprisingly outperforms marginal-based mechanisms on column-wise fidelity, maintaining Wasserstein distances below $4.0$ across all budgets. The flattened marginal baselines struggle here, likely due to the high dimensionality introduced by padding all trajectories to a fixed length, which dilutes the privacy budget available for individual column measurements.}
    \label{fig:mimic_marginal}
\end{figure}

\begin{figure}[htbp]
    \centering
    \includegraphics[width=0.6\linewidth]{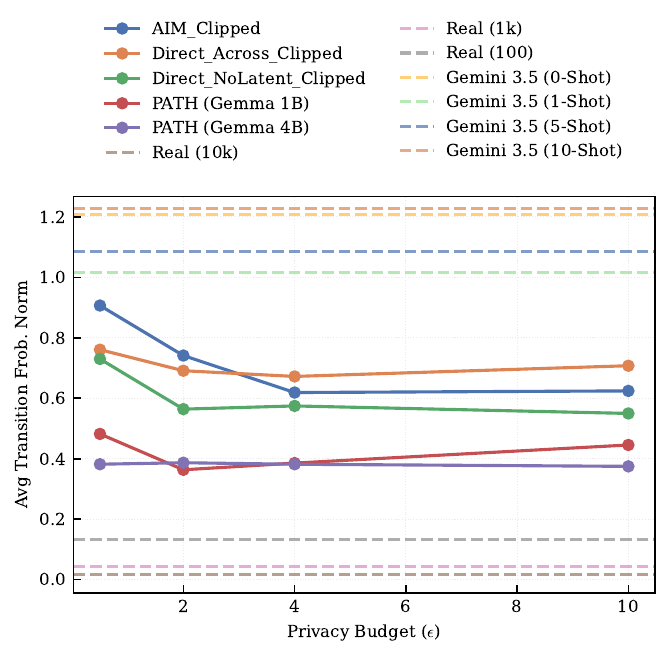}
    \caption{\textbf{Temporal dynamics preservation.} This metric measures the error in the conditional probability of state transitions (e.g., $P(\text{vital}_t \mid \text{vital}_{t-1})$). The Gemma family (1B and 4B) consistently yields lower error rates ($\approx 0.38 - 0.48$) compared to AIM and Direct mechanisms ($>0.60$). This confirms that the LLM's autoregressive objective naturally aligns with the task of preserving sequential dependencies, whereas marginal methods struggle to capture these time-variant correlations.}
    \label{fig:mimic_state_transition}
\end{figure}

\begin{figure}[htbp]
    \centering
    \includegraphics[width=0.6\linewidth]{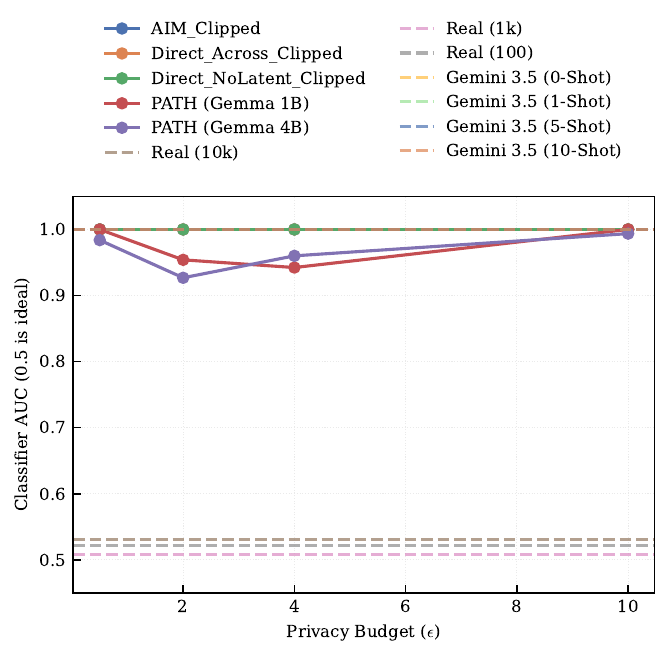}
    \caption{\textbf{Discriminability of synthetic data.} While most methods yield an AUC near $1.0$ (indicating the synthetic data is distinguishable from real data), the Gemma 1B model shows a slight improvement in indistinguishability at lower privacy budgets ($\text{AUC} \approx 0.95$ at $\varepsilon=2.0$). However, the generally high AUC scores across all methods suggest that while the generated trajectories are statistically useful, they remain distinct enough for a classifier to separate from the original training data.}
    \label{fig:mimic_classifier}
\end{figure}

\begin{figure}[htbp]
    \centering
    \includegraphics[width=0.6\linewidth]{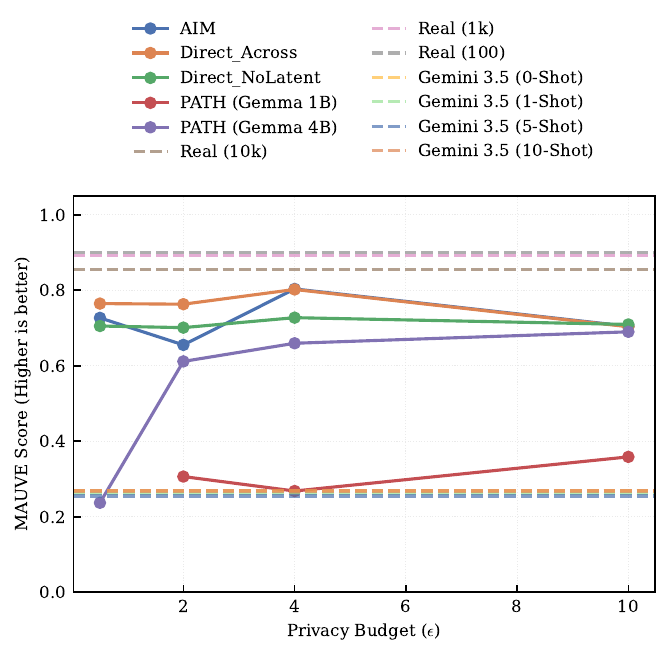}
    \caption{\textbf{Distributional fidelity on Synthetic data.} In this controlled setting, the marginal-based AIM algorithm (top lines) outperforms the LLM-based approaches, achieving MAUVE scores between $0.72$ and $0.79$. This reversal suggests that for simpler, strictly structured data where the manifold is well-defined by independent components, marginal measurements can be more efficient than learning the distribution via next-token prediction.}
    \label{fig:synthetic_mauve}
\end{figure}

\begin{figure}[htbp]
    \centering
    \includegraphics[width=0.6\linewidth]{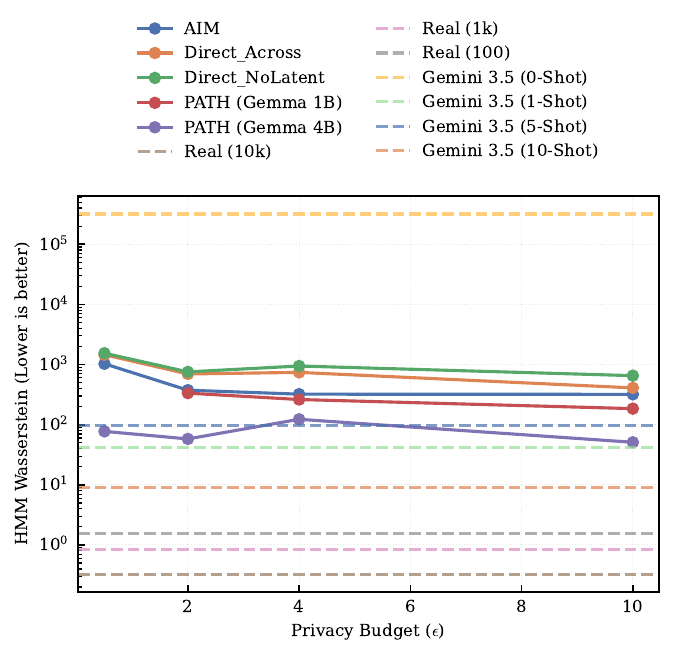}
    \caption{\textbf{Likelihood under true latent process.} While AIM captures the static manifold well (see MAUVE), it fails to capture the temporal logic. Gemma 4B achieves significantly lower Wasserstein distances to the true HMM likelihoods ($\approx 52 - 122$) compared to AIM ($\approx 300 - 1000$). This highlights the critical trade-off: marginal methods preserve independent statistics, but autoregressive models are required to preserve the validity of the sequence under the generative process.}
    \label{fig:synthetic_hmm}
\end{figure}

\begin{figure}[htbp]
    \centering
    \includegraphics[width=0.6\linewidth]{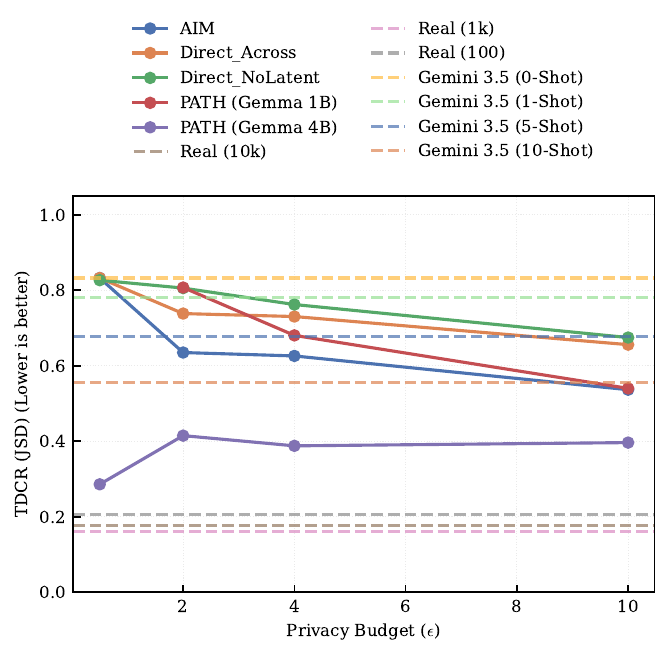}
    \caption{\textbf{Structural fidelity on Synthetic data.} Gemma 4B outperforms all baselines with the lowest TDCR scores, particularly at $\varepsilon=10$ ($0.686$), indicating it generates trajectories that sit closest to the support of the real data. Notably, the non-private Gemini 2.5 FL 0-shot baseline exhibits a massive error ($72.5$), further proving that without specific fine-tuning or demonstrations, foundation models cannot zero-shot complex tabular distributions.}
    \label{fig:synthetic_tdcr}
\end{figure}

\begin{figure}[htbp]
    \centering
    \includegraphics[width=0.6\linewidth]{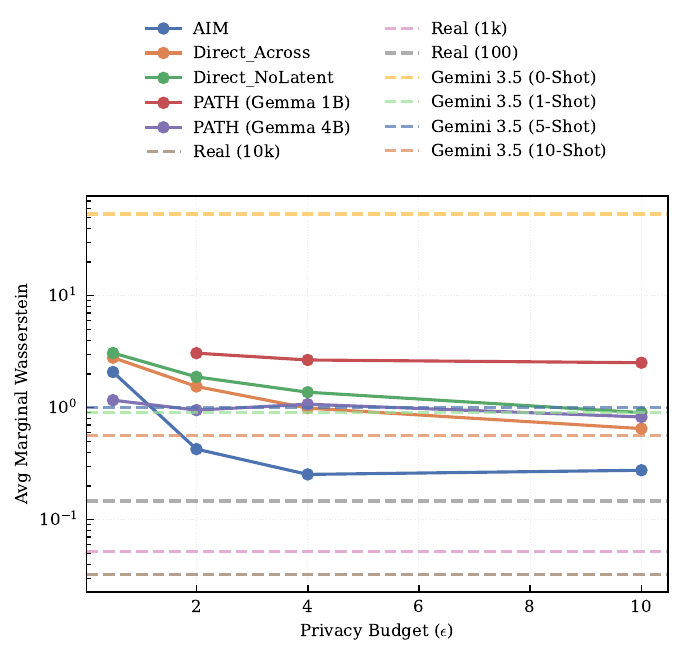}
    \caption{\textbf{Column-wise accuracy.} Consistent with the MAUVE results for this dataset, AIM achieves the lowest Marginal Distance ($\approx 0.23 - 0.26$ for $\varepsilon \geq 4$), beating Gemma 4B ($\approx 0.8 - 1.1$). This confirms that when the data generation process is simple and columns have strong independent signals, marginal-based mechanisms utilize the privacy budget more efficiently for univariate statistics.}
    \label{fig:synthetic_marginal}
\end{figure}

\begin{figure}[htbp]
    \centering
    \includegraphics[width=0.6\linewidth]{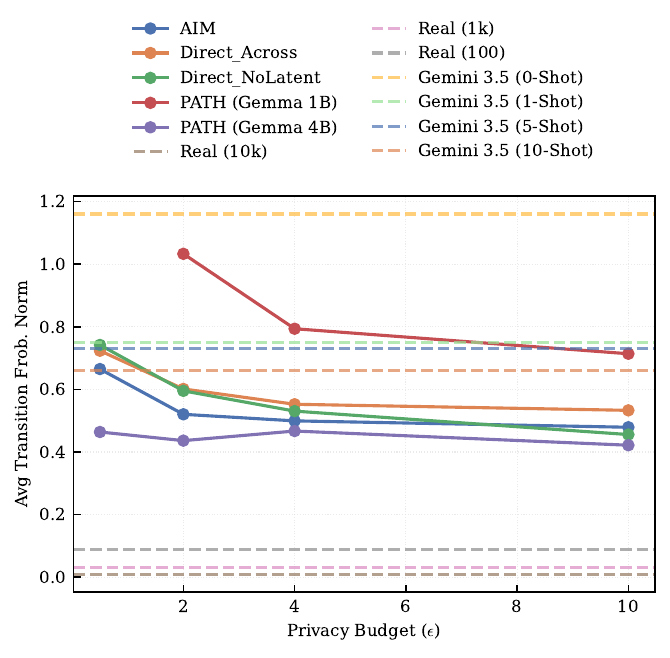}
    \caption{\textbf{Transition matrix recovery.} The Gemma 4B model achieves the lowest error in recovering the transition matrix ($\approx 0.42 - 0.47$), slightly outperforming AIM. This reinforces the finding that even when marginal methods excel at static distributions (as seen in the Marginal Distance plot), they are less capable of capturing the derivative, time-dependent structure of the data.}
    \label{fig:synthetic_state_transition}
\end{figure}

\begin{figure}[htbp]
    \centering
    \includegraphics[width=0.6\linewidth]{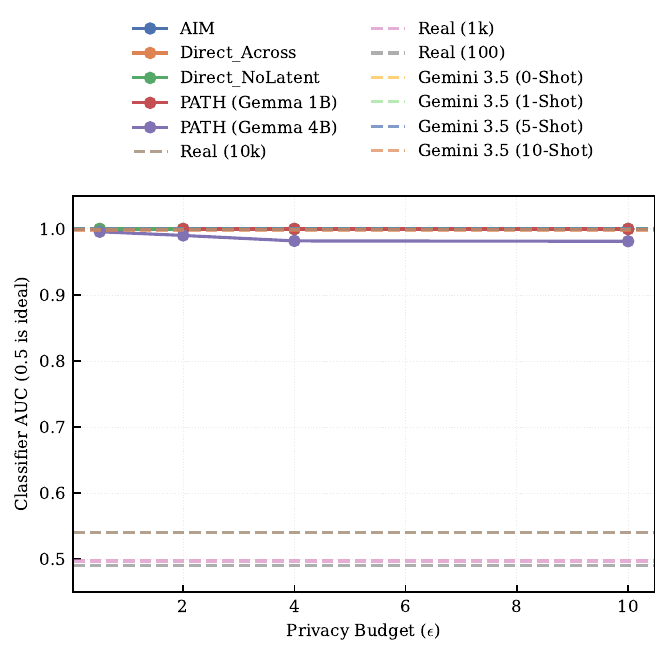}
    \caption{\textbf{Discriminability of Synthetic data.}}
    \label{fig:synthetic_classifier}
\end{figure}

\begin{figure}[htbp]
    \centering
    \includegraphics[width=0.6\linewidth]{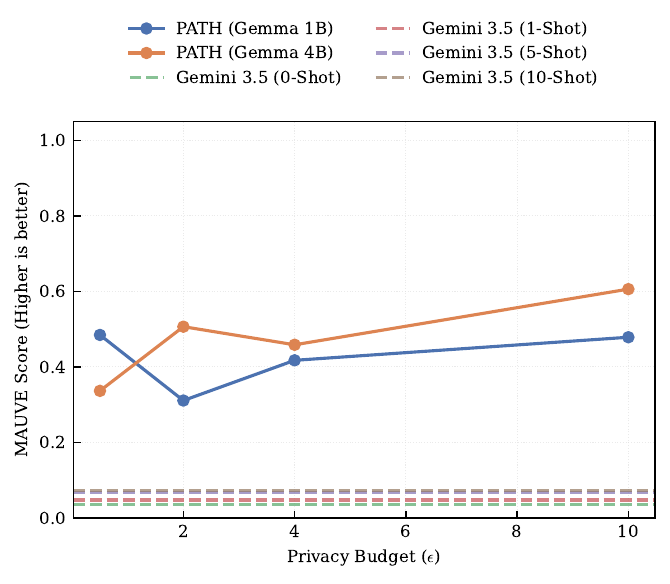}
    \caption{\textbf{Fidelity on spatiotemporal data.} The private Gemma models demonstrate a clear advantage on this complex, heterogeneous dataset. Gemma 4B ($\varepsilon=2.0$) achieves a MAUVE score of $0.546$, vastly outperforming the non-private Gemini 2.5 FL few-shot baselines, which plateau below $0.10$. This indicates that for rich, semantic data like service requests, private fine-tuning is essential for learning the underlying distribution, whereas prompting is insufficient.}
    \label{fig:nyc_mauve}
\end{figure}

\begin{figure}[htbp]
    \centering
    \includegraphics[width=0.6\linewidth]{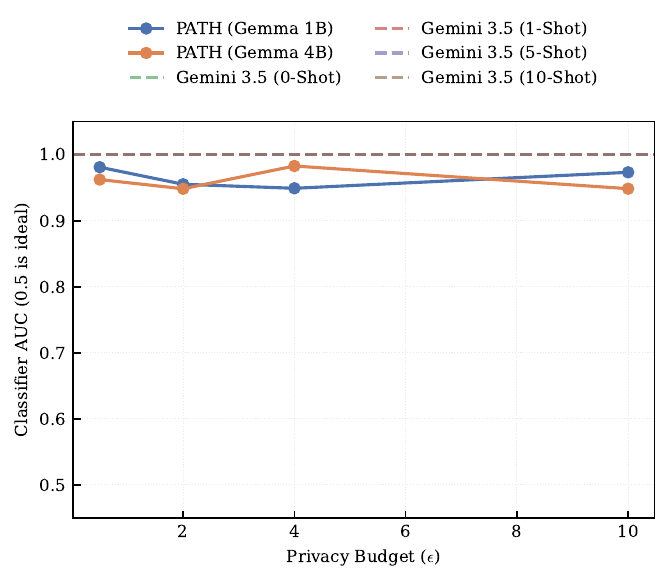}
    \caption{\textbf{Discriminability on NYC 311.} Similar to the MIMIC results, the synthetic data remains distinguishable from real data ($\text{AUC} > 0.95$). However, the Gemma 1B model shows slightly better resistance to classification than the 4B model at lower epsilons ($0.94$ at $\varepsilon=4.0$), suggesting a slight regularization effect in the smaller model that prevents it from overfitting to identifiable artifacts.}
    \label{fig:nyc_classifier}
\end{figure}

\clearpage
\section{LLM Zero/Few Shot Prompts}\label{app:llm}

\begin{specialistbox}{Zero-Shot Prompt Structure}
\ttfamily
\noindent Consider the following domain over patient records (the column names):

\vspace{0.5em}
\noindent <column\_name\_1>, <column\_name\_2>, \dots, <column\_name\_n>

\vspace{0.5em}
\noindent Can you now generate a table for another patient, in the same domain as this data? Please give me a csv surrounded by ````csv' and ````' tags.
\end{specialistbox}

\begin{specialistbox}{Few-Shot Prompt Structure (for $k$ examples)}
\ttfamily
\noindent Consider the following set of vital sign time series records, each a single table representing an individual patient (denoted by `subject\_id'), where each row is a chart time and set of vitalsign features (with some missingness):

\vspace{1em}
\noindent Example Patient 1: \\
``<CSV String for Patient 1>''

\vspace{0.5em}
\noindent Example Patient 2: \\
``<CSV String for Patient 2>''

\vspace{0.5em}
\noindent \dots

\vspace{0.5em}
\noindent Example Patient $k$: \\
``<CSV String for Patient $k$>''

\vspace{1em}
\noindent Can you now generate a table for another patient, in the same domain as this data? Please give me a csv surrounded by ````csv' and ````' tags.
\end{specialistbox}

\clearpage
\section{Gradient Signal-to-Noise Analysis in High Dimensions}
\label{app:snr_analysis}

During the fine-tuning of our LLMs with DP-SGD, we observed an interesting phenomenon. Specifically, the $L_2$-norm of the noise vector $\mathbf{z}$ added to the gradients typically exceeded the norm of the true clipped gradient signal $\bar{\mathbf{g}}$ by several orders of magnitude. Under a standard signal-to-noise Ratio (SNR) interpretation based on raw magnitudes ($||\bar{\mathbf{g}}||_2 / ||\mathbf{z}||_2$), successful learning appeared unlikely.

However, recent work by \citet{li2022does} suggests that $L_2$-based SNR is a fundamentally misleading metric for understanding learning dynamics in high-dimensional spaces (in our case, $d \approx 210$ million trainable parameters for e.g. the 4 billion parameter Gemma model with LoRA). In this section, we present an empirical analysis performed during our experimentation to validate that learning is driven by the \textit{alignment} of the noisy update with a low-dimensional signal subspace, rather than the raw magnitude of the noise.

\textbf{Orthogonality and Alignment} Let’s build some intuition. Let $\bar{\mathbf{g}}$ denote the aggregated, clipped gradient computed over a batch, and let $\mathbf{z} \sim \mathcal{N}(0, \sigma^2 \mathbf{I})$ be the noise vector. The update vector is $\tilde{\mathbf{g}} = \bar{\mathbf{g}} + \mathbf{z}$. 

In high-dimensional spaces, isotropic Gaussian noise is approximately orthogonal to any fixed low-dimensional subspace with high probability \cite{livan2018introduction}. Assuming the true learning signal lies within a low-dimensional subspace $\mathcal{S} \subset \mathbb{R}^d$, the component of the noisy update that drives learning is its projection onto $\bar{\mathbf{g}}$. The alignment can be characterized by the dot product, as,
\begin{align}
     \tilde{\mathbf{g}} \cdot \bar{\mathbf{g}} = (\bar{\mathbf{g}} + \mathbf{z}) \cdot \bar{\mathbf{g}} = \|\bar{\mathbf{g}}\|_2^2 + \mathbf{z} \cdot \bar{\mathbf{g}}~.
\end{align}
Since $\mathbb{E}[\mathbf{z}] = \mathbf{0}$, we have $\mathbb{E}[\mathbf{z} \cdot \bar{\mathbf{g}}] = 0$. Consequently, in expectation, there is always a positive component of the update in the direction of the signal, provided the optimizer can accumulate this weak signal over many steps without the noise variance causing divergence.

\textbf{Some Empirical Metrics} To verify this hypothesis, we consider two  metrics during training: expected cosine similarity and the participation ratio.

We measure the alignment between the clean signal $\bar{\mathbf{g}}$ and the noisy gradient $\tilde{\mathbf{g}}$. While the direct cosine similarity is non-linear, we approximate its expectation as,
\begin{align}
    \mathbb{E}\left[\text{sim}(\bar{\mathbf{g}}, \tilde{\mathbf{g}})\right] \approx \frac{\|\bar{\mathbf{g}}\|_2}{\mathbb{E}[\|\tilde{\mathbf{g}}\|_2]}~.
\end{align}
Empirically, we found that while this value is small (on the order of $10^{-4}$ for $\sigma=0.4$), it remains consistently positive, confirming that the gradient updates maintain a stable, albeit weak, directional alignment with the loss landscape.

To validate the assumption that the gradient signal is effectively low-rank (sparse in some basis), we compute the participation ratio (PR) \citep{livan2018introduction} of the gradient updates, or,
\begin{align}
    PR(\bar{\mathbf{g}}) = \frac{\left( \sum_{j=1}^{d} \bar{g}_j^2 \right)^2}{\sum_{j=1}^{d} \bar{g}_j^4} = \frac{\|\bar{\mathbf{g}}\|_2^4}{\|\bar{\mathbf{g}}\|_4^4}~.
\end{align}
The PR provides a continuous measure of the effective number of active dimensions, ranging from $1$ (maximal localization) to $d$ (maximal delocalization). 

\textbf{Observations} We compared training runs with a standard noise multiplier ($\sigma=0.4$, corresponding to $\varepsilon \approx 10$) against ablations with high noise
($\sigma=10.0$).

\begin{figure}[h]
    \centering
    \begin{subfigure}[b]{0.48\textwidth}
        \includegraphics[width=\linewidth]{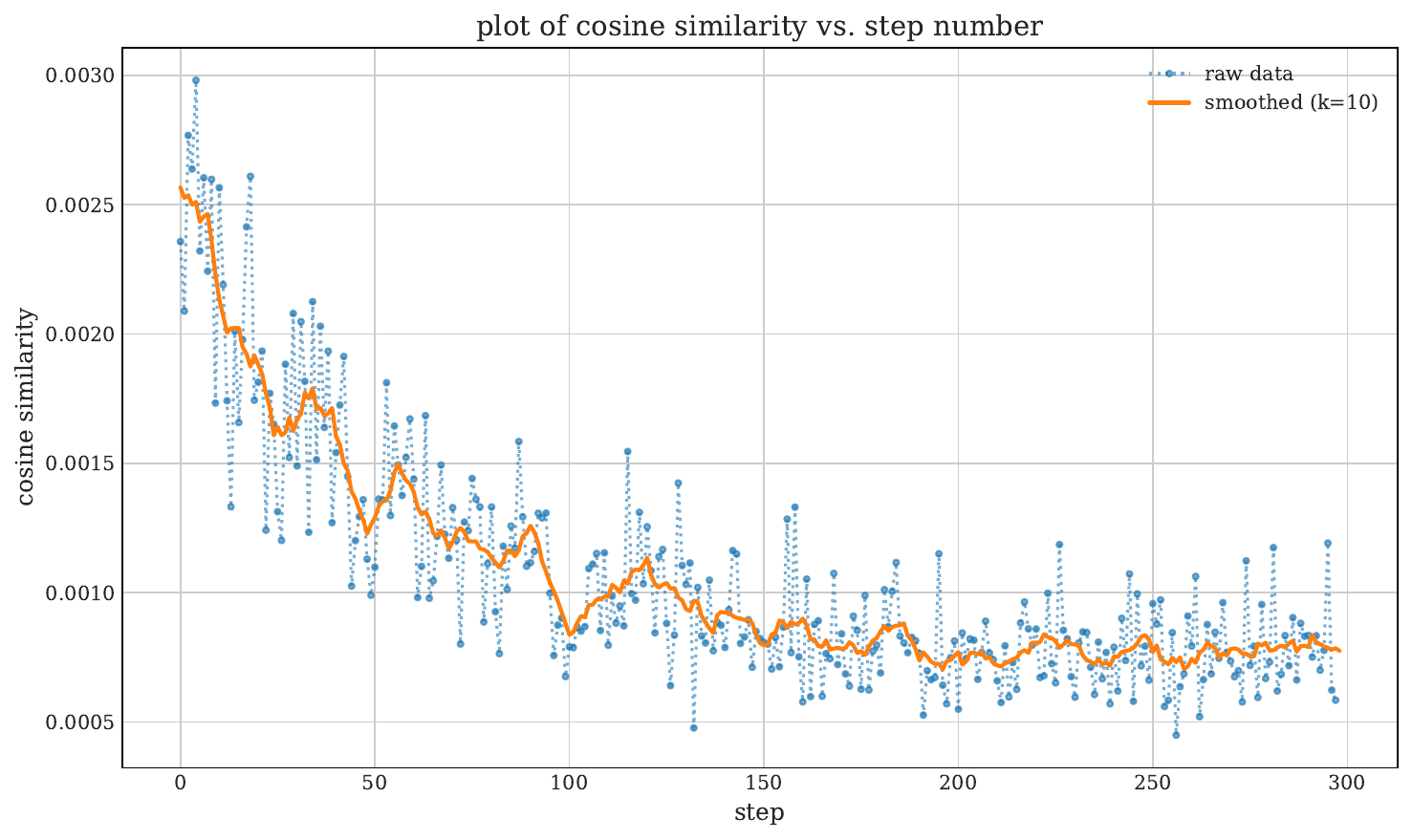}
        \caption{Cosine Similarity ($\sigma=0.4$)}
        \label{fig:cos_sim_low}
    \end{subfigure}
    \hfill
    \begin{subfigure}[b]{0.48\textwidth}
        \includegraphics[width=\linewidth]{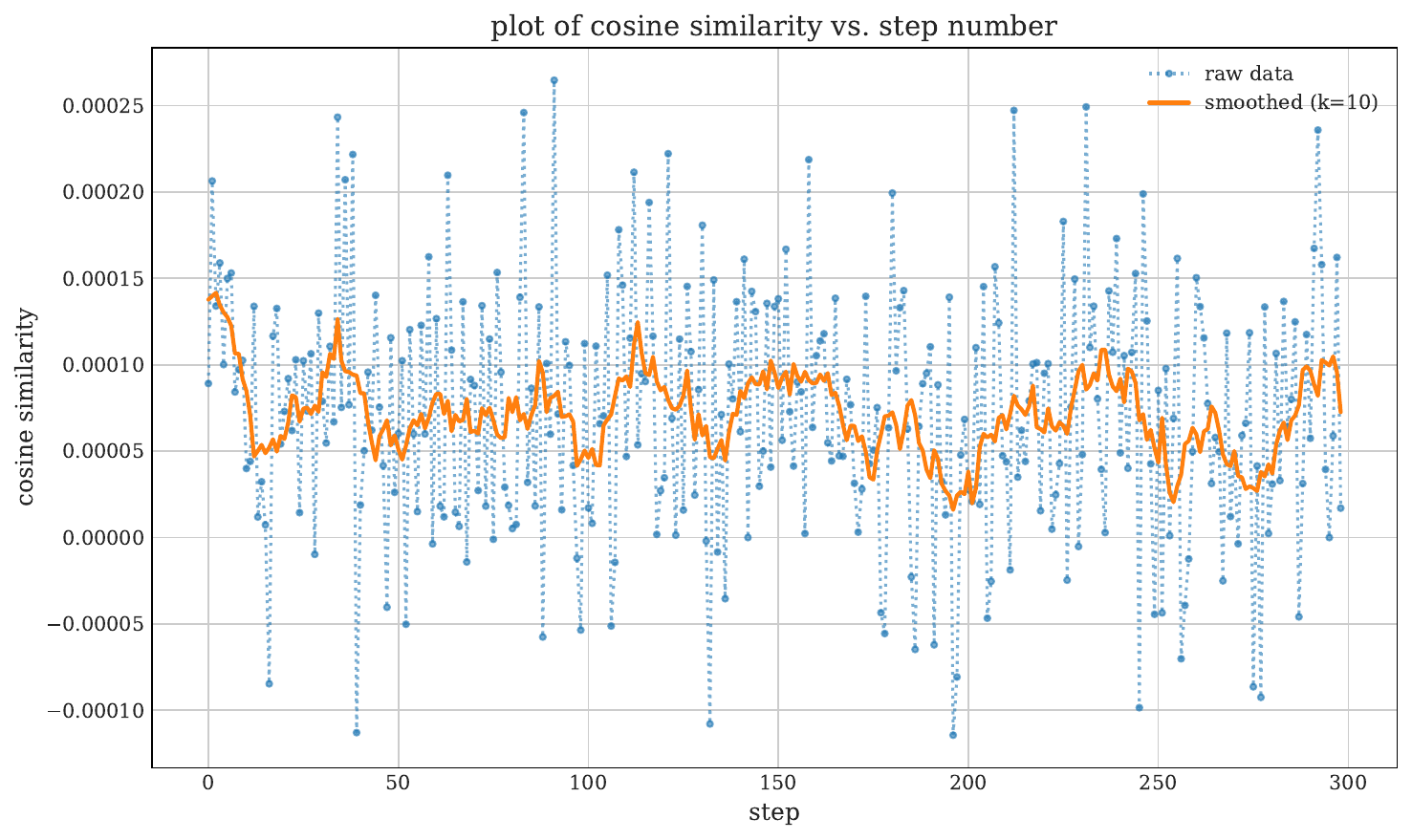}
        \caption{Cosine Similarity ($\sigma=10.0$)}
        \label{fig:cos_sim_high}
    \end{subfigure}
    \caption{Cosine similarity between the signal and the noisy gradient. At operational noise levels (Left), a consistent positive alignment is observed. At critical noise levels (Right), alignment degrades significantly.}
    \label{fig:cosine_similarity}
\end{figure}

As shown in Figure~\ref{fig:participation_ratio}, we observed $PR(\bar{\mathbf{g}}) \ll d$ (typically orders of magnitude lower than the parameter count $2.1 \times 10^8$), providing strong evidence that the gradients occupy a low-dimensional manifold. This sparsity allows DP-SGD to navigate the optimization landscape despite the overwhelming ambient noise.

\begin{figure}[h]
    \centering
    \includegraphics[width=0.6\linewidth]{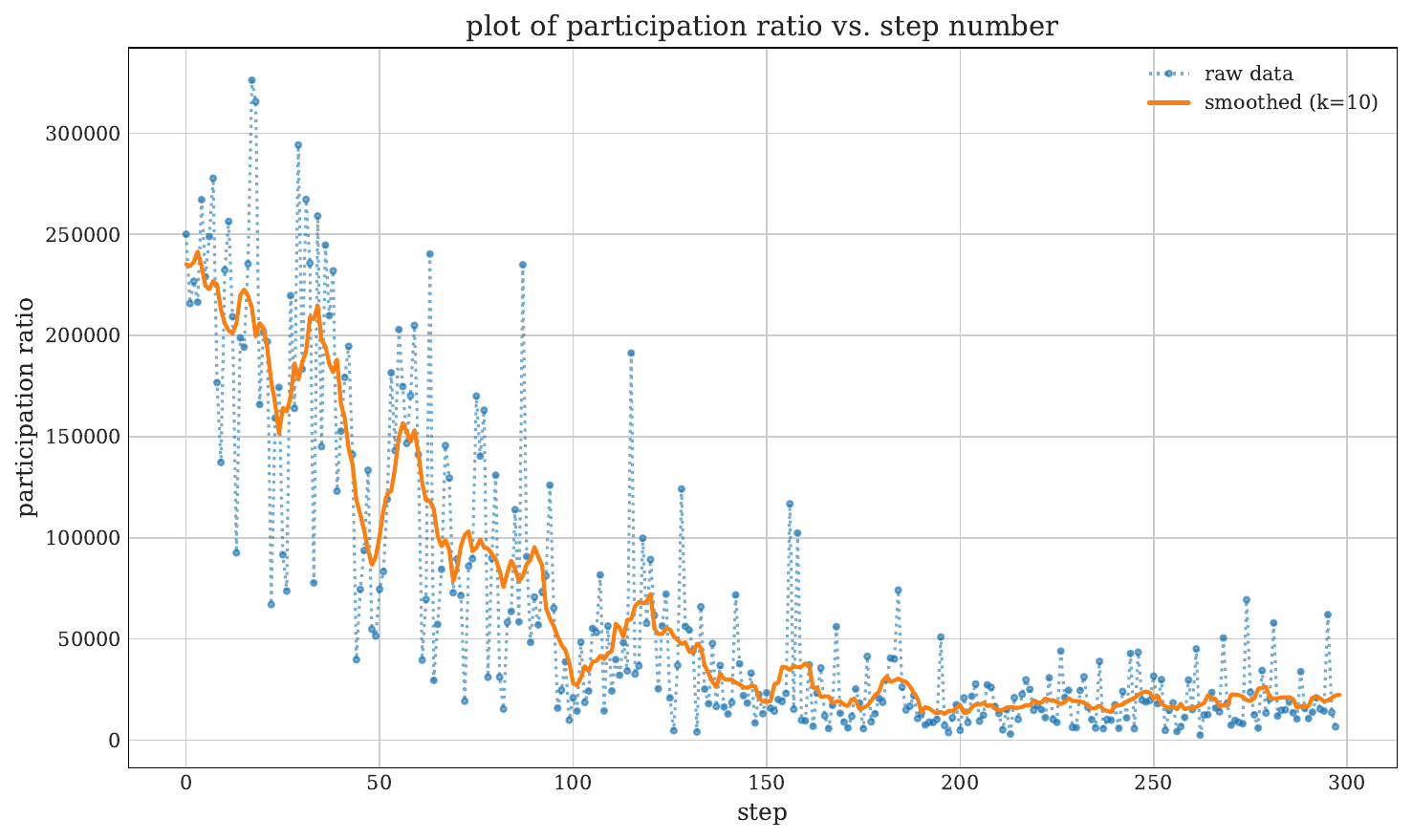}
    \caption{The Participation Ratio (PR) of the gradients (log scale) during training. The PR remains orders of magnitude below the ambient dimension $d$, indicating the effective gradients are confined to a low-dimensional subspace.}
    \label{fig:participation_ratio}
\end{figure}

Conversely, at $\sigma=10.0$, we reach a critical threshold $\sigma_{\text{crit}}$ where the utility bound becomes vacuous, and the alignment (Figure~\ref{fig:cos_sim_high}) becomes dominated by stochastic variance, preventing effective learning. These findings validate the theoretical bounds proposed by \citet{song2021evading,li2022does} and justify our choice of hyperparameters.

\end{document}